\newtheorem{ex}{Example}
\newtheorem{thm}{Theorem}
\newtheorem*{thm*}{Theorem}
\newtheorem{lemma}{Lemma}
\newtheorem{cor}{Corollary}
\newcommand*\dif{\mathop{}\!\mathrm{d}}
\DeclareMathOperator{\conv}{conv}
\DeclareMathOperator{\JS}{\mathrm{JSD}}
\DeclareMathOperator{\KL}{\mathrm{KL}}
\DeclareMathOperator{\OT}{\mathrm{OT}}
\xpatchcmd{\proof}{\topsep6\p@\@plus6\p@\relax}{}{}{}
\title{A Convex Duality Framework for GANs}
\author{
  Farzan Farnia$^*$ \\
  \texttt{farnia@stanford.edu}  \And David Tse\thanks{Department of Electrical Engineering, Stanford University.}\\
  \texttt{dntse@stanford.edu} \\
}
\begin{document}

\maketitle

\begin{abstract}
Generative adversarial network (GAN) is a minimax game between a generator mimicking the true  model and a discriminator distinguishing the samples produced by the generator from the real training samples. Given an unconstrained discriminator able to approximate any function, this game reduces to finding the generative model minimizing a divergence measure, e.g. the Jensen-Shannon (JS) divergence, to the data distribution. However, in practice the discriminator is constrained to be in a smaller class $\mathcal{F}$ such as neural nets. Then, a natural question is how the divergence minimization interpretation changes as we constrain $\mathcal{F}$. In this work, we address this question by developing a convex duality framework for analyzing GANs. For a convex set $\mathcal{F}$, this duality framework interprets the original GAN formulation as finding the generative model with minimum JS-divergence to the distributions penalized to match the moments of the data distribution, with the moments specified by the discriminators in $\mathcal{F}$. We show that this interpretation more generally holds for f-GAN and Wasserstein GAN. As a byproduct, we apply the duality framework to a hybrid of f-divergence and Wasserstein distance. Unlike the f-divergence, we prove that the proposed hybrid divergence changes continuously with the generative model, which suggests regularizing the discriminator's Lipschitz constant in f-GAN and vanilla GAN. We numerically evaluate the power of the suggested regularization schemes for improving GAN's training performance.

\end{abstract}

\section{Introduction}
Learning a probability model from data samples is a fundamental task in unsupervised learning.  
The recently developed generative adversarial network (GAN) \cite{goodfellow2014generative} leverages the power of deep neural networks to successfully address this task across various domains \cite{goodfellow2016nips}. 
In contrast to traditional methods of parameter fitting like maximum likelihood estimation, the GAN approach views the problem as a {\em game}  between a {\it generator} $G$ whose goal is to generate fake samples that are close to the real data training samples and a {\it discriminator} $D$ whose goal is to distinguish between the real and fake samples. The generator creates the fake samples by mapping from random noise input.

The following minimax problem is the original GAN problem, also called \emph{vanilla GAN}, introduced in \cite{goodfellow2014generative}
\begin{equation} \label{GAN: Goodfellow}
\min_{G \in \mathcal{G}}\: \max_{D \in \mathcal{F}}\: \mathbb{E}\bigl[\log D({\mathbf{X}})\bigr] + \mathbb{E}\bigl[\log\bigl( 1 - D(G(\mathbf{Z}))\bigr)\bigr].
\end{equation}
Here $\mathbf{Z}$ denotes the generator's noise input, $\mathbf{X}$ represents the random vector for the real data distributed as $P_\mathbf{X}$, and $\mathcal{G}$ and $\mathcal{F}$ respectively represent the generator and discriminator function sets. Implementing this minimax game using deep neural network classes $\mathcal{G}$ and $\mathcal{F}$  has lead to the state-of-the-art generative model for many different tasks.

To shed light on the probabilistic meaning of vanilla GAN, \cite{goodfellow2014generative} shows that given an unconstrained discriminator $D$, i.e. if $\mathcal{F}$ contains all possible functions, the minimax problem \eqref{GAN: Goodfellow} will reduce to 
 \begin{equation} \label{GAN: Goodfellow_distributonal}
\min_{G \in \mathcal{G}}\, \JS(P_{\mathbf{X}}, P_{G(\mathbf{Z})}),
\end{equation}
where $ \JS$ denotes the Jensen-Shannon (JS) divergence. The optimization problem \eqref{GAN: Goodfellow_distributonal} can be interpreted as finding the closest generative model to the data distribution $P_{\mathbf{X}}$ (Figure \ref{Figure 1}a), where distance is measured using the JS-divergence. Various GAN formulations were later proposed by changing the divergence measure in \eqref{GAN: Goodfellow_distributonal}: f-GAN \cite{nowozin2016f} generalizes vanilla GAN by minimizing a general f-divergence
; Wasserstein GAN (WGAN) \cite{arjovsky2017wasserstein} considers the first-order Wasserstein (the earth-mover's) distance;  MMD-GAN \cite{dziugaite2015training,li2015generative,li2017mmd} considers the maximum mean discrepancy; Energy-based GAN \cite{zhao2016energy} minimizes the total variation distance as discussed in \cite{arjovsky2017wasserstein}; Quadratic GAN \cite{feizi2017understanding} finds the distribution minimizing the second-order Wasserstein distance.

\begin{figure}[t]
\vskip 0.2in
\begin{center}
\centerline{\includegraphics[width=0.7\columnwidth]{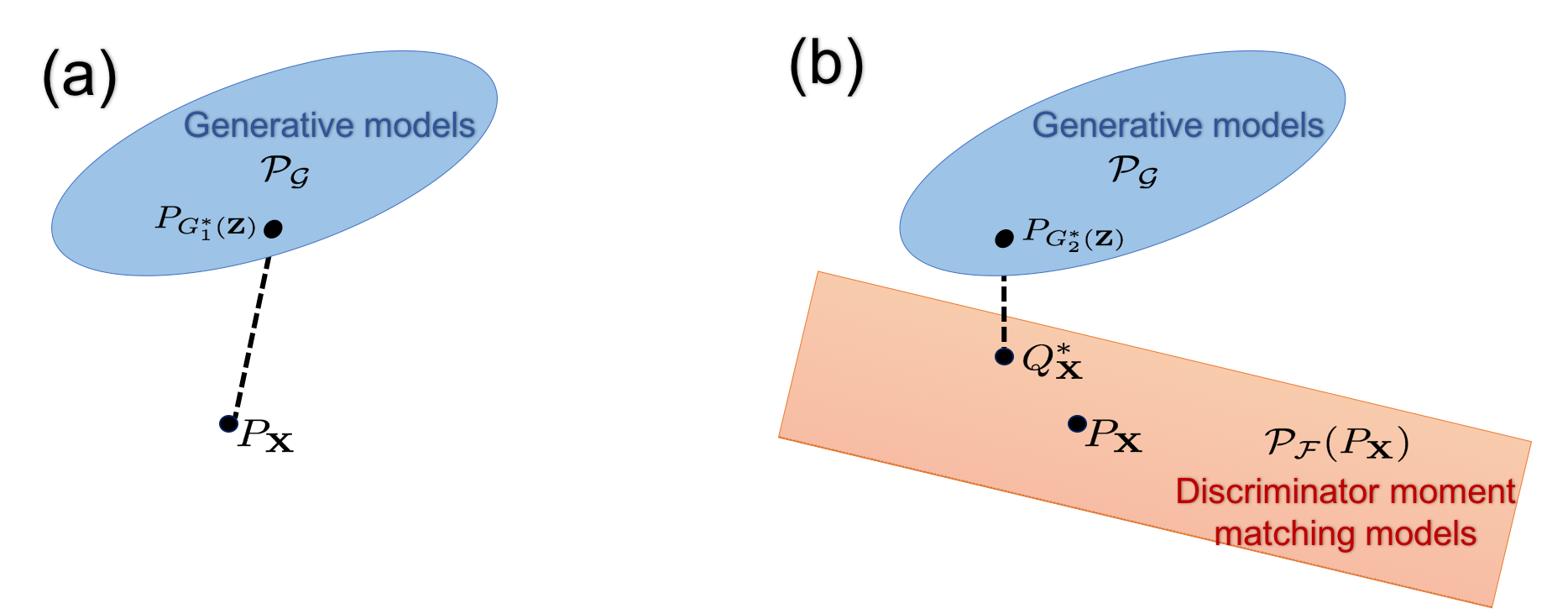}}
\caption{(a) Divergence minimization in \eqref{GAN: Goodfellow_distributonal} between $P_{\mathbf{X}}$ and generative models $\mathcal{P_G}$ for unconstrained $\mathcal{F}$, (b) Divergence minimization in \eqref{GAN: Goodfellow_distributonal_constarined} between generative models $\mathcal{P_G}$ and discriminator moment matching models $\mathcal{P_F}(P_{\mathbf{X}})$.}
\label{Figure 1}
\end{center}
\vskip -0.2in
\end{figure}

However, GANs trained in practice differ from this minimum divergence formulation, since their discriminator is not optimized over an unconstrained set and is constrained to smaller classes such as neural nets. As shown in \cite{arora2017generalization}, constraining the discriminator is in fact necessary to guarantee good generalization properties for GAN's learned model. Then, how does the minimum divergence interpretation \eqref{GAN: Goodfellow_distributonal} change as we constrain $\mathcal{F}$? A standard approach used in \cite{arora2017generalization,liu2017approximation} is to view the maximum discriminator objective as an $\mathcal{F}$-based distance between distributions. For unconstrained $\mathcal{F}$, the $\mathcal{F}$-based distance reduces to the original divergence measure, e.g. the JS-divergence in vanilla GAN. 

While $\mathcal{F}$-based distances have been shown to be useful for analyzing GAN's generalization properties \cite{arora2017generalization}, their connection to the original divergence measure remains unclear for a constrained $\mathcal{F}$. Then, what is the interpretation of GAN minimax game with a constrained discriminator? In this work, we address this question by interpreting the dual problem to the discriminator optimization. To analyze the dual problem, we develop a convex duality framework for general divergence minimization problems. We apply the duality framework to the f-divergence and optimal transport cost families, providing interpretation for f-GAN, including vanilla GAN minimizing JS-divergence, and Wasserstein GAN. 

Specifically, we generalize the interpretation for unconstrained $\mathcal{F}$ in \eqref{GAN: Goodfellow_distributonal} to any linear space discriminator set $\mathcal{F}$. For this class of discriminator sets, we interpret vanilla GAN as the following JS-divergence minimization between two sets of probability distributions, the set of generative models and the set of discriminator moment-matching distributions (Figure \ref{Figure 1}b),
 \begin{equation} \label{GAN: Goodfellow_distributonal_constarined}
\min_{G \in \mathcal{G}}\; \min_{Q \in \mathcal{P_F}(P_\mathbf{X})}\, \JS (P_{G(\mathbf{Z})},Q).
\end{equation} 
Here $ \mathcal{P_F}(P_\mathbf{X})$ contains any distribution $Q$ satisfying the moment matching constraint $\mathbb{E}_Q[D(\mathbf{X})]=\mathbb{E}_P[D(\mathbf{X})]$ for all discriminator $D$'s in $\mathcal{F}$. 
More generally, we show that a similar interpretation applies to GANs trained over any convex discriminator set $\mathcal{F}$. We further discuss the application of our duality framework to neural net discriminators with bounded Lipschitz constant. While a set of neural network functions is not necessarily convex, we prove any convex combination of Lipschitz-bounded neural nets can be approximated by uniformly combining boundedly-many neural nets. This result applied to our duality framework suggests considering a uniform mixture of multiple neural nets as the discriminator. 

As a byproduct, we apply the duality framework to the minimum sum hybrid of f-divergence and the first-order Wasserstein ($W_1$) distance, e.g. the following hybrid of JS-divergence and $W_1$ distance:
\begin{equation}\label{hybrid: JS, W1}
d_{\JS , W_1} (P_1,P_2) := \min_{Q}\; W_1(P_1,Q) + \JS(Q,P_2). 
\end{equation}
We prove that this hybrid divergence enjoys a continuous behavior in distribution $P_1$. Therefore, the hybrid divergence provides a remedy for the discontinuous behavior of the JS-divergence when optimizing the generator parameters in vanilla GAN. \cite{arjovsky2017wasserstein} observes this issue with the JS-divergence in vanilla GAN and proposes to instead minimize the continuously-changing $W_1$ distance in WGAN. However, as empirically demonstrated in \cite{miyato2018spectral} vanilla GAN with Lipschitz-bounded discriminator remains the state-of-the-art method for training deep generative models in several benchmark tasks.
Here, we leverage our duality framework to prove that the hybrid $d_{\JS , W_1} $, which possesses the same continuity property as in $W_1$ distance, is in fact the divergence measure minimized in vanilla GAN with $1$-Lipschitz discriminator. Our analysis hence provides an explanation for why regularizing the discriminator's Lipschitz constant via gradient penalty \cite{gulrajani2017improved} or spectral normalization \cite{miyato2018spectral} improves the training performance in vanilla GAN. We then extend our focus to the hybrid of f-divergence and the second-order Wasserstein ($W_2$) distance. In this case, we derive the f-GAN (e.g. vanilla GAN) problem with its discriminator being adversarially trained using Wasserstein risk minimization \cite{sinha2018certifiable}. We numerically evaluate the power of these families of hybrid divergences in training vanilla GAN.

\section{Divergence Measures}\label{Section: Divergence}
\subsection{Jensen-Shannon divergence}
The Jensen-Shannon divergence is defined in terms of the KL-divergence (denoted by $\KL$) as
\begin{equation*}
\JS(P, Q) := \frac{1}{2} \KL(P {\Vert}  M) + \frac{1}{2} \KL(Q {\Vert} M)
\end{equation*}
where $M=\frac{P+Q}{2}$ is the mid-distribution between $P$ and $Q$. Unlike the KL-divergence, the JS-divergence is symmetric $\JS(P, Q) =\JS(Q, P) $ and bounded $0\le \JS(P, Q)\le 1$. 
\subsection{f-divergence}
The f-divergence family \cite{csiszar2004information} generalizes the KL and JS divergence measures. Given a convex lower semicontinuous function $f$ with $f(1)=0$, the f-divergence $d_f$ is defined as
\begin{equation}\label{f-div def}
d_f(P,Q) := \mathbb{E}_{P}\bigl[ f\bigl(\frac{q(\mathbf{X})}{p(\mathbf{X})}\bigr) \bigr] = \int p(\mathbf{x}) 
f \bigl(\frac{q(\mathbf{x})}{p(\mathbf{x})}\bigr) \dif \mathbf{x}.
\end{equation}
Here $\mathbb{E}_{P}$ denotes expectation over distribution $P$ and $p,\, q$ denote the density functions for distributions $P,\, Q$, respectively. 
The KL-divergence and the JS-divergence are members of the f-divergence family, corresponding to respectively $f_{\KL}(t)=t\log t$ and $f_{\JS}(t)=\frac{t}{2}\log t- \frac{t+1}{2}\log \frac{t+1}{2}$. 
\subsection{Optimal transport cost, Wasserstein distance}
The optimal transport cost for cost function $c(\mathbf{x},\mathbf{x}')$, which we denote by ${\OT}_c$, 
is defined as 
\begin{equation}\label{Wasserstein-c defnition}
{\OT}_c(P,Q) := \inf_{
M \in \Pi(P,Q) }
\mathbb{E} \bigl[ c(\mathbf{X},\mathbf{X}') \bigr],
\end{equation}
where $\Pi(P,Q)$ contains all couplings with marginals $P,Q$. The Kantorovich duality \cite{villani2008optimal} shows that for a non-negative lower semi-continuous cost $c$,
\begin{equation}\label{Kantorovich duality}
{\OT}_c(P,Q) = \max_{D\, \text{c-concave}} \mathbb{E}_P\bigl[D(\mathbf{X})\bigr] -  \mathbb{E}_Q\bigl[ D^c(\mathbf{X})\bigr],
\end{equation}
where we use $D^c$ to denote $D$'s c-transform defined as
$D^c(\mathbf{x}) := \sup_{\mathbf{x}'}\:  D(\mathbf{x}') - c(\mathbf{x},\mathbf{x}')$ and call $D$ c-concave if $D$ is the c-transform of a valid function. Considering the norm-based cost $c_q(\mathbf{x},\mathbf{x}')= \Vert \mathbf{x} - \mathbf{x}' \Vert^q$ with $q\ge 1$, the $q$th order Wasserstein distance $W_q$ is defined based on the $c_q$ optimal transport cost as
\begin{equation}\label{def: Wasserstein distance}
W_q(P,Q) := OT_{c_q}(P,Q)^{1/q} =  \inf_{
M \in \Pi(P,Q) }
\mathbb{E} \bigl[\, \Vert\mathbf{X} - \mathbf{X}'\Vert^q \,\bigr]^{1/q}.
\end{equation}
An important special case is the first-order Wasserstein ($W_1$) distance  corresponding to the difference norm cost $c_1(\mathbf{x},\mathbf{x}') = \Vert\mathbf{x}-\mathbf{x}'\Vert$. Given cost function $c_1$, a function $D$ is c-concave if and only if $D$ is $1$-Lipschitz, and the c-transform $D^{c} = D$ for any $1$-Lipschitz $D$. Therefore, the Kantorovich duality \eqref{Kantorovich duality} implies that
\begin{equation}
W_1(P,Q) = \max_{D\, \text{1-Lipschitz}} \mathbb{E}_P\bigl[D(\mathbf{X})\bigr] -  \mathbb{E}_Q\bigl[ D(\mathbf{X})\bigr].
\end{equation}
Another notable special case is the second-order Wasserstein ($W_2$) distance, corresponding to the difference norm-squared cost $c_2(\mathbf{x},\mathbf{x}') =\Vert \mathbf{x} -\mathbf{x}'\Vert^2$.


\section{Divergence minimization in GANs: a convex duality framework}
In this section, we develop a convex duality framework for analyzing  divergence minimization problems conditioned to moment-matching constraints. Our framework generalizes the duality framework developed in \cite{altun2006unifying} for the f-divergence family.

For a general divergence measure $d(P,Q)$, we define $d$'s conjugate over distribution $P$, which we denote by $d^*_{P}$, as the following mapping from real-valued functions of $\mathbf{X}$ to real numbers
\begin{equation}\label{Conjugate: def}
d^*_{P}(D)\, := \, \sup_{Q }\: \mathbb{E}_{Q}[D(\mathbf{X})] - d(P,Q) .
\end{equation}
Here the supremum is over all distributions on $\mathbf{X}$ with support set $\mathcal{X}$. We later show the following theorem, which is based on the above definition, recovers various well-known GAN formulations, when applied to divergence measures discussed in Section \ref{Section: Divergence}. 
\begin{thm}\label{Thm: dual max disc_general}
Suppose divergence $d(P,Q)$ is non-negative, lower semicontinuous and convex in distribution $Q$. Consider a convex set of continuous functions $\mathcal{F}$ and assume support set $\mathcal{X}$ is compact. Then, 
\begin{align}\label{Eq: min-min GAN unregularized_Lagrangian}
 &\min_{G\in\mathcal{G}}\;\: \max_{D \in \mathcal{F}}\;\:   \mathbb{E}_{P_{\mathbf{X}}}[D(\mathbf{X})] - d^*_{P_{G(\mathbf{Z})}}(D) \\  
 = \,& \min_{G\in\mathcal{G}}\; \min_{Q}\; \bigl\{ d( P_{G(\mathbf{Z})} , Q ) + \max_{D\in\mathcal{F}}\,\{ \,\mathbb{E}_{P_\mathbf{X}}[D(\mathbf{X})] - \mathbb{E}_{Q}[D(\mathbf{X})]\,\}  \bigr\}. \nonumber
\end{align}
\end{thm}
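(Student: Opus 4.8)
Since the outer $\min_{G\in\mathcal{G}}$ is common to both sides, the plan is to fix a generator $G\in\mathcal{G}$, abbreviate $P_G:=P_{G(\mathbf{Z})}$, and establish the identity for this $G$; taking $\min_{G\in\mathcal{G}}$ of both sides afterwards is immediate. First I would unfold the conjugate from \eqref{Conjugate: def}: since $-d^*_{P_G}(D)=\inf_{Q}\{\,d(P_G,Q)-\mathbb{E}_Q[D(\mathbf{X})]\,\}$, setting
\begin{equation*}
L(D,Q)\,:=\,d(P_G,Q)+\mathbb{E}_{P_\mathbf{X}}[D(\mathbf{X})]-\mathbb{E}_Q[D(\mathbf{X})]
\end{equation*}
turns the left-hand side of \eqref{Eq: min-min GAN unregularized_Lagrangian} (with this $G$) into $\sup_{D\in\mathcal{F}}\inf_{Q}L(D,Q)$ and the right-hand side into $\inf_{Q}\sup_{D\in\mathcal{F}}L(D,Q)$, where $Q$ ranges over all probability distributions supported on $\mathcal{X}$. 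So the statement is exactly a minimax equality for $L$; weak duality gives $\sup_D\inf_Q L\le\inf_Q\sup_D L$ for free, and the content is the reverse inequality.

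I would derive the reverse inequality from Sion's minimax theorem. Give $\mathcal{F}\subseteq C(\mathcal{X})$ the sup-norm topology, and give the set $\mathcal{P}(\mathcal{X})$ of Borel probability measures on $\mathcal{X}$ the weak-$*$ topology. For fixed $Q$, the map $D\mapsto L(D,Q)$ is affine, hence concave, and norm-continuous, because $D\mapsto\mathbb{E}_\mu[D(\mathbf{X})]$ is $\Vert\cdot\Vert_\infty$-continuous for any finite measure $\mu$ on the compact set $\mathcal{X}$. For fixed $D\in\mathcal{F}$, $D$ is bounded and continuous, so $Q\mapsto\mathbb{E}_Q[D(\mathbf{X})]$ is weak-$*$ continuous, whence $Q\mapsto L(D,Q)$ is convex (a convex-in-$Q$ term $d(P_G,\cdot)$ plus an affine term) and weak-$*$ lower semicontinuous (using the assumed lower semicontinuity of $d(P_G,\cdot)$); extended-real values $+\infty$ of $d(P_G,\cdot)$ are harmless, since the lower-semicontinuous convex version of the minimax theorem accommodates them. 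Crucially, compactness of $\mathcal{X}$ makes $\mathcal{P}(\mathcal{X})$ convex and weak-$*$ compact (Banach--Alaoglu on the unit ball of $C(\mathcal{X})^*$ together with the Riesz representation theorem, or Prokhorov's theorem). Sion's theorem then delivers $\sup_{D\in\mathcal{F}}\inf_{Q}L(D,Q)=\inf_{Q}\sup_{D\in\mathcal{F}}L(D,Q)$. To justify writing $\min_Q$ in the statement, I would finally observe that $Q\mapsto\sup_{D\in\mathcal{F}}L(D,Q)$, being a supremum of weak-$*$ lower semicontinuous functions, is itself weak-$*$ lower semicontinuous, hence attains its infimum over the compact set $\mathcal{P}(\mathcal{X})$.

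The hard part is the topology rather than any computation: one must pick topologies on $\mathcal{F}$ and on the distribution space that are fine enough for $L$ to be upper semicontinuous and quasiconcave in $D$ and lower semicontinuous and quasiconvex in $Q$, yet coarse enough that the distribution space is compact — and this is precisely where the hypothesis that $\mathcal{X}$ is compact is used (for weak-$*$ compactness of $\mathcal{P}(\mathcal{X})$) and where the lower-semicontinuity hypothesis on $d$ is used (matched against the weak-$*$ topology). A minor point to clear along the way is that, because $d\ge0$, $d(P_G,P_G)=0$, and every $D\in\mathcal{F}$ is bounded on the compact $\mathcal{X}$, the quantity $d^*_{P_G}(D)$ is finite, so the manipulation $-\sup_Q(\cdot)=\inf_Q(-\,\cdot)$ above introduces no $\infty-\infty$ indeterminacy.
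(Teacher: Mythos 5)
Your proposal is correct and follows essentially the same route as the paper's own proof (its Lemma 1): rewrite the two sides as $\sup_D\inf_Q$ and $\inf_Q\sup_D$ of the same Lagrangian and invoke Sion's minimax theorem, using convexity of $\mathcal{F}$, convexity and lower semicontinuity of $d(P_G,\cdot)$, linearity in $D$, and weak compactness of the probability measures on the compact set $\mathcal{X}$. Your write-up is somewhat more explicit about the topologies, the attainment of the minimum, and the absence of $\infty-\infty$ issues, but the core argument is identical.
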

\begin{proof}
We defer the proof to the Appendix.
\end{proof}
Theorem \ref{Thm: dual max disc_general} interprets \eqref{Eq: min-min GAN unregularized_Lagrangian}'s LHS minimax problem as searching for the closest generative model to the distributions penalized to share the same moments specified by $\mathcal{F}$ with $P_\mathbf{X}$. The following corollary of Theorem \ref{Thm: dual max disc_general} shows if we further assume that $\mathcal{F}$ is a linear space, then 
the penalty term penalizing moment mismatches can be moved to the constraints. This reduction reveals a divergence minimization problem between generative models and the following set $\mathcal{P_F}(P)$ which we call the set of discriminator moment matching distributions,
\begin{equation}\label{P_F difinition}
\mathcal{P_F}(P) := \bigl\{  Q:\; \forall D\in\mathcal{F},\: \mathbb{E}_Q[D(\mathbf{X})]= \mathbb{E}_P[D(\mathbf{X})]\, \bigr\}. 
\end{equation}
\begin{cor}\label{Cor: linear}
In Theorem \ref{Thm: dual max disc_general} suppose $\mathcal{F}$ is further a linear space, i.e. for any $D_1,D_2\in \mathcal{F}$ and $\lambda\in\mathbb{R}$ we have $D_1+\lambda D_2\in \mathcal{F}$. Then,
\begin{align}\label{Eq: dual min-max GAN unregularized}
\min_{G\in\mathcal{G}}\; \max_{D \in \mathcal{F}} \;  \mathbb{E}_{P_{\mathbf{X}}}[D(\mathbf{X})] -d^*_{P_{G(\mathbf{Z})}}(D)  \, = \, \min_{G\in\mathcal{G}}\; \min_{Q\in \mathcal{P}_\mathcal{F}(P_\mathbf{X})} \; d( P_{G(\mathbf{Z})} , Q)  .
\end{align}
\end{cor}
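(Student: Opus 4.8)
The plan is to start from the identity already established in Theorem~\ref{Thm: dual max disc_general} and show that, when $\mathcal{F}$ is a linear space, the inner penalty term
\[
h(Q)\;:=\;\max_{D\in\mathcal{F}}\,\bigl\{\,\mathbb{E}_{P_\mathbf{X}}[D(\mathbf{X})]-\mathbb{E}_{Q}[D(\mathbf{X})]\,\bigr\}
\]
collapses to the convex indicator of the moment-matching set $\mathcal{P_F}(P_\mathbf{X})$, i.e. the function $\iota_{\mathcal{P_F}(P_\mathbf{X})}$ that equals $0$ on $\mathcal{P_F}(P_\mathbf{X})$ and $+\infty$ elsewhere. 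Substituting this into the right-hand side of \eqref{Eq: min-min GAN unregularized_Lagrangian} then restricts the inner minimization over $Q$ to $\mathcal{P_F}(P_\mathbf{X})$ and annihilates the penalty term there, which is exactly \eqref{Eq: dual min-max GAN unregularized}.

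First I would record two elementary consequences of $\mathcal{F}$ being a linear space: taking $D_1=D_2\in\mathcal{F}$ and $\lambda=-1$ shows the zero function lies in $\mathcal{F}$, and $\mathcal{F}$ is closed under negation $D\mapsto -D$ and under positive scaling $D\mapsto tD$ for $t>0$. The first fact immediately gives $h(Q)\ge \mathbb{E}_{P_\mathbf{X}}[0]-\mathbb{E}_Q[0]=0$ for every $Q$. For the converse I split into two cases. If $Q\in\mathcal{P_F}(P_\mathbf{X})$, then $\mathbb{E}_{P_\mathbf{X}}[D]=\mathbb{E}_Q[D]$ for all $D\in\mathcal{F}$, so every value of the maximand is $0$ and $h(Q)=0$. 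If $Q\notin\mathcal{P_F}(P_\mathbf{X})$, there is some $D_0\in\mathcal{F}$ with $\mathbb{E}_{P_\mathbf{X}}[D_0]-\mathbb{E}_Q[D_0]\neq 0$; replacing $D_0$ by $-D_0\in\mathcal{F}$ if necessary, assume this quantity is strictly positive. Then for all $t>0$ we have $tD_0\in\mathcal{F}$ and $\mathbb{E}_{P_\mathbf{X}}[tD_0]-\mathbb{E}_Q[tD_0]=t\bigl(\mathbb{E}_{P_\mathbf{X}}[D_0]-\mathbb{E}_Q[D_0]\bigr)\to+\infty$, so $h(Q)=+\infty$. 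This proves $h=\iota_{\mathcal{P_F}(P_\mathbf{X})}$.

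Given this, for each fixed $G$ the inner minimization in \eqref{Eq: min-min GAN unregularized_Lagrangian} becomes $\min_{Q}\,\bigl\{ d(P_{G(\mathbf{Z})},Q)+\iota_{\mathcal{P_F}(P_\mathbf{X})}(Q)\bigr\}=\min_{Q\in\mathcal{P_F}(P_\mathbf{X})} d(P_{G(\mathbf{Z})},Q)$, and this is a well-posed (non-vacuous) minimization because $\mathcal{P_F}(P_\mathbf{X})$ contains $P_\mathbf{X}$ itself. Taking $\min_{G\in\mathcal{G}}$ on both sides of \eqref{Eq: min-min GAN unregularized_Lagrangian} then yields the claimed identity \eqref{Eq: dual min-max GAN unregularized}.

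Since the argument is a direct manipulation of the already-proven Theorem~\ref{Thm: dual max disc_general}, I do not anticipate a serious obstacle; the only step requiring any care is the unbounded-scaling argument, and this is precisely where the linear-space hypothesis (closure under negation and positive dilation), rather than mere convexity, is essential — convexity alone would only let us conclude $h(Q)$ is some finite nonnegative number off $\mathcal{P_F}(P_\mathbf{X})$, not $+\infty$.
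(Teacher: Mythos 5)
Your proof is correct and follows essentially the same route as the paper's: the appendix establishes Corollary~\ref{Cor: linear} by observing that for a linear space $\mathcal{F}$ the penalty term $\max_{D\in\mathcal{F}}\{\mathbb{E}_{P_\mathbf{X}}[D(\mathbf{X})]-\mathbb{E}_Q[D(\mathbf{X})]\}$ equals $0$ on $\mathcal{P_F}(P_\mathbf{X})$ and $+\infty$ elsewhere, exactly your indicator-function collapse. Your write-up simply makes the unbounded-scaling step and the non-vacuousness of the constrained minimum more explicit than the paper does.
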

In next section, we apply this duality framework to divergence measures discussed in Section \ref{Section: Divergence} and show how to derive various GAN problems through the developed framework.
\section{Duality framework applied to different divergence measures}
\subsection{f-divergence: f-GAN and vanilla GAN}
Theorem \ref{Theorem f_gan} shows the application of Theorem \ref{Thm: dual max disc_general} to f-divergences. We use $f^*$ to denote $f$'s convex-conjugate \cite{boyd2004convex}, defined as $f^*(u) := \sup_t ut-f(t)$. Note that Theorem \ref{Theorem f_gan} applies to any f-divergence $d_f$ with non-decreasing convex-conjugate $f^*$, which holds for all f-divergence examples discussed in \cite{nowozin2016f} with the only exception of Pearson $\chi^2$-divergence.
\begin{thm}\label{Theorem f_gan}
Consider f-divergence $d_f$ where the corresponding $f$ has a non-decreasing convex-conjugate $f^*$. In addition to Theorem \ref{Thm: dual max disc_general}'s assumptions, suppose $\mathcal{F}$ is closed to adding constant functions, i.e. $D+\lambda\in\mathcal{F}$ if $D\in\mathcal{F},\, \lambda\in\mathbb{R}$. Then, the minimax problem in the LHS of \eqref{Eq: min-min GAN unregularized_Lagrangian} and \eqref{Eq: dual min-max GAN unregularized}, will reduce to 
\begin{align}\label{f-GAN dual}
 \min_{ G\in \mathcal{G} }\;  \max_{D \in \mathcal{F}}\;  \mathbb{E}[D({\mathbf{X}})] - \mathbb{E}\bigl[f^* \bigl( D(G(\mathbf{Z})) \bigr)\bigr]. 
\end{align}
\end{thm}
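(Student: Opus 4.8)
The plan is to compute the conjugate $d_f^*$ explicitly and then absorb the normalization constraint $\int q = 1$ into the freedom to add constants to discriminators in $\mathcal{F}$. Fix $G\in\mathcal{G}$ and abbreviate $P:=P_{G(\mathbf{Z})}$. The key identity to establish is
\[
d_f^*(D) \;=\; \inf_{\lambda\in\mathbb{R}}\,\Bigl\{\,\lambda + \mathbb{E}_P\bigl[\,f^*\bigl(D(\mathbf{X})-\lambda\bigr)\,\bigr]\,\Bigr\}
\]
for every $D$ continuous on the compact set $\mathcal{X}$. The inequality ``$\le$'' is elementary: for any distribution $Q$ with density ratio $r=q/p$ and any $\lambda\in\mathbb{R}$, write $\mathbb{E}_Q[D] = \lambda + \mathbb{E}_Q[D-\lambda]$ using $\int q = 1$, so that $\mathbb{E}_Q[D] - d_f(P,Q) = \lambda + \mathbb{E}_P[\,r\,(D-\lambda) - f(r)\,] \le \lambda + \mathbb{E}_P[f^*(D-\lambda)]$ by Fenchel's inequality; taking $\sup_Q$ then $\inf_\lambda$ gives the bound. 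For ``$\ge$'' I would regard $d_f^*(D)$ as the optimal value of a concave program --- maximize $\mathbb{E}_Q[D]-d_f(P,Q)$ over the convex, weakly compact set of distributions on $\mathcal{X}$ --- subject to the single affine constraint $\int q = 1$, and apply strong Lagrangian duality (via Sion's minimax theorem, using joint convexity of the perspective map $(p,q)\mapsto p\,f(q/p)$, lower semicontinuity, and compactness of $\mathcal{X}$) to swap $\sup_q$ and $\inf_\lambda$. The resulting unconstrained inner supremum separates pointwise into $p(\mathbf{x})\sup_{r\ge 0}\{r(D(\mathbf{x})-\lambda)-f(r)\}$.

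The hypothesis that $f^*$ is non-decreasing is used exactly at this point: monotonicity of $f^*$ forces its subgradients to be non-negative, so the pointwise maximizing ratio $r\in\partial f^*(D(\mathbf{x})-\lambda)$ is automatically a valid non-negative density ratio, whence $\sup_{r\ge 0}\{rt-f(r)\} = f^*(t)$ and the inner supremum equals $\mathbb{E}_P[f^*(D-\lambda)]$. (Without monotonicity the constrained supremum can be strictly smaller, which is precisely why the Pearson $\chi^2$ generator is excluded.) This completes the identity for $d_f^*$.

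Given the identity, I would substitute it into the left-hand minimax. For fixed $G$,
\[
\max_{D\in\mathcal{F}}\Bigl\{\mathbb{E}_{P_{\mathbf{X}}}[D]-d_f^*(D)\Bigr\} \;=\; \max_{D\in\mathcal{F}}\;\sup_{\lambda\in\mathbb{R}}\Bigl\{\mathbb{E}_{P_{\mathbf{X}}}[D-\lambda]-\mathbb{E}_P\bigl[f^*(D-\lambda)\bigr]\Bigr\},
\]
where I again used $\mathbb{E}_{P_{\mathbf{X}}}[D]-\lambda=\mathbb{E}_{P_{\mathbf{X}}}[D-\lambda]$. Since $\mathcal{F}$ is closed under adding constants, $\{D-\lambda : D\in\mathcal{F},\,\lambda\in\mathbb{R}\}=\mathcal{F}$, so the joint supremum over $(D,\lambda)$ collapses to a supremum over a single $D\in\mathcal{F}$, yielding $\max_{D\in\mathcal{F}}\mathbb{E}_{P_{\mathbf{X}}}[D]-\mathbb{E}_{P_{G(\mathbf{Z})}}[f^*(D)]$. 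Taking $\min_{G\in\mathcal{G}}$ on both sides --- legitimate because the equality holds for each $G$ with $P=P_{G(\mathbf{Z})}$ --- gives \eqref{f-GAN dual}.

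I expect the main obstacle to be the rigorous justification of the minimax exchange $\sup_q\inf_\lambda=\inf_\lambda\sup_q$ together with the accompanying measure-theoretic bookkeeping: densities need not exist in general, $p$ may vanish on a positive-measure set (handled through the absolute continuity implicit in $d_f(P,Q)<\infty$ and the convention $0\cdot f(0/0)=0$), and one must verify $\mathbb{E}_P[f^*(D-\lambda)]<\infty$, which follows from continuity of $f^*$ and boundedness of $D$ on the compact $\mathcal{X}$. The constant-shift collapse in the last step and the use of the monotonicity hypothesis on $f^*$ are then comparatively routine.
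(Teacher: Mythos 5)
Your proposal is correct and follows essentially the same route as the paper: both compute $d_f^*$ explicitly by Lagrangian duality on the normalization constraint $\int q=1$, invoke the monotonicity of $f^*$ to drop the non-negativity constraint on the density ratio so that the inner supremum evaluates to $\mathbb{E}_P[f^*(D+\lambda)]$, and then absorb the scalar multiplier $\lambda$ into $D$ using the assumption that $\mathcal{F}$ is closed under adding constants. The only cosmetic differences are your sign convention on $\lambda$ and your splitting of the conjugate identity into an elementary Fenchel-inequality direction plus a minimax-swap direction, where the paper writes a single chain of equalities.
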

\begin{proof}
We defer the proof to the Appendix.
\end{proof}
The minimax problem \eqref{f-GAN dual} is in fact the f-GAN problem \cite{nowozin2016f}. Theorem \ref{Theorem f_gan} hence reveals that f-GAN searches for the generative model minimizing f-divergence to the distributions matching moments specified by $\mathcal{F}$ to the moments of true distribution. 
\begin{ex}
Consider the JS-divergence, i.e. f-divergence corresponding to $f_{\JS}(t)=\frac{t}{2}\log t - \frac{t+1}{2}\log\frac{t+1}{2}$. Then, \eqref{f-GAN dual} up to additive and multiplicative constants reduces to
\begin{equation}\label{JS-GAN dual}
\min_{ G\in \mathcal{G} }\;  \max_{D \in \mathcal{F}}\;  \mathbb{E}[D({\mathbf{X}})] + \mathbb{E}\bigl[ \log\bigl( 1 - \exp( D(G(\mathbf{Z}))\bigr)\bigr].
\end{equation}
Moreover, if for function set $\tilde{\mathcal{F}}$ the corresponding $\mathcal{F}=\{D:\, D(\mathbf{x})= - \log(1+ \exp(\tilde{D}(\mathbf{x}))),\, \tilde{D}\in \tilde{\mathcal{F}}\}$ is a convex set, then \eqref{JS-GAN dual} will reduce to the following minimax game which is the vanilla GAN problem \eqref{GAN: Goodfellow} with sigmoid activation applied to the discriminator output,
\begin{equation}\label{JS-GAN dual, GAN}
\min_{ G\in \mathcal{G} }\:  \max_{\tilde{D} \in \tilde{\mathcal{F}}}\:  \mathbb{E}\bigl[\, \log \frac{1}{1+\exp(\tilde{D}(\mathbf{X}))}\, \bigr] + \mathbb{E}\bigl[\, \log \frac{\exp(\tilde{D}(\mathbf{X}))}{1+\exp(\tilde{D}(\mathbf{X}))}\, \bigr] .
\end{equation}
\end{ex}
\subsection{Optimal Transport Cost: Wasserstein GAN}
\begin{thm}\label{thm: Wasseaarstein}
Let divergence $d$ be optimal transport cost ${\OT}_c$ where $c$ is a non-negative lower semicontinuous cost function. Then, the minimax problem in the LHS of \eqref{Eq: min-min GAN unregularized_Lagrangian} and \eqref{Eq: dual min-max GAN unregularized} reduces to 
\begin{align}\label{Wc-GAN dual}
 \min_{ G\in \mathcal{G} }\;  \max_{D \in \mathcal{F}}\;  \mathbb{E}[D({\mathbf{X}})] - \mathbb{E}\bigl[ D^c(G(\mathbf{Z})) \bigr]. 
\end{align}
\end{thm}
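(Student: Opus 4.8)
The plan is to reduce the theorem to a single computation, namely that for $d=\OT_c$ and every continuous $D$ one has $d^*_P(D)=\E_P[D^c(\mathbf{X})]$; granting this, substituting $P=P_{G(\mathbf{Z})}$ turns the LHS of \eqref{Eq: min-min GAN unregularized_Lagrangian} into $\min_{G\in\mathcal{G}}\max_{D\in\mathcal{F}}\E[D(\mathbf{X})]-\E[D^c(G(\mathbf{Z}))]$, which is exactly \eqref{Wc-GAN dual}. Before invoking Theorem~\ref{Thm: dual max disc_general} I would first check that $\OT_c$ meets its hypotheses: $\OT_c$ is non-negative since $c\ge 0$; it is lower semicontinuous in $Q$ by the standard weak-convergence argument for optimal transport with a lower semicontinuous cost on the compact space $\mathcal{X}$; and it is convex in $Q$, since if $Q=\lambda Q_1+(1-\lambda)Q_2$ with $\pi_i\in\Pi(P,Q_i)$ optimal, then $\lambda\pi_1+(1-\lambda)\pi_2\in\Pi(P,Q)$ gives $\OT_c(P,Q)\le\lambda\,\OT_c(P,Q_1)+(1-\lambda)\,\OT_c(P,Q_2)$.

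For the inequality $d^*_P(D)\le\E_P[D^c(\mathbf{X})]$ I would start from the primal form $\OT_c(P,Q)=\inf_{\pi\in\Pi(P,Q)}\E_\pi[c(\mathbf{X},\mathbf{X}')]$ and the pointwise bound $D(\mathbf{x}')-c(\mathbf{x},\mathbf{x}')\le D^c(\mathbf{x})$, which is immediate from $D^c(\mathbf{x})=\sup_{\mathbf{x}'}D(\mathbf{x}')-c(\mathbf{x},\mathbf{x}')$. Writing $\E_Q[D(\mathbf{X})]=\E_\pi[D(\mathbf{X}')]$ for any $\pi\in\Pi(P,Q)$, this gives $\E_Q[D(\mathbf{X})]-\E_\pi[c(\mathbf{X},\mathbf{X}')]\le\E_\pi[D^c(\mathbf{X})]=\E_P[D^c(\mathbf{X})]$; taking the infimum over $\pi$ and then the supremum over $Q$ yields the bound.

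For the reverse inequality I would exhibit a (near-)optimal $Q$. Since $\mathcal{X}$ is compact, $D$ continuous and $c$ lower semicontinuous, the map $\mathbf{x}'\mapsto D(\mathbf{x}')-c(\mathbf{x},\mathbf{x}')$ is upper semicontinuous with nonempty compact argmax, and $D^c$ is itself upper semicontinuous, hence Borel; a measurable selection then provides a Borel map $T:\mathcal{X}\to\mathcal{X}$ with $D(T(\mathbf{x}))-c(\mathbf{x},T(\mathbf{x}))=D^c(\mathbf{x})$. Taking $Q:=T_{\#}P$ and the coupling $(\mathrm{id},T)_{\#}P\in\Pi(P,Q)$ gives $\OT_c(P,Q)\le\E_P[c(\mathbf{X},T(\mathbf{X}))]=\E_P[D(T(\mathbf{X}))-D^c(\mathbf{X})]=\E_Q[D(\mathbf{X})]-\E_P[D^c(\mathbf{X})]$, so $\E_Q[D(\mathbf{X})]-\OT_c(P,Q)\ge\E_P[D^c(\mathbf{X})]$. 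Combined with the previous step, $d^*_P(D)=\E_P[D^c(\mathbf{X})]$, which finishes the argument after the substitution described above.

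I expect the only genuinely delicate point to be the construction of the measurable map $T$: one needs $D^c$ to be measurable and the correspondence of maximizers to admit a measurable selector. For a general lower semicontinuous cost this is an application of the Kuratowski--Ryll-Nardzewski selection theorem, which is available thanks to the compactness of $\mathcal{X}$ and the upper semicontinuity noted above; for the norm-power costs $c_q$ relevant to Wasserstein GAN, where $c$ is continuous, one can instead build an $\varepsilon$-optimal Borel map on a fine finite partition of $\mathcal{X}$ and pass to the limit, using the lower semicontinuity of $\OT_c(P,\cdot)$ established in the first step. Everything else is a direct manipulation of the primal optimal-transport problem together with Theorem~\ref{Thm: dual max disc_general}.
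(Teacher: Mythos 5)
Your proposal is correct and follows essentially the same route as the paper: both reduce the theorem to the identity $\OT_c{}^*_P(D)=\mathbb{E}_P[D^c(\mathbf{X})]$, proved by combining the trivial coupling inequality in one direction with a (near-)optimal measurable selection $\mathbf{x}\mapsto T(\mathbf{x})$ in the other, and then substitute into Theorem~\ref{Thm: dual max disc_general}. Your explicit verification that $\OT_c$ is non-negative, lower semicontinuous and convex in $Q$, and your attention to the measurable-selection step, are details the paper's proof leaves implicit but do not change the argument.
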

\begin{proof}
We defer the proof to the Appendix.
\end{proof}
Therefore the minimax game between $G$ and $D$ in \eqref{Wc-GAN dual} can be viewed as minimizing the optimal transport cost between generative models and the distributions matching moments over $\mathcal{F}$ with $P_\mathbf{X}$'s moments. The following example applies this result to the first-order Wasserstein distance and recovers the WGAN problem \cite{arjovsky2017wasserstein} with a constrained $1$-Lipschitz discriminator.  
\begin{ex}
Let the optimal transport cost in \eqref{Wc-GAN dual} be the $W_1$ distance, and suppose $\mathcal{F}$ is a convex subset of 1-Lipschitz functions. Then, the minimax problem \eqref{Wc-GAN dual} will reduce to
\begin{equation}\label{W1-GAN dual}
\min_{ G\in \mathcal{G} }\:  \max_{D \in \mathcal{F}}\:  \mathbb{E}[D({\mathbf{X}})] - \mathbb{E}\bigl[ D(G(\mathbf{Z}))\bigr].
\end{equation} 
\end{ex}
Therefore, the moment-matching interpretation also holds for WGAN: for a convex set $\mathcal{F}$ of $1$-Lipschitz functions WGAN finds the generative model with  minimum $W_1$ distance to the distributions penalized to share the same moments over $\mathcal{F}$ with the data distribution.  We discuss two more examples in the Appendix: 1) for the indicator cost $c_{I}(\mathbf{x},\mathbf{x}')=\mathbb{I}(\mathbf{x}\neq \mathbf{x}')$ corresponding to the total variation distance we draw the connection to the energy-based GAN \cite{zhao2016energy}, 2) for the second-order cost $c_2(\mathbf{x},\mathbf{x}')={\Vert}\mathbf{x}-\mathbf{x}'{\Vert}^2$ we recover \cite{feizi2017understanding}'s quadratic GAN formulation under the LQG setting assumptions, i.e. linear generator, quadratic discriminator and Gaussian input data.
\section{Duality framework applied to neural net discriminators}
We applied the duality framework to analyze GAN problems with convex discriminator sets. However, a neural net set $\mathcal{F}_{nn} = \{ f_{\mathbf{w}}:\, \mathbf{w}\in \mathcal{W} \}$, where $f_{\mathbf{w}}$ denotes a neural net function with fixed architecture and weights $\mathbf{w}$ in feasible set $\mathcal{W}$, does not generally satisfy this convexity assumption. Note that a linear combination of several neural net functions in $\mathcal{F}_{nn}$ may not remain in $\mathcal{F}_{nn}$.

Therefore, we apply the duality framework to $\mathcal{F}_{nn}$'s convex hull, which we denote by  $\conv(\mathcal{F}_{nn})$, containing any convex combination of neural net functions in $\mathcal{F}_{nn}$. However, a convex combination of infinitely-many neural nets from $\mathcal{F}_{nn}$ is characterized by infinitely-many parameters, which makes optimizing the discriminator over $\conv(\mathcal{F}_{nn})$ computationally intractable. In the following theorem, we show that although a function in $\conv(\mathcal{F}_{nn})$ is a combination of infinitely-many neural nets, that function can be approximated by uniformly combining boundedly-many neural nets in $\mathcal{F}_{nn}$.
\begin{thm}\label{Thm: MIX-GAN}
Suppose any function $f_{\mathbf{w}}\in\mathcal{F}_{nn}$ is $L$-Lipschitz and bounded as $\vert f_{\mathbf{w}}(\mathbf{x})\vert \le M$. Also, assume that the $k$-dimensional random input $\mathbf{X}$ is norm-bounded as ${\Vert}\mathbf{X}{\Vert}_2\le R$. Then, any function in $\conv (\mathcal{F}_{nn})$ can be uniformly approximated over the ball ${\Vert}\mathbf{x}{\Vert}_2\le R$ within $\epsilon$-error by a uniform combination $\hat{f}(\mathbf{x}) = \frac{1}{m}\sum_{i=1}^m f_{\mathbf{w}_i}(\mathbf{x})$ of $m=\mathcal{O}(\frac{M^2k\log(LR/\epsilon)}{\epsilon^2})$ functions $(f_{\mathbf{w}_i})_{i=1}^m\in \mathcal{F}_{nn}$.
\end{thm}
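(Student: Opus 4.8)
The plan is to prove this by Maurey's empirical method combined with a covering-number union bound, which upgrades a pointwise approximation guarantee to a uniform one over the ball $\{\mathbf{x}:\Vert\mathbf{x}\Vert_2\le R\}$.

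First I would fix $f\in\conv(\mathcal{F}_{nn})$ and write it as a finite convex combination $f=\sum_{j}\lambda_j f_{\mathbf{w}_j}$ with $\lambda_j\ge 0$ and $\sum_j\lambda_j=1$; equivalently, $f(\mathbf{x})=\E_{\mathbf{w}\sim\mu}[f_{\mathbf{w}}(\mathbf{x})]$ where $\mu$ is the probability measure placing mass $\lambda_j$ on $\mathbf{w}_j$. Then I would draw $\mathbf{w}_1,\dots,\mathbf{w}_m$ i.i.d. from $\mu$ and set $\hat f(\mathbf{x})=\frac{1}{m}\sum_{i=1}^m f_{\mathbf{w}_i}(\mathbf{x})$. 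By construction $\hat f$ is a uniform combination of $m$ members of $\mathcal{F}_{nn}$, it is $L$-Lipschitz (an average of $L$-Lipschitz functions), and $\E[\hat f(\mathbf{x})]=f(\mathbf{x})$ for every fixed $\mathbf{x}$.

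Next, for a fixed $\mathbf{x}$ the summands $f_{\mathbf{w}_i}(\mathbf{x})$ are independent and lie in $[-M,M]$, so Hoeffding's inequality gives $\Pr\bigl[\,|\hat f(\mathbf{x})-f(\mathbf{x})|\ge \epsilon/2\,\bigr]\le 2\exp(-m\epsilon^2/(8M^2))$. Let $\mathcal{N}$ be a $\delta$-net of the ball $\{\mathbf{x}:\Vert\mathbf{x}\Vert_2\le R\}$ of size $|\mathcal{N}|\le (1+2R/\delta)^k$. A union bound over $\mathcal{N}$ shows that $|\hat f(\mathbf{x})-f(\mathbf{x})|<\epsilon/2$ holds simultaneously for all $\mathbf{x}\in\mathcal{N}$ with probability at least $1-2(1+2R/\delta)^k\exp(-m\epsilon^2/(8M^2))$, which is positive as soon as $m\ge \frac{8M^2}{\epsilon^2}\bigl(k\log(1+2R/\delta)+\log 2\bigr)$. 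In particular some realization $\mathbf{w}_1,\dots,\mathbf{w}_m$ achieves this, fixing a deterministic $\hat f$.

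Finally I would pass from the net to the whole ball using Lipschitzness: for any $\mathbf{x}$ with $\Vert\mathbf{x}\Vert_2\le R$, pick $\mathbf{x}_0\in\mathcal{N}$ with $\Vert\mathbf{x}-\mathbf{x}_0\Vert_2\le\delta$, and bound $|\hat f(\mathbf{x})-f(\mathbf{x})|\le |\hat f(\mathbf{x})-\hat f(\mathbf{x}_0)|+|\hat f(\mathbf{x}_0)-f(\mathbf{x}_0)|+|f(\mathbf{x}_0)-f(\mathbf{x})|\le 2L\delta+\epsilon/2$. Choosing $\delta=\epsilon/(4L)$ makes the uniform error at most $\epsilon$, and then $\log(1+2R/\delta)=\log(1+8LR/\epsilon)=\mathcal{O}(\log(LR/\epsilon))$, so $m=\mathcal{O}(M^2 k\log(LR/\epsilon)/\epsilon^2)$ samples suffice, matching the stated bound. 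The argument is otherwise routine; the only steps needing care are representing a convex-hull element as an expectation — immediate for finite convex combinations, the standard meaning of $\conv(\mathcal{F}_{nn})$ — and balancing the net resolution $\delta$ against $L$ so that the covering-number term contributes only the claimed logarithmic factor, which I expect to be the main (minor) obstacle.
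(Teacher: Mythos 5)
Your proposal is correct and follows essentially the same route as the paper's proof: Maurey-style i.i.d.\ sampling from the mixing distribution, Hoeffding's inequality at each point of a $\delta$-net with $\delta=\epsilon/(4L)$, a union bound over the net, and the $L$-Lipschitz property to extend the guarantee from the net to the whole ball. The only cosmetic difference is the constant in the covering-number bound, which does not affect the stated $\mathcal{O}(M^2k\log(LR/\epsilon)/\epsilon^2)$ rate.
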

\begin{proof}
We defer the proof to the Appendix. 
\end{proof}
The above theorem suggests using a uniform combination of multiple discriminator  nets to find a better approximation of the solution to the divergence minimization problem in Theorem \ref{Thm: dual max disc_general} solved over $\conv(\mathcal{F}_{nn})$. Note that this approach is different from MIX-GAN \cite{arora2017generalization} proposed for achieving equilibrium in GAN minimiax game. While our approach considers a uniform combination of multiple neural nets as the discriminator, MIX-GAN considers a randomized combination of the minimax game over multiple neural net discriminators and generators.

\section{Minimum-sum hybrid of f-divergence and Wasserstein distance: GAN with Lipschitz or adversarially-trained discriminator}
Here we apply the convex duality framework to a novel class of divergence measures. For each f-divergence $d_f$ we define divergence $d_{f,W_1}$, which is the minimum sum hybrid of $d_f$ and $W_1$ divergences, as follows  
\begin{equation}\label{minimum-sum distance}
d_{f,W_1}(P_1,P_2) := \inf_{Q}\: W_1(P_1,Q) + d_f(Q,P_2).
\end{equation}
The above infimum is taken over all distributions on random $\mathbf{X}$, searching for  distribution $Q$ minimizing the sum of the Wasserstein distance between $P_1$ and $Q$ and the f-divergence from $Q$ to $P_2$. Note that the hybrid of JS-divergence and $W_1$-distance defined earlier in \eqref{hybrid: JS, W1} is a special case of the above definition. 
While f-divergence in f-GAN does not change continuously with the generator parameters, the following theorem shows that similar to the continuous behavior of $W_1$-distance shown in \cite{arjovsky2017towards,arjovsky2017wasserstein} the proposed hybrid divergence changes continuously with the generative model. We defer the proofs of this section's results to the Appendix.
\begin{thm}\label{thm: cntinuity} 
Suppose $G_{\boldsymbol{\theta}}\in\mathcal{G}$ is continuously changing with parameters $\boldsymbol{\theta}$. Then, for any $Q$ and $\mathbf{Z}$, $d_{f,W_1}(P_{G_{\boldsymbol{\theta}}(\mathbf{Z})},Q)$ will behave continuously as a function of $\boldsymbol{\theta}$. Moreover, if $G_{\boldsymbol{\theta}}$ is assumed to be locally Lipschitz, then $d_{f,W_1}(P_{G_{\boldsymbol{\theta}}(\mathbf{Z})},Q)$ will be differentiable w.r.t. $\boldsymbol{\theta}$ almost everywhere. 
\end{thm}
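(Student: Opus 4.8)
The plan is to reduce the continuity claim to the known continuity of the $W_1$ distance under the generator map, which was established in \cite{arjovsky2017towards,arjovsky2017wasserstein}. The key structural observation is that the hybrid $d_{f,W_1}(\cdot,Q)$ is, by its very definition \eqref{minimum-sum distance}, an infimal convolution of $W_1(\cdot,\cdot)$ with the fixed functional $d_f(\cdot,Q)$. Since $W_1$ is a genuine metric on the space of probability measures with finite first moment, it obeys the triangle inequality, and infimal convolution with a metric produces a functional that is $1$-Lipschitz with respect to that metric: for any two distributions $P_1,P_1'$ and any candidate $Q'$ in the infimum for $P_1'$, the triangle inequality $W_1(P_1,Q')\le W_1(P_1,P_1') + W_1(P_1',Q')$ gives $d_{f,W_1}(P_1,Q)\le W_1(P_1,P_1') + d_{f,W_1}(P_1',Q)$, and symmetrically. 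Hence
\[
\bigl|\, d_{f,W_1}(P_1,Q) - d_{f,W_1}(P_1',Q)\,\bigr| \;\le\; W_1(P_1,P_1').
\]
This is the single inequality that does all the work.

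First I would make this Lipschitz-in-$W_1$ estimate precise, taking care that the infimum in \eqref{minimum-sum distance} is over distributions for which both terms are finite, and noting $d_{f,W_1}(P_1,Q)\le d_f(P_1,Q)$ (take the trivial coupling $Q = P_1$ in the first slot only if needed, or rather bound via any fixed feasible $Q$) so the functional is not identically $+\infty$; a short argument shows it is finite whenever $d_f(\cdot,Q)$ is finite at some point reachable from $P_1$ in finite $W_1$ distance, which holds here because the generator outputs are supported on the compact set $\mathcal{X}$. Next I would specialize $P_1 = P_{G_{\boldsymbol\theta}(\mathbf Z)}$ and $P_1' = P_{G_{\boldsymbol\theta'}(\mathbf Z)}$. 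Combining the displayed inequality with the bound $W_1(P_{G_{\boldsymbol\theta}(\mathbf Z)},P_{G_{\boldsymbol\theta'}(\mathbf Z)}) \le \mathbb{E}\bigl[\,\Vert G_{\boldsymbol\theta}(\mathbf Z) - G_{\boldsymbol\theta'}(\mathbf Z)\Vert\,\bigr]$ — which follows from using the coupling induced by the common noise $\mathbf Z$ — reduces everything to the behaviour of $\mathbb{E}\bigl[\,\Vert G_{\boldsymbol\theta}(\mathbf Z) - G_{\boldsymbol\theta'}(\mathbf Z)\Vert\,\bigr]$ as $\boldsymbol\theta'\to\boldsymbol\theta$. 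Continuity of $\boldsymbol\theta\mapsto G_{\boldsymbol\theta}(\mathbf z)$ for each fixed $\mathbf z$, together with dominated convergence (the outputs lie in the compact $\mathcal{X}$, so the integrand is uniformly bounded), gives $\mathbb{E}\bigl[\,\Vert G_{\boldsymbol\theta}(\mathbf Z) - G_{\boldsymbol\theta'}(\mathbf Z)\Vert\,\bigr]\to 0$, which yields the asserted continuity of $\boldsymbol\theta\mapsto d_{f,W_1}(P_{G_{\boldsymbol\theta}(\mathbf Z)},Q)$. This is exactly the chain of estimates used in \cite{arjovsky2017wasserstein} for $W_1$ itself, now transported through the Lipschitz bound above.

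For the differentiability statement I would argue that, under the local-Lipschitz assumption on $\boldsymbol\theta\mapsto G_{\boldsymbol\theta}(\mathbf z)$, the function $\boldsymbol\theta\mapsto \mathbb{E}\bigl[\,\Vert G_{\boldsymbol\theta}(\mathbf Z) - G_{\boldsymbol\theta_0}(\mathbf Z)\Vert\,\bigr]$ is locally Lipschitz in $\boldsymbol\theta$ (again by dominated convergence, differentiating the Lipschitz constants), hence $\boldsymbol\theta\mapsto d_{f,W_1}(P_{G_{\boldsymbol\theta}(\mathbf Z)},Q)$ is locally Lipschitz, and then invoke Rademacher's theorem to conclude differentiability almost everywhere. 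I expect the main obstacle to be the bookkeeping around finiteness and measurability: one must verify that $d_{f,W_1}(P_{G_{\boldsymbol\theta}(\mathbf Z)},Q)$ is finite (so the $1$-Lipschitz bound is not vacuous $\infty - \infty$), that the infimal-convolution triangle inequality is legitimate when the minimizing $Q'$ need not exist (pass to a near-minimizer and take $\epsilon\to 0$), and that the coupling-based bound $W_1(P_{G_{\boldsymbol\theta}(\mathbf Z)},P_{G_{\boldsymbol\theta'}(\mathbf Z)}) \le \mathbb{E}\Vert G_{\boldsymbol\theta}(\mathbf Z)-G_{\boldsymbol\theta'}(\mathbf Z)\Vert$ applies in the non-compact-parameter regime. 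None of these is deep, but they are where the care is needed; the metric/infimal-convolution observation itself makes the continuity essentially immediate once those points are settled.
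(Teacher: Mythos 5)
Your proposal is correct and follows essentially the same route as the paper's proof: the $1$-Lipschitz estimate $\bigl|d_{f,W_1}(P_0,Q)-d_{f,W_1}(P_1,Q)\bigr|\le W_1(P_0,P_1)$ via the triangle inequality applied to (near-)minimizers, the common-noise coupling bound $W_1(P_{G_{\boldsymbol{\theta}}(\mathbf{Z})},P_{G_{\boldsymbol{\theta}'}(\mathbf{Z})})\le \mathbb{E}\bigl[\Vert G_{\boldsymbol{\theta}}(\mathbf{Z})-G_{\boldsymbol{\theta}'}(\mathbf{Z})\Vert\bigr]$, bounded convergence on the compact $\mathcal{X}$, and Rademacher for the locally Lipschitz case. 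The infimal-convolution framing and your attention to near-minimizers are cosmetic refinements of the paper's argument, not a different proof.
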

Our next result reveals the minimax problem dual to minimizing this hybrid divergence with symmetric f-divergence component. We note that this symmetricity condition is met by the JS-divergence and the squared Hellinger divergence among the f-divergence examples discussed in \cite{nowozin2016f}.
\begin{thm}\label{Example: d_F W_1}
Consider $d_{f,W_1}$ with a symmetric f-divergence $d_f$, i.e. $d_f(P,Q)=d_f(Q,P)$, satisfying the assumptions in Theorem \ref{Theorem f_gan}.
If the composition $f^*\circ D$ is 1-Lipschitz for all $D\in\mathcal{F}$, the minimax problem in Theorem \ref{Thm: dual max disc_general} for the hybrid $d_{f,W_1}$ reduces to the f-GAN problem, i.e.
\begin{equation}\label{fW-GAN dual_firstorder}
\min_{ G\in \mathcal{G} }\;  \max_{D \in \mathcal{F}}\;  \mathbb{E}[D({\mathbf{X}})] - \mathbb{E}\bigl[\,  f^* \bigl( D(G(\mathbf{Z})\bigr) \,\bigr]. 
\end{equation}
\end{thm}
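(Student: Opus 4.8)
The plan is to feed the hybrid divergence $d_{f,W_1}$ into Theorem~\ref{Thm: dual max disc_general} and then show that the resulting conjugate, when paired against the discriminators in $\mathcal{F}$, collapses onto the f-GAN integrand $\mathbb{E}[f^*(D(\cdot))]$. First I would check that $d_{f,W_1}$ satisfies the standing hypotheses of Theorem~\ref{Thm: dual max disc_general}: non-negativity is immediate from non-negativity of $W_1$ and $d_f$; joint convexity of $d_f$ and of $W_1$ makes $W_1(P_1,Q)+d_f(Q,P_2)$ jointly convex in $(Q,P_2)$, so the partial infimum over $Q$ is convex in $P_2$; and lower semicontinuity in $P_2$ follows from compactness of $\mathcal{X}$ (the minimizing $Q$ ranges over a weak-$*$ compact set) together with lower semicontinuity of $W_1$ and $d_f$. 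Theorem~\ref{Thm: dual max disc_general} then rewrites the left-hand side of \eqref{Eq: min-min GAN unregularized_Lagrangian} for $d_{f,W_1}$ as $\min_{G}\max_{D\in\mathcal{F}}\,\mathbb{E}_{P_{\mathbf{X}}}[D(\mathbf{X})]-(d_{f,W_1})^*_{P_{G(\mathbf{Z})}}(D)$, so everything reduces to understanding this conjugate on $\mathcal{F}$.

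The key step is to unfold the infimal-convolution structure inside the conjugate. Writing $(d_{f,W_1})^*_P(D)=\sup_{Q''}\mathbb{E}_{Q''}[D]-\inf_{R}\{W_1(P,R)+d_f(R,Q'')\}$ and commuting the two suprema (both are suprema, so they interchange freely), I get $(d_{f,W_1})^*_P(D)=\sup_{R}\bigl\{(d_f)^*_R(D)-W_1(P,R)\bigr\}$. I would then sandwich this quantity for every $D\in\mathcal{F}$. On one side, taking $R=P$ gives $(d_{f,W_1})^*_P(D)\ge (d_f)^*_P(D)$ (equivalently, $d_{f,W_1}\le d_f$). On the other side, the Fenchel inequality gives $(d_f)^*_R(D)\le \mathbb{E}_R[f^*(D)]$ for all $R$, and since $f^*\circ D$ is $1$-Lipschitz, the Kantorovich--Rubinstein dual form of $W_1$ yields $\mathbb{E}_R[f^*(D)]-\mathbb{E}_P[f^*(D)]\le W_1(P,R)$; combining the two shows $(d_f)^*_R(D)-W_1(P,R)\le \mathbb{E}_P[f^*(D)]$ for every $R$, hence $(d_{f,W_1})^*_P(D)\le \mathbb{E}_P[f^*(D)]$. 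Thus, with $P=P_{G(\mathbf{Z})}$, for all $D\in\mathcal{F}$ we have $(d_f)^*_{P_{G(\mathbf{Z})}}(D)\le (d_{f,W_1})^*_{P_{G(\mathbf{Z})}}(D)\le \mathbb{E}_{P_{G(\mathbf{Z})}}[f^*(D)]$; symmetry of $d_f$ is what keeps the orientation of these f-divergence terms consistent with the f-GAN integrand and with the hypotheses of Theorem~\ref{Theorem f_gan}.

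Finally I would close the sandwich using Theorem~\ref{Theorem f_gan}: its reduction from the left-hand side of \eqref{Eq: min-min GAN unregularized_Lagrangian} to \eqref{f-GAN dual}, under the very hypotheses assumed here (non-decreasing $f^*$ and $\mathcal{F}$ closed under adding constants), gives $\max_{D\in\mathcal{F}}\{\mathbb{E}_{P_{\mathbf{X}}}[D]-(d_f)^*_{P_{G(\mathbf{Z})}}(D)\}=\max_{D\in\mathcal{F}}\{\mathbb{E}_{P_{\mathbf{X}}}[D]-\mathbb{E}_{P_{G(\mathbf{Z})}}[f^*(D)]\}$. Since $D\mapsto \mathbb{E}_{P_{\mathbf{X}}}[D]-(d_{f,W_1})^*_{P_{G(\mathbf{Z})}}(D)$ lies pointwise between these two functions of $D$, its maximum over $\mathcal{F}$ equals their common value, the f-GAN objective; applying $\min_{G\in\mathcal{G}}$ yields \eqref{fW-GAN dual_firstorder}. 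I expect the middle step to be the main obstacle: because of the density-normalization constraint the Fenchel bound $(d_f)^*_R(D)\le\mathbb{E}_R[f^*(D)]$ is in general strict, so one cannot argue by a single pointwise identity and must instead run the two-sided sandwich and let Theorem~\ref{Theorem f_gan} absorb the constant shift; and one must be careful that the Kantorovich--Rubinstein bound is invoked in the correct direction, which is exactly where the $1$-Lipschitz hypothesis on $f^*\circ D$ is consumed.
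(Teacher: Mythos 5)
Your proof is correct, but it takes a genuinely different route from the paper's. The paper first proves a generalized Kantorovich duality for the hybrid $d_{f,c}$ (its Lemma~\ref{Lemma Kantorovich hybrid}) and then computes the conjugate \emph{exactly}: after the same initial swap of suprema that you perform, it introduces the Lagrange multiplier $\lambda$ from the f-divergence conjugate, applies a second minimax swap over $(Q,\lambda)$, and invokes the identity ${OT_c}^*_P(D)=\mathbb{E}_P[D^c]$ to arrive at ${d_{f,c}}^*_{P_1}(D)=\inf_{\lambda}-\lambda+\mathbb{E}_{P_1}\bigl[\bigl(f^*\circ(D+\lambda)\bigr)^c\bigr]$; the $1$-Lipschitz hypothesis then enters only at the very end, through the fact that the $c_1$-transform of a $1$-Lipschitz function is itself. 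You instead avoid the exact computation entirely by sandwiching $(d_f)^*_P(D)\le (d_{f,W_1})^*_P(D)\le \mathbb{E}_P[f^*(D)]$ --- the lower bound from $R=P$, the upper bound from Fenchel--Young plus Kantorovich--Rubinstein applied to the $1$-Lipschitz function $f^*\circ D$ --- and let Theorem~\ref{Theorem f_gan}'s equality of the two bounding maxima close the gap. Your argument is more elementary (no second minimax swap, no c-transform calculus) and cleanly isolates where the Lipschitz hypothesis is consumed, but it is tailored to $W_1$: it leans on symmetry and the triangle inequality of the Wasserstein-1 metric through Kantorovich--Rubinstein, and it does not produce the exact conjugate formula that the paper reuses for the $W_2$ hybrid in Theorem~\ref{thm: d_f,c min}. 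Your verification of the hypotheses of Theorem~\ref{Thm: dual max disc_general} (convexity via inf-projection of a jointly convex function, lower semicontinuity via compactness) also differs from the paper's, which gets both properties from the dual representation as a supremum of linear functionals; both are sound.
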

The above theorem reveals that when the Lipschitz constant of discriminator $D$ in f-GAN is properly regularized, then solving the f-GAN problem over the regularized discriminator also minimizes the continuous divergence measure $d_{f,W_1}$. As a special case, in the vanilla GAN problem \eqref{JS-GAN dual, GAN} we only need to constrain discriminator $\tilde{D}$ to be 1-Lipschitz, which can be done via the gradient penalty \cite{gulrajani2017improved} or spectral normalization of $\tilde{D}$'s weight matrices \cite{miyato2018spectral}, and then we minimize the continuously-behaving $d_{\JS , W_1}$. This result is also consistent with \cite{miyato2018spectral}'s empirical observations that regularizing the Lipschitz constant of the discriminator improves the training performance in vanilla GAN.

Our discussion has so far focused on the mixture of f-divergence and the first order Wasserstein distance, which suggests training f-GAN over Lipschitz-bounded discriminators. As a second solution, we prove that the desired continuity property can also be achieved through the following hybrid using the second-order Wasserstein ($W_2$) distance-squared:
\begin{equation}\label{minimum-sum distance, W2}
d_{f,W_2}(P_1,P_2) := \inf_{Q}\: W^2_2(P_1,Q) + d_f(Q,P_2).
\end{equation}
\begin{thm}\label{thm: cntinuity, W2} 
Suppose $G_{\boldsymbol{\theta}}\in\mathcal{G}$ continuously changes with parameters $\boldsymbol{\theta}\in\mathbb{R}^k$. Then, for any distribution $Q$ and random vector $\mathbf{Z}$, $d_{f,W_2}(P_{G_{\boldsymbol{\theta}}(\mathbf{Z})},Q)$ will be continuous in $\boldsymbol{\theta}$. Also, if we further assume $G_{\boldsymbol{\theta}}$ is bounded and locally-Lipschitz w.r.t. $\boldsymbol{\theta}$, then the hybrid divergence $d_{f,W_2}(P_{G_{\boldsymbol{\theta}}(\mathbf{Z})},Q)$ is almost everywhere differentiable w.r.t. $\boldsymbol{\theta}$.
\end{thm}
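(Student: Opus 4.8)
The plan is to write the map $\boldsymbol{\theta}\mapsto d_{f,W_2}(P_{G_{\boldsymbol{\theta}}(\mathbf{Z})},Q)$ as a composition: first $\boldsymbol{\theta}\mapsto P_{G_{\boldsymbol{\theta}}(\mathbf{Z})}$, viewed as a map into the space of probability distributions equipped with the $W_2$ metric, and then $P\mapsto d_{f,W_2}(P,Q)$. I will show each factor is continuous, and locally Lipschitz under the stronger hypotheses; the continuity claim follows by composing continuous maps, and the a.e.\ differentiability follows because a composition of locally Lipschitz maps is locally Lipschitz, so Rademacher's theorem applies on $\mathbb{R}^k$. This mirrors the proof of Theorem \ref{thm: cntinuity} for the $W_1$ hybrid.

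First I would show that $P\mapsto d_{f,W_2}(P,Q)$ is locally Lipschitz with respect to $W_2$. Fix $P_1,P_1'$ and let $Q^\star$ be a (near-)optimal choice in $d_{f,W_2}(P_1',Q)=\inf_{Q'} W_2^2(P_1',Q')+d_f(Q',Q)$. Since $d_f\ge 0$ and $Q'=Q$ is feasible with $d_f(Q,Q)=0$, we get the a priori bound $W_2(P_1',Q^\star)\le W_2(P_1',Q)$. Using $Q^\star$ as a feasible point for $d_{f,W_2}(P_1,Q)$, then applying the triangle inequality for the metric $W_2$ and squaring,
\[
d_{f,W_2}(P_1,Q)-d_{f,W_2}(P_1',Q)\;\le\;W_2^2(P_1,Q^\star)-W_2^2(P_1',Q^\star)\;\le\;W_2^2(P_1,P_1')+2\,W_2(P_1,P_1')\,W_2(P_1',Q).
\]
Exchanging the roles of $P_1$ and $P_1'$ yields a two-sided estimate, so on any $W_2$-ball around a fixed distribution (where $W_2(\cdot,Q)$ stays bounded) the map is Lipschitz, hence continuous wherever the relevant $W_2$ distances are finite.

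Next I would control $\boldsymbol{\theta}\mapsto P_{G_{\boldsymbol{\theta}}(\mathbf{Z})}$. The pushforward of $\mathbf{Z}$ under $(G_{\boldsymbol{\theta}},G_{\boldsymbol{\theta}_0})$ is a coupling of $P_{G_{\boldsymbol{\theta}}(\mathbf{Z})}$ and $P_{G_{\boldsymbol{\theta}_0}(\mathbf{Z})}$, so
\[
W_2^2\bigl(P_{G_{\boldsymbol{\theta}}(\mathbf{Z})},P_{G_{\boldsymbol{\theta}_0}(\mathbf{Z})}\bigr)\;\le\;\mathbb{E}\bigl[\,\Vert G_{\boldsymbol{\theta}}(\mathbf{Z})-G_{\boldsymbol{\theta}_0}(\mathbf{Z})\Vert^2\,\bigr].
\]
When $G_{\boldsymbol{\theta}}$ is continuous in $\boldsymbol{\theta}$, the integrand tends to $0$ pointwise as $\boldsymbol{\theta}\to\boldsymbol{\theta}_0$; using compactness of $\mathcal{X}$ (so $G$ is uniformly bounded), or more generally uniform integrability of $\Vert G_{\boldsymbol{\theta}}(\mathbf{Z})\Vert^2$ near $\boldsymbol{\theta}_0$, the bounded convergence theorem gives $W_2(P_{G_{\boldsymbol{\theta}}(\mathbf{Z})},P_{G_{\boldsymbol{\theta}_0}(\mathbf{Z})})\to 0$, and composing with the previous step proves continuity. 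Under the additional assumption that $G_{\boldsymbol{\theta}}$ is bounded and locally Lipschitz in $\boldsymbol{\theta}$ — on a neighborhood of $\boldsymbol{\theta}_0$ there is $L(\mathbf{z})$ with $\mathbb{E}[L(\mathbf{Z})^2]<\infty$ and $\Vert G_{\boldsymbol{\theta}}(\mathbf{z})-G_{\boldsymbol{\theta}'}(\mathbf{z})\Vert\le L(\mathbf{z})\Vert\boldsymbol{\theta}-\boldsymbol{\theta}'\Vert$ — the same coupling bound gives $W_2(P_{G_{\boldsymbol{\theta}}(\mathbf{Z})},P_{G_{\boldsymbol{\theta}'}(\mathbf{Z})})\le \sqrt{\mathbb{E}[L(\mathbf{Z})^2]}\,\Vert\boldsymbol{\theta}-\boldsymbol{\theta}'\Vert$, so $\boldsymbol{\theta}\mapsto P_{G_{\boldsymbol{\theta}}(\mathbf{Z})}$ is locally Lipschitz into $W_2$. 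Composing two locally Lipschitz maps, $\boldsymbol{\theta}\mapsto d_{f,W_2}(P_{G_{\boldsymbol{\theta}}(\mathbf{Z})},Q)$ is locally Lipschitz on $\mathbb{R}^k$, hence differentiable almost everywhere by Rademacher's theorem.

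I expect the main obstacle to be the first step: since $W_2^2$ is not itself a metric, one cannot apply the triangle inequality directly to $d_{f,W_2}$, and the cross term $2\,W_2(P_1,P_1')\,W_2(P_1',Q)$ must be absorbed using the a priori bound $W_2(P_1',Q^\star)\le W_2(P_1',Q)$ that comes from nonnegativity of $d_f$ and feasibility of $Q'=Q$; keeping every $W_2$ quantity finite is exactly what forces the compact-support hypothesis (or finite second moments). The remaining ingredients — the identity-coupling bound, bounded convergence, local-Lipschitz composition, and Rademacher — are routine.
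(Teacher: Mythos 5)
Your proof is correct, and it reaches the conclusion by a somewhat different route than the paper on the key estimate. Both arguments begin the same way, by swapping the (near-)minimizers $Q^\star$ to reduce the problem to controlling a difference of squared Wasserstein distances. From there the paper bounds $\sup_Q \bigl| W_2^2(P_{G_{\boldsymbol{\theta}}(\mathbf{Z})},Q)-W_2^2(P_{G_{\boldsymbol{\theta}'}(\mathbf{Z})},Q)\bigr|$ directly: it pushes both generators through a shared coupling with $Q$, expands $\Vert G_{\boldsymbol{\theta}}(\mathbf{Z})-\mathbf{X}'\Vert^2-\Vert G_{\boldsymbol{\theta}'}(\mathbf{Z})-\mathbf{X}'\Vert^2$, and absorbs the cross term using the uniform bound $\Vert\mathbf{X}'\Vert\le R$ from compactness of $\mathcal{X}$, obtaining a bound that is uniform over all $Q$. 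You instead factor the map as $\boldsymbol{\theta}\mapsto P_{G_{\boldsymbol{\theta}}(\mathbf{Z})}\mapsto d_{f,W_2}(\cdot,Q)$ and prove a local Lipschitz property of the second factor with respect to the $W_2$ metric, using the triangle inequality for $W_2$ together with the a priori localization $W_2(P_1',Q^\star)\le W_2(P_1',Q)$ that follows from $d_f\ge 0$ and feasibility of $Q'=Q$; this observation does not appear in the paper and is the genuinely new ingredient. Your decomposition is more modular and slightly more general (it only needs the relevant $W_2$ distances to be finite, e.g. finite second moments, rather than a uniform radius bound), and it makes the a.e.\ differentiability claim an immediate consequence of Rademacher applied to a composition of locally Lipschitz maps; the paper's computation is more self-contained and yields the explicit constant $2(T+R)L$ for the Lipschitz bound. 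The remaining steps — the identity coupling through $\mathbf{Z}$, bounded convergence for continuity, and the Lipschitz estimate under the stronger hypotheses — coincide with the paper's. The only points to keep explicit are that the infimum defining $d_{f,W_2}(P_1',Q)$ need not be attained (your parenthetical ``near-optimal'' with an $\epsilon$ sent to zero handles this) and that the local Lipschitz constant $\sqrt{\mathbb{E}[L(\mathbf{Z})^2]}$ is finite, which holds under the paper's assumption of a deterministic Lipschitz constant.
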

The following result shows that minimizing $d_{f,W_2}$ reduces to f-GAN problem where the discriminator is being adversarially trained. 
\begin{thm}\label{thm: d_f,c min}
Assume $d_f$ and $\mathcal{F}$ satisfy the assumptions in Theorem \ref{Example: d_F W_1}. Then, the minimax problem in Theorem \ref{Thm: dual max disc_general} corresponding to the hybrid $d_{f,W_2}$ divergence reduces to
\begin{align}\label{fW-GAN dual}
\min_{ G\in \mathcal{G} }\;  \max_{D \in \mathcal{F}}\;  \mathbb{E}[D({\mathbf{X}})] + \mathbb{E}\bigl[\, \min_{\mathbf{u}}\, -f^* \bigl(D( \, G(\mathbf{Z}) + \mathbf{u}\,)\bigr) + \Vert  \mathbf{u} \Vert^2  \,\bigr].
\end{align}
\end{thm}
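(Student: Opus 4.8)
The plan is to compute the conjugate $d^*_{P}$ of the hybrid divergence $d = d_{f,W_2}$ defined in \eqref{minimum-sum distance, W2} and substitute it into the left-hand side of Theorem \ref{Thm: dual max disc_general}. The first task is to verify that $d_{f,W_2}$ is admissible there: non-negativity is immediate since $W_2^2\ge 0$ and $d_f\ge 0$; the map $(Q',Q)\mapsto W_2^2(P_1,Q')+d_f(Q',Q)$ is jointly convex and jointly lower semicontinuous (convexity and weak lower semicontinuity of $W_2^2$ in its argument, and joint convexity/lsc of the f-divergence, are classical), so its partial infimum over the auxiliary distribution $Q'$ is convex in $Q$ and --- using the weak compactness of $\mathcal{P}(\mathcal{X})$ for the compact support set $\mathcal{X}$ --- lower semicontinuous in $Q$, with the infimum attained.

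The core step unfolds the infimum inside $d_{f,W_2}$ into the supremum defining the conjugate. Writing $P = P_{G(\mathbf{Z})}$,
\begin{align*}
 d^*_{P}(D) &= \sup_{Q}\Bigl\{\mathbb{E}_Q[D(\mathbf{X})] - \inf_{Q'}\bigl(W_2^2(P,Q') + d_f(Q',Q)\bigr)\Bigr\} \\
 &= \sup_{Q'}\Bigl\{-W_2^2(P,Q') + \sup_{Q}\bigl(\mathbb{E}_Q[D(\mathbf{X})] - d_f(Q',Q)\bigr)\Bigr\} \\
 &= \sup_{Q'}\Bigl\{\mathbb{E}_{Q'}\bigl[f^*(D(\mathbf{X}))\bigr] - W_2^2(P,Q')\Bigr\},
\end{align*}
where the second equality merely swaps the two suprema and the third invokes the f-divergence conjugate identity $\sup_{Q}\{\mathbb{E}_Q[D] - d_f(Q',Q)\} = \mathbb{E}_{Q'}[f^*\circ D]$ --- precisely the computation performed in the proof of Theorem \ref{Theorem f_gan}, which is where the assumptions inherited from Theorem \ref{Example: d_F W_1} are used (specifically $f^*$ non-decreasing and $\mathcal{F}$ closed under adding constants, the latter absorbing the normalization constraint on $Q$ into a shift of $D$).

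What remains is the conjugate of $\OT_{c_2}(P,\cdot)=W_2^2(P,\cdot)$ evaluated at the function $g:=f^*\circ D$. Using $-W_2^2(P,Q')=\sup_{M\in\Pi(P,Q')}\mathbb{E}_M[-c_2(\mathbf{X},\mathbf{X}')]$ and absorbing $\sup_{Q'}$ so that the joint supremum runs over all couplings $M$ with first marginal $P$, then disintegrating such a coupling against $P$, this supremum becomes $\mathbb{E}_P\bigl[g^{c_2}(\mathbf{X})\bigr]$, the expected $c_2$-transform --- the same identity underlying Theorem \ref{thm: Wasseaarstein} via Kantorovich duality \eqref{Kantorovich duality}. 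Since $c_2(\mathbf{x},\mathbf{x}')=\Vert\mathbf{x}-\mathbf{x}'\Vert^2$, writing $\mathbf{x}'=\mathbf{x}+\mathbf{u}$ gives $g^{c_2}(\mathbf{x})=\sup_{\mathbf{u}}f^*(D(\mathbf{x}+\mathbf{u}))-\Vert\mathbf{u}\Vert^2$, the supremum being attained because $f^*\circ D$ is bounded on the compact set $\mathcal{X}$ while $\Vert\mathbf{u}\Vert^2$ grows. Thus $d^*_{P_{G(\mathbf{Z})}}(D)=\mathbb{E}\bigl[\max_{\mathbf{u}}f^*(D(G(\mathbf{Z})+\mathbf{u}))-\Vert\mathbf{u}\Vert^2\bigr]$, and substituting this into the left-hand side of \eqref{Eq: min-min GAN unregularized_Lagrangian} while moving the minus sign inside the expectation (so $\max_{\mathbf{u}}$ turns into $\min_{\mathbf{u}}$) reproduces \eqref{fW-GAN dual}.

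The delicate point, which I would handle carefully, is the transport-conjugate step: pulling the supremum out of the integral to obtain $\mathbb{E}_P[\sup_{\mathbf{u}}(\cdots)]$ needs a measurable selection of a maximizing $\mathbf{u}(\mathbf{x})$, and one must also confirm that the two interchanges of suprema and the appeal to the f-divergence conjugate identity remain valid in this infinite-dimensional setting --- all of which rest on the compactness of $\mathcal{X}$ and the continuity assumptions on $\mathcal{F}$ already in force via Theorems \ref{Thm: dual max disc_general}--\ref{Theorem f_gan}. Modulo these standard measure-theoretic verifications, the whole reduction is the convex-analytic principle that the conjugate of a minimum-sum (infimal-convolution-type) hybrid $\inf_{Q'}\{W_2^2(P,Q')+d_f(Q',Q)\}$ factors as the optimal-transport conjugate composed with the f-divergence conjugate.
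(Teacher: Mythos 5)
Your overall architecture is the paper's: the paper proves this theorem by specializing a generalized form of Theorem \ref{Example: d_F W_1} (proved in the Appendix for an arbitrary cost $c$) to $c_2(\mathbf{x},\mathbf{x}')=\Vert\mathbf{x}-\mathbf{x}'\Vert^2$, and that proof computes ${d_{f,c}}^*_{P_1}(D)$ exactly as you do --- unfold the infimum over the auxiliary distribution into the conjugate's supremum, swap the two suprema, reduce the inner one to the f-divergence conjugate, recognize what remains as the optimal-transport conjugate $\mathbb{E}_{P_1}\bigl[(f^*\circ D)^{c_2}(\mathbf{X})\bigr]$, and finish with the substitution $\mathbf{u}=\mathbf{x}'-G(\mathbf{Z})$. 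The one step that is not right as you state it is the identity $\sup_{Q}\{\mathbb{E}_Q[D]-d_f(Q',Q)\}=\mathbb{E}_{Q'}[f^*\circ D]$: over probability measures the normalization constraint forces a Lagrange multiplier, and the correct pointwise identity (the paper's Lemma on ${d_f}^*_P$) is ${d_f}^*_{Q'}(D)=\min_{\lambda\in\mathbb{R}}\{-\lambda+\mathbb{E}_{Q'}[f^*(D+\lambda)]\}$. You correctly observe that closure of $\mathcal{F}$ under constant additions eventually absorbs $\lambda$, but that absorption can only happen at the outer $\max_{D\in\mathcal{F}}$, and to move $\lambda$ there one must first commute $\min_\lambda$ past the remaining $\sup_{Q'}$ --- the paper's step (h), justified by the generalized Sion minimax theorem using weak compactness of the distributions on compact $\mathcal{X}$ and concavity in $\lambda$ versus convexity and lower semicontinuity in $Q'$. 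Dropping $\lambda$ where you do yields only an upper bound on $d^*_{P}(D)$ for fixed $D$, i.e.\ one direction of the desired equality; the minimax interchange is what closes the other direction. With that step restored your argument coincides with the paper's. (Your admissibility check is fine and in fact slightly more direct: you obtain convexity of $d_{f,W_2}$ in its second argument as a partial infimum of a jointly convex function, whereas the paper routes this through a Kantorovich-type dual representation that also invokes the symmetry of $d_f$.)
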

The above result reduces minimizing the hybrid $d_{f,W_2}$ divergence to an f-GAN minimax game with a new third player. Here the third player assists the generator by perturbing the generated fake samples in order to make them harder to be distinguished from the real samples by the discriminator. The cost for perturbing a fake sample $G(\mathbf{Z})$ to $G(\mathbf{Z})+\mathbf{u}$ will be $\Vert \mathbf{u}\Vert^2$, which constrains the power of the third player who can be interpreted as an adversary to the discriminator. To implement the game between these three players, we can adversarially learn the discriminator while we are training GAN, using the Wasserstein risk minimization (WRM) adversarial learning scheme discussed in \cite{sinha2018certifiable}.

\section{Numerical Experiments}
To evaluate our theoretical results, we used the CelebA \cite{liu2015faceattributes} and LSUN-bedroom \cite{yu2015lsun} datasets. Furthermore, in the Appendix we include the results of our experiments over the MNIST \cite{lecun1998mnist} dataset. We considered vanilla GAN \cite{goodfellow2014generative} with the minimax formulation in \eqref{JS-GAN dual, GAN} and DCGAN \cite{radford2015unsupervised} convolutional architecture for discriminator and generator. We used the code provided by \cite{gulrajani2017improved} and trained DCGAN via Adam optimizer \cite{kingma2014adam} for 200,000 generator iterations. We applied 5 discriminator updates for each generator update.

\begin{figure}[h]
\centering 
  \includegraphics[width=.975\linewidth]{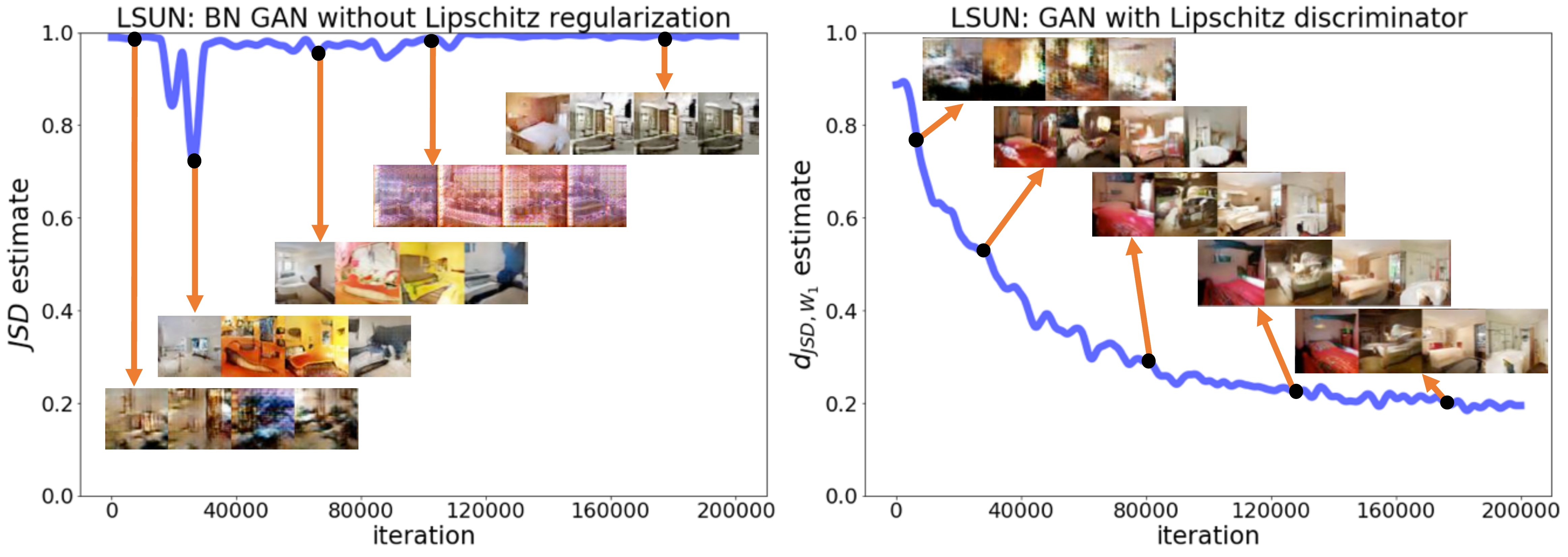} 
  \captionof{figure}{Divergence estimate in DCGAN trained over LSUN samples, (left) JS-divergence in standard DCGAN regularized with batch normalization, (right) hybrid $d_{\JS , W_1}$ in DCGAN with 1-Lipschitz discriminator regularized via spectral normalization. }  
  \label{lsun-fig}
\end{figure}
Figure \ref{lsun-fig} shows how the discriminator loss evaluated over 2000 validation samples, which is an estimate of the divergence measure, changes as we train the DCGAN over LSUN samples. Using standard DCGAN regularizied by only batch normalization (BN) \cite{ioffe2015batch}, we observed (Figure \ref{lsun-fig}-left) that the JS-divergence estimate always remains close to its maximum value $1$ and also poorly correlates with the visual quality of  generated samples. In this experiment, the GAN training failed and led to mode collapse starting at about the 110,000th iteration. On the other hand, after replacing BN with spectral normalization (SN) \cite{miyato2018spectral} to ensure the discriminator's Lipschitzness, the discriminator loss decreased in a desired monotonic fashion (Figure \ref{lsun-fig}-right). This observation is consistent with Theorems \ref{thm: cntinuity} and \ref{Example: d_F W_1} showing that the discriminator loss becomes an estimate for the hybrid $d_{\JS ,W_1}$ divergence changing continuously with the generator parameters. Also, the samples generated by the Lipschitz-regularized DCGAN looked qualitatively better and correlated well with the estimate of $d_{\JS ,W_1}$ divergence.

Figure \ref{celeba-fig} shows the results of similar experiments over the CelebA dataset. Again, we observed (Figure \ref{celeba-fig}-top left) that the JS-divergence estimate remains close to $1$ while training DCGAN with BN. However, after applying two different Lipschitz regularization methods, SN and the gradient penalty (GP) \cite{gulrajani2017improved} in Figures \ref{celeba-fig}-top right and bottom left,  we observed that the hybrid $d_{\JS , W_1}$ changed nicely and monotonically, and correlated properly with the sharpness of samples generated. Figure \ref{celeba-fig}-bottom right shows that a similar desired behavior can also be achieved using the second-order hybrid $d_{\JS , W_2}$ divergence. In this case, we trained the DCGAN discriminator via the WRM adversarial learning scheme \cite{sinha2018certifiable}.
\begin{figure}[t]
\centering 
  \includegraphics[width=1.\linewidth]{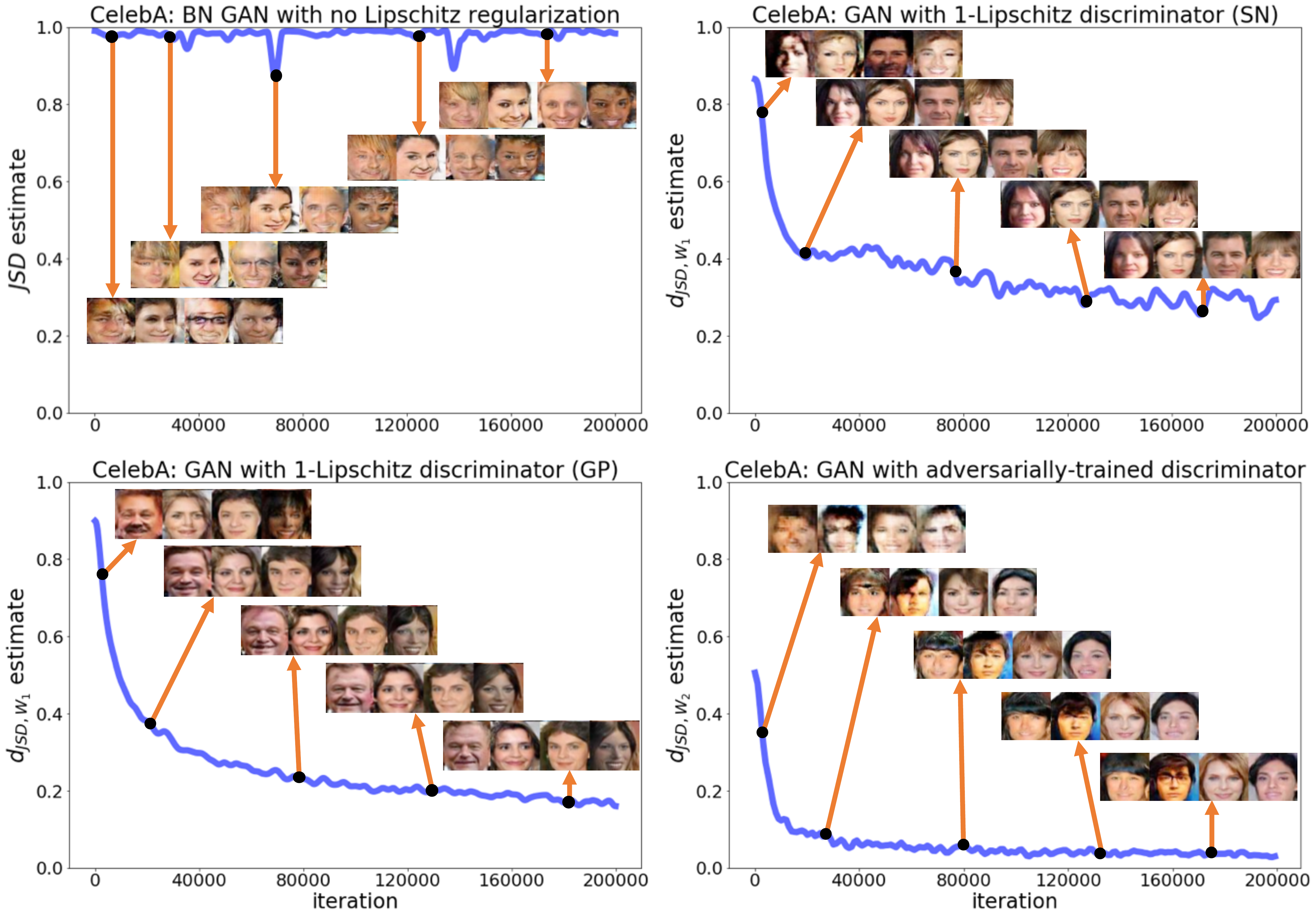} 
  \captionof{figure}{Divergence estimate in DCGAN trained over CelebA samples, (top-left) JS-divergence in DCGAN regularized with batch normalization, (top-right) hybrid $d_{\JS , W_1}$ in DCGAN with spectrally-normalized discriminator, (bottom-left) hybrid $d_{\JS , W_1}$ in DCGAN with 1-Lipschitz discriminator regularized via the gradient penalty, (bottom-right) hybrid $d_{\JS , W_2}$ in DCGAN with discriminator being adversarially-trained using WRM.} 
  \label{celeba-fig}
\end{figure}
\vspace*{-1.5mm}
\section{Related Work}
Theoretical studies of GAN have focused on three different aspects: approximation, generalization, and optimization. On the approximation properties of GAN, \cite{liu2017approximation} studies GAN's approximation power using a moment-matching approach. The authors view the maximized discriminator objective as an $\mathcal{F}$-based adversarial divergence, showing that the adversarial divergence between two distributions takes its minimum value if and only if the two distributions share the same moments over $\mathcal{F}$. Our convex duality framework interprets their result and further draws the connection to the original divergence measure. \cite{nock2017f} studies the f-GAN problem through an information geometric approach based on the Bregman divergence and its connection to f-divergence. 

Analyzing GAN's generalization performance is another problem of interest in several recent works. \cite{arora2017generalization} proves generalization guarantees for GANs in terms of $\mathcal{F}$-based distance measures. \cite{arora2017gans} uses an elegant approach based on the Birthday Paradox to empirically study the generalizibility of GAN's learned models.  \cite{santurkar2017classification} develops a quantitative approach for examining diversity and generalization in GAN's  learned distribution. \cite{zhang2018on} studies approximation-generalization trade-offs in GAN by analyzing the discriminative power of  $\mathcal{F}$-based distances. Regarding optimization properties of GAN, \cite{chen2018training,zhao2018information} propose duality-based methods for improving the optimization performance in training deep generative models. \cite{roth2017stabilizing} suggests applying noise convolution with input data for boosting the training performance in f-GAN. Moreover, several other works including \cite{nagarajan2017gradient,mescheder2017numerics,daskalakis2017training,
feizi2017understanding,sanjabi2018solving} 
 explore the optimization and stability properties of training GANs. Finally, we note that the same convex analysis approach used in this paper has further provided a powerful theoretical framework to analyze various supervised and unsupervised learning problems \cite{dudik2007maximum,razaviyayn2015discrete, farnia2016minimax,fathony2016adversarial,fathony2017adversarial}.

{\noindent \textbf{Acknowledgments:}} We are grateful for support under a  Stanford Graduate Fellowship, the National Science Foundation grant under CCF-1563098, and the Center for Science of Information (CSoI), an NSF Science and Technology Center under grant agreement CCF-0939370.

\newpage \small
\bibliographystyle{unsrt}

 \normalsize

\section{Appendix}
\subsection{Additional numerical results}
\subsubsection{LSUN divergence estimates for different training schemes}
Figure \ref{lsun_complete} shows the complete divergence estimates over LSUN dataset for the GAN training schemes described in the main text. While the hybrid divergence measures $d_{\JS ,W_1}$, $d_{\JS , W_2}$ decreased smoothly as the DCGAN was being trained, the JS-divergence always remained close to its maximum value $1$ which led to lower-quality produced samples.
\begin{figure}[h]
\centering 
  \includegraphics[width=1.\linewidth]{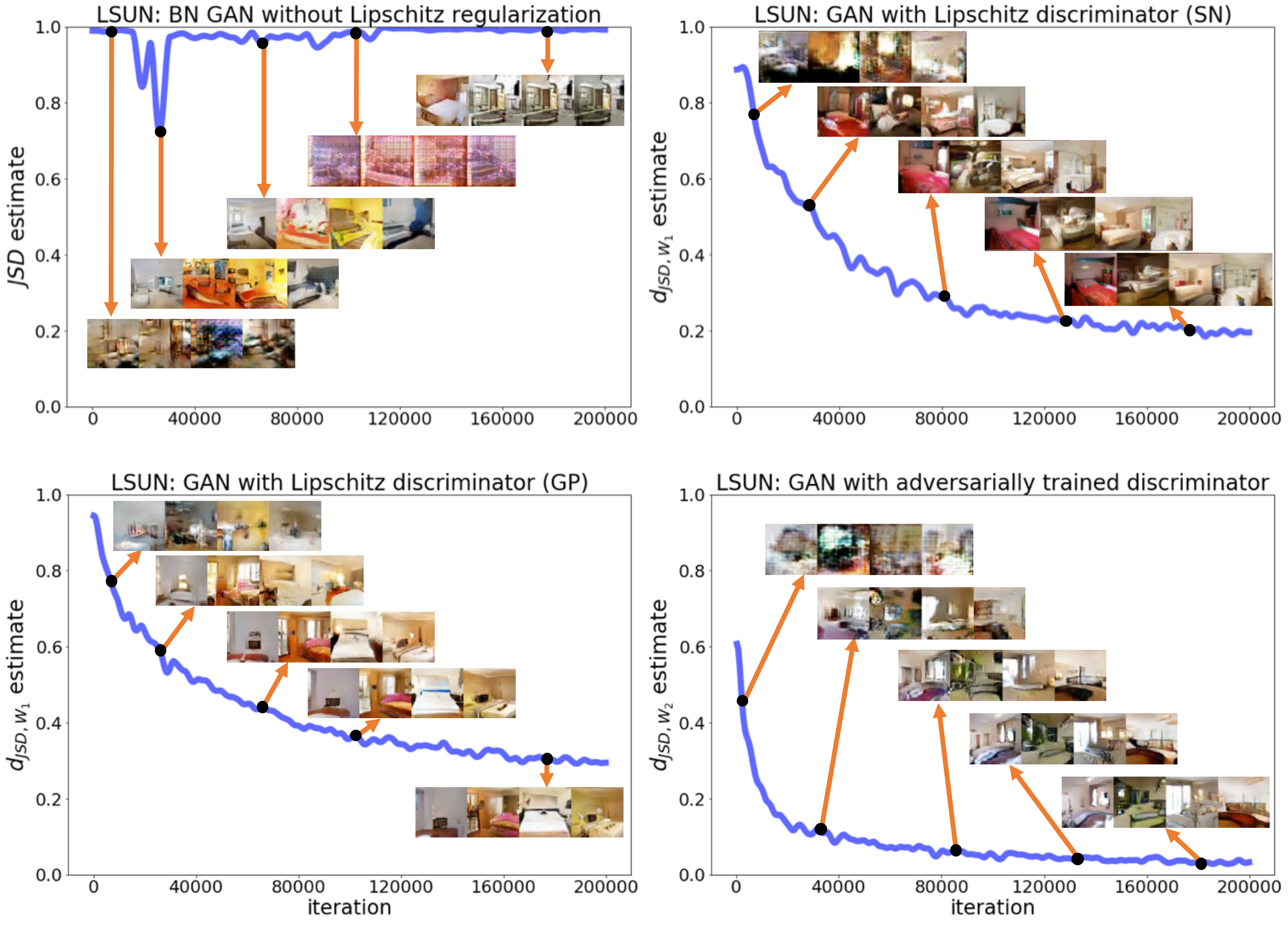} 
  \captionof{figure}{Divergence estimate in DCGAN trained over LSUN samples, (top-left) JS-divergence in DCGAN regularized with batch normalization, (top-right) hybrid $d_{\JS , W_1}$ in DCGAN with spectrally-normalized discriminator, (bottom-left) hybrid $d_{\JS , W_1}$ in DCGAN with 1-Lipschitz discriminator regularized via the gradient penalty, (bottom-right) hybrid $d_{\JS , W_2}$ in DCGAN with discriminator being adversarially-trained using WRM.} 
  \label{lsun_complete}
\end{figure}

\subsubsection{CelebA, LSUN, MNIST images generated by different trainings of DCGAN}
Figures \ref{celeba-samples-fig}, \ref{lsun-samples-fig}, and \ref{mnist-samples-fig} show the CelebA, LSUN, and MNIST samples generated by vanilla DCGAN trained via the different methods described in the main text. Observe that applying Lipschitz regularization and  adversarial training to the discriminator consistently result in the highest quality generator output samples. We note that tight SN in these figures refers to \cite{tsuzuku2018lipschitz}'s spectral normalization method for convolutional layers, which precisely normalizes a conv layer's spectral norm and hence guarantees the $1$-Lipschitzness of the discriminator neural net. Note that for non-tight SN we use the original heuristic for normalizing convolutional layers' operator norm introduced in \cite{miyato2018spectral}.
\begin{figure}[h]
\centering 
  \includegraphics[width=.975\linewidth]{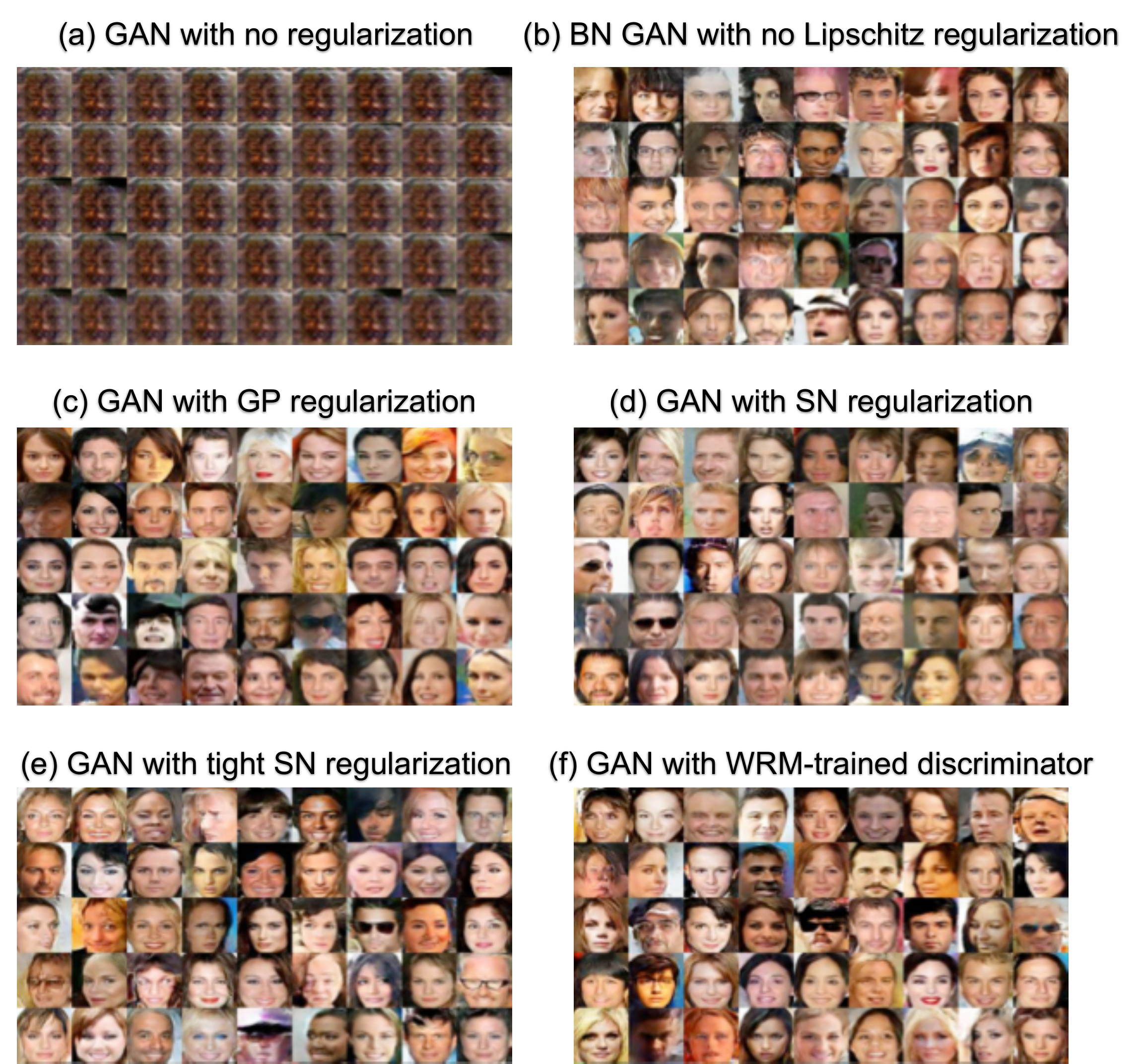}
  \captionof{figure}{Samples generated by DCGAN trained over CelebA samples}  \label{celeba-samples-fig}
\end{figure}
\begin{figure}
\centering 
  \includegraphics[width=.975\linewidth]{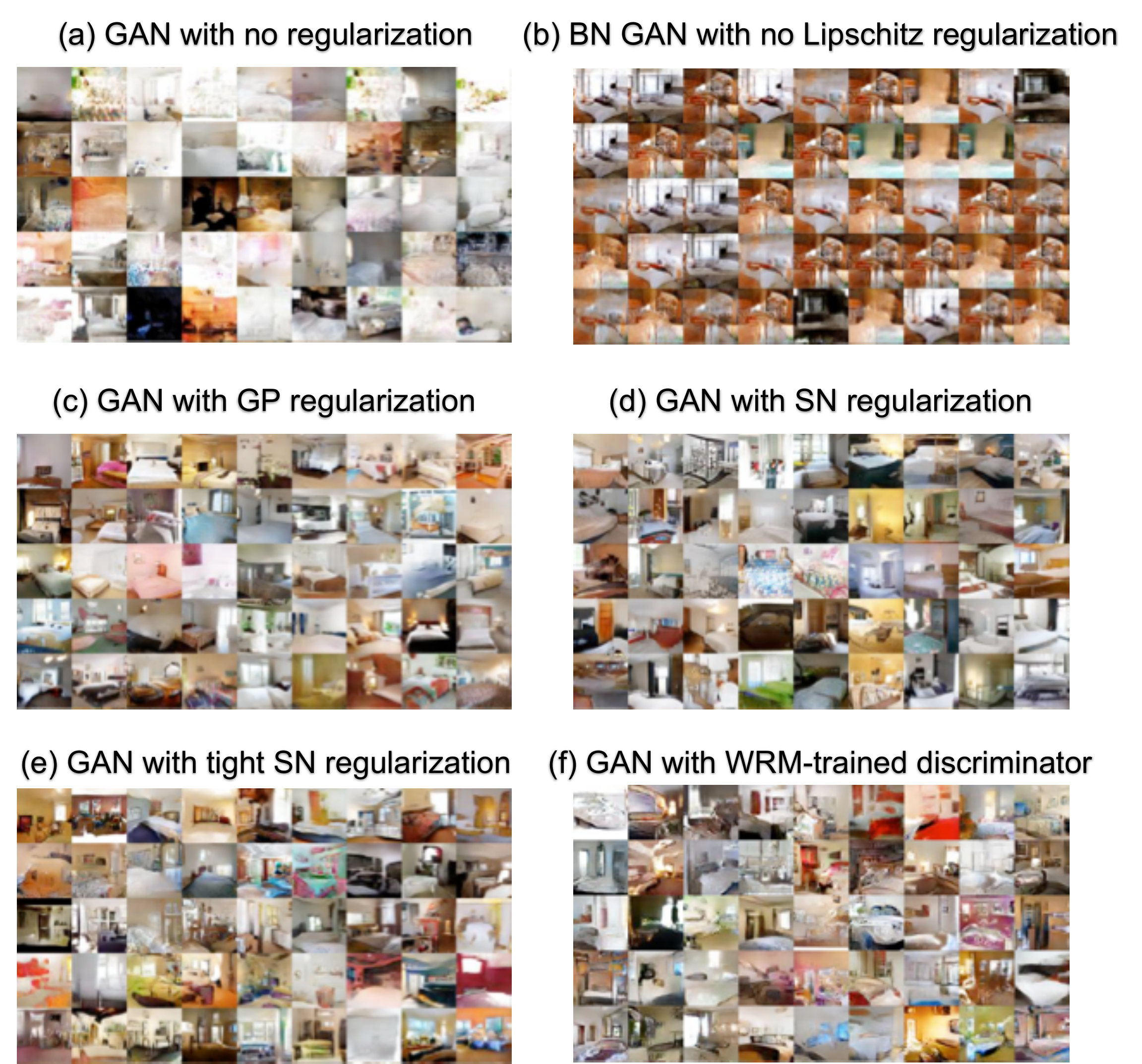}
  \captionof{figure}{Samples generated by DCGAN trained over LSUN-bedroom samples}  \label{lsun-samples-fig}
\end{figure}
\begin{figure}
\centering 
  \includegraphics[width=.975\linewidth]{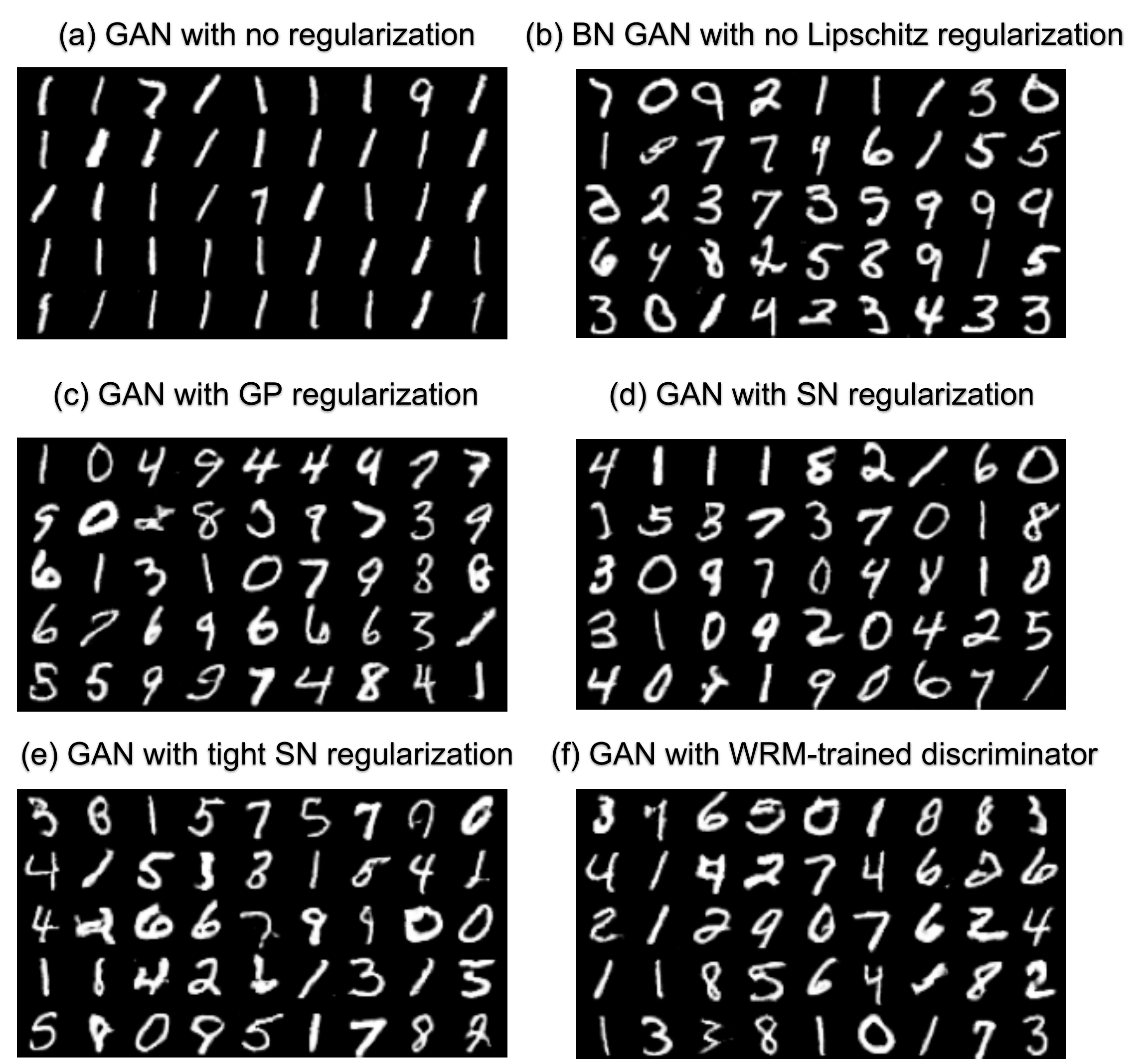}
  \captionof{figure}{Samples generated by DCGAN trained over MNIST samples}  \label{mnist-samples-fig}
\end{figure}
\subsection{Proof of Theorem 1}
Theorem 1 and Corollary 1 directly result from the following two lemmas.
\begin{lemma} \label{Lemma: dual max disc_general}
Suppose divergence $d(P,Q)$ is non-negative, lower semicontinuous and convex in distribution $Q$. Consider a convex subset of continuous functions $\mathcal{F}$ and assume support set $\mathcal{X}$ is compact. Then, the following duality holds for any pair of distributions $P_1,P_2$:
\begin{equation}\label{Eq: dual max disc}
 \max_{D \in \mathcal{F}} \,    \mathbb{E}_{P_2}[D(\mathbf{X})] -  d^*_{P_1}(D)  \, = \, \min_{Q}\:\bigl\{\, d( P_1 , Q) + \max_{D\in\mathcal{F}}\,\{ \,\mathbb{E}_{P_2}[D(\mathbf{X})] - \mathbb{E}_{Q}[D(\mathbf{X})]\,  \}\,\bigr\}.
\end{equation}
\end{lemma}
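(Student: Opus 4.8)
The plan is to reduce the claimed identity \eqref{Eq: dual max disc} to a single application of a minimax theorem. First I would rewrite both sides in terms of a common bilinear-type objective. For fixed $Q$, the bracketed quantity on the right-hand side equals $\max_{D\in\mathcal{F}}\Phi(Q,D)$, where $\Phi(Q,D) := d(P_1,Q) + \mathbb{E}_{P_2}[D(\mathbf{X})] - \mathbb{E}_Q[D(\mathbf{X})]$ (the term $d(P_1,Q)$ does not depend on $D$, so it may be moved inside the maximum); hence RHS $= \min_Q\max_{D\in\mathcal{F}}\Phi(Q,D)$. For the left-hand side, recalling the definition \eqref{Conjugate: def} that $d^*_{P_1}(D) = \sup_Q\{\mathbb{E}_Q[D(\mathbf{X})] - d(P_1,Q)\}$, we get $\mathbb{E}_{P_2}[D(\mathbf{X})] - d^*_{P_1}(D) = \mathbb{E}_{P_2}[D(\mathbf{X})] + \inf_Q\{d(P_1,Q) - \mathbb{E}_Q[D(\mathbf{X})]\} = \inf_Q\Phi(Q,D)$, so LHS $= \max_{D\in\mathcal{F}}\inf_Q\Phi(Q,D)$. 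Thus the lemma is exactly the statement that $\Phi$ has a saddle value: $\min_Q\max_{D\in\mathcal{F}}\Phi = \max_{D\in\mathcal{F}}\min_Q\Phi$.

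Next I would verify the hypotheses of Sion's minimax theorem for $\Phi$ on (Borel probability measures on $\mathcal{X}$) $\times\,\mathcal{F}$. The decisive structural facts are: (i) since $\mathcal{X}$ is compact, the set of probability measures on $\mathcal{X}$ is a convex, weak-$*$ compact subset of $C(\mathcal{X})^*$ (Banach--Alaoglu / Prokhorov); (ii) $\mathcal{F}$ is convex by hypothesis; (iii) $Q\mapsto\Phi(Q,D)$ is convex, because $d(P_1,\cdot)$ is convex and $Q\mapsto -\mathbb{E}_Q[D(\mathbf{X})]$ is linear, and it is weak-$*$ lower semicontinuous, because $d(P_1,\cdot)$ is lsc by assumption and $Q\mapsto\mathbb{E}_Q[D(\mathbf{X})]$ is weak-$*$ continuous for continuous $D$; (iv) $D\mapsto\Phi(Q,D)$ is affine, hence concave, and continuous in $D$ with respect to the sup-norm on $\mathcal{F}\subseteq C(\mathcal{X})$, using $|\mathbb{E}_\mu[D(\mathbf{X})]|\le\|D\|_\infty$. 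Sion's theorem then gives $\min_Q\sup_{D\in\mathcal{F}}\Phi = \sup_{D\in\mathcal{F}}\min_Q\Phi$; the outer infimum over $Q$ is attained because $\sup_{D\in\mathcal{F}}\Phi(\cdot,D)$ is a supremum of weak-$*$ lsc functions and hence weak-$*$ lsc on a compact set, which justifies writing $\min_Q$.

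Along the way I would record that the expressions are well defined: since $d\ge 0$ and every $D\in\mathcal{F}$ is continuous on compact $\mathcal{X}$ and hence bounded, $d^*_{P_1}(D)\le\|D\|_\infty<\infty$, and since $d(P_1,P_1)=0$ for a divergence, $d^*_{P_1}(D)\ge\mathbb{E}_{P_1}[D(\mathbf{X})]>-\infty$; so the conjugate is finite-valued and the left-hand maximization is genuine. Theorem~\ref{Thm: dual max disc_general} then follows by applying \eqref{Eq: dual max disc} with $P_1 = P_{G(\mathbf{Z})}$ and $P_2 = P_\mathbf{X}$ and taking $\min_{G\in\mathcal{G}}$ on both sides, and Corollary~\ref{Cor: linear} follows once the second lemma is in hand.

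The main obstacle will be the rigorous invocation of the minimax theorem: one must pin down the right pair of topologies — the weak-$*$ topology on the measure side, so that the probability simplex is compact and convex, and the sup-norm topology on $\mathcal{F}\subseteq C(\mathcal{X})$ — and then check that the assumed lower semicontinuity of $d(P_1,\cdot)$ is precisely with respect to that weak-$*$ topology, which is exactly what makes $Q\mapsto\Phi(Q,D)$ weak-$*$ lsc on the compact simplex. Everything else (the convexity/affineness clauses, the finiteness of $d^*_{P_1}$, and the algebraic rewriting of both sides into $\min_Q\max_D\Phi$ and $\max_D\min_Q\Phi$) is routine bookkeeping.
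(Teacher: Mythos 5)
Your proof is correct and takes essentially the same route as the paper's: both rewrite the two sides as $\min_Q\max_{D\in\mathcal{F}}\Phi$ and $\max_{D\in\mathcal{F}}\inf_Q\Phi$ for the same objective and then invoke the generalized Sion minimax theorem, using convexity of $\mathcal{F}$, convexity and weak compactness of the probability measures on compact $\mathcal{X}$, lower semicontinuity and convexity in $Q$, and linearity in $D$. Your extra bookkeeping on the choice of topologies, attainment of the minimum, and finiteness of $d^*_{P_1}$ only makes explicit what the paper leaves implicit.
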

\begin{proof}
Note that
\begin{align}
  &\quad \min_{Q}\:\bigl\{\, d( P_1 , Q) + \max_{D\in\mathcal{F}}\,\{ \,\mathbb{E}_{P_2}[D(\mathbf{X})] - \mathbb{E}_{Q}[D(\mathbf{X})]\,  \}\,\bigr\} \, \nonumber \\
   &=  \, \min_{Q}\; \max_{D\in\mathcal{F}}\,\bigl\{\, d( P_1 , Q) + \mathbb{E}_{P_2}[D(\mathbf{X})] - \mathbb{E}_{Q}[D(\mathbf{X})]\,  \bigr\} \nonumber \\
& \stackrel{(a)}{=}  \, \max_{D\in\mathcal{F}}\;\min_{Q}\; \,\bigl\{\, d( P_1 , Q) + \mathbb{E}_{P_2}[D(\mathbf{X})] - \mathbb{E}_{Q}[D(\mathbf{X})]\,  \bigr\}   \\
& \stackrel{}{=}  \, \max_{D\in\mathcal{F}}\; \bigl\{\, \mathbb{E}_{P_2}[D(\mathbf{X})] + \min_{Q}\,\{ d( P_1 , Q) - \mathbb{E}_{Q}[D(\mathbf{X})]\, \}\, \bigr\}  \nonumber  \\
& \stackrel{}{=}  \, \max_{D\in\mathcal{F}}\; \bigl\{\, \mathbb{E}_{P_2}[D(\mathbf{X})] - \max_{Q}\,\{ \mathbb{E}_{Q}[D(\mathbf{X})]-d( P_1 , Q)  \, \}\, \bigr\}  \nonumber  \\
& \stackrel{(b)}{=}  \, \max_{D\in\mathcal{F}}\;  \mathbb{E}_{P_2}[D(\mathbf{X})] -d^*_{P_1}(D).  \nonumber
\end{align}
Here (a) is a consequence of the generalized Sion's minimax theorem \cite{borwein2016very}, because the space of probability measures on compact $\mathcal{X}$ is convex and weakly compact \cite{billingsley2013convergence}, $\mathcal{F}$ is assumed to be convex, the minimiax objective is lower semicontinuous and convex in $Q$ and linear in $D$. (b) holds according to the conjugate $d^*_P$'s definition. 
\end{proof}


\begin{lemma}
Assume divergence $d(P,Q)$ is non-negative, lower semicontinuous and convex in distribution $Q$ over compact $\mathcal{X}$. Consider a linear space subset of continuous functions $\mathcal{F}$. Then, the following duality holds for any pair of distributions $P_1,P_2$:
\begin{equation}\label{Eq: dual max disc_linear}
  \min_{Q\in \mathcal{P}_\mathcal{F}(P_2)}\; d( P_1 , Q) \, = \, \max_{D \in \mathcal{F}} \,    \mathbb{E}_{P_2}[D(\mathbf{X})] - d^*_{P_1}(D).
\end{equation}
\end{lemma}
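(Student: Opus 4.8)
The plan is to obtain this as an almost immediate consequence of the preceding lemma, equation~\eqref{Eq: dual max disc}, by specializing its convex subset $\mathcal{F}$ to a linear space. Writing $h(Q) := \max_{D\in\mathcal{F}}\bigl\{\mathbb{E}_{P_2}[D(\mathbf{X})]-\mathbb{E}_{Q}[D(\mathbf{X})]\bigr\}$ for the moment-mismatch penalty, I would first invoke Lemma~\ref{Lemma: dual max disc_general} with the same distributions $P_1,P_2$ (a linear space is in particular convex, so its hypotheses are met), giving
\[
\max_{D \in \mathcal{F}}\, \mathbb{E}_{P_2}[D(\mathbf{X})] - d^*_{P_1}(D) \;=\; \min_{Q}\,\bigl\{\, d(P_1,Q) + h(Q)\,\bigr\}.
\]
It then suffices to show that $\min_{Q}\{d(P_1,Q)+h(Q)\} = \min_{Q\in\mathcal{P}_\mathcal{F}(P_2)} d(P_1,Q)$.

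The key step is identifying $h$ as the $\{0,+\infty\}$-valued indicator function of the moment-matching set $\mathcal{P}_\mathcal{F}(P_2)$. Since $\mathcal{F}$ is a linear space, $0\in\mathcal{F}$, hence $h(Q)\ge 0$ for every $Q$. If $Q\in\mathcal{P}_\mathcal{F}(P_2)$, then $\mathbb{E}_{P_2}[D(\mathbf{X})]=\mathbb{E}_{Q}[D(\mathbf{X})]$ for all $D\in\mathcal{F}$ by definition of $\mathcal{P}_\mathcal{F}$, so $h(Q)=0$. If instead $Q\notin\mathcal{P}_\mathcal{F}(P_2)$, pick $D_0\in\mathcal{F}$ with $\delta := \mathbb{E}_{P_2}[D_0(\mathbf{X})]-\mathbb{E}_{Q}[D_0(\mathbf{X})]\neq 0$; because $\lambda D_0\in\mathcal{F}$ for every $\lambda\in\mathbb{R}$ and the inner objective is linear in $D$, evaluating at $\lambda D_0$ with $\lambda\to+\infty$ when $\delta>0$ (resp. $\lambda\to-\infty$ when $\delta<0$) forces $h(Q)=+\infty$. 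This is precisely the place where linearity of $\mathcal{F}$, rather than mere convexity, is essential, and it is the only step that is not pure bookkeeping.

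With $h$ so characterized, the minimization over all $Q$ in the right-hand side of the displayed identity discards every $Q$ outside $\mathcal{P}_\mathcal{F}(P_2)$ and reduces to $\min_{Q\in\mathcal{P}_\mathcal{F}(P_2)} d(P_1,Q)$. This constrained minimum is well-posed: $\mathcal{P}_\mathcal{F}(P_2)$ is nonempty since it contains $P_2$, and, being cut out by continuous moment equalities, it is weakly closed in the weakly compact space of probability measures on the compact set $\mathcal{X}$, so the infimum is attained by lower semicontinuity of $d(P_1,\cdot)$. Chaining this reduction with Lemma~\ref{Lemma: dual max disc_general} yields \eqref{Eq: dual max disc_linear}. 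I do not anticipate a genuine obstacle beyond correctly handling the two sign cases in the unboundedness argument for $h(Q)$.
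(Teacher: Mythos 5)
Your proposal is correct and follows essentially the same route as the paper: both invoke the preceding convex-set lemma and then use the linearity of $\mathcal{F}$ (scaling $D$ by $\lambda\to\pm\infty$) to show the moment-mismatch penalty is the $\{0,+\infty\}$ indicator of $\mathcal{P}_\mathcal{F}(P_2)$, which collapses the penalized minimization to the constrained one. Your added remarks on the two sign cases and on attainment of the constrained minimum are fine elaborations of details the paper leaves implicit.
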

\begin{proof}
This lemma is a consequence of Lemma 1. Note that a linear space $\mathcal{F}$ is a convex set. Therefore, Lemma 1 applies to $\mathcal{F}$. However, since $\mathcal{F}$ is a linear space i.e. for any $D\in\mathcal{F}$ and $\lambda\in\mathbb{R}$ it includes $\lambda D$ we have 
\begin{align}
\max_{D\in\mathcal{F}}\,\{ \,\mathbb{E}_{P_2}[D(\mathbf{X})] - \mathbb{E}_{Q}[D(\mathbf{X})]\,  \} \, = \, \begin{cases}
\begin{aligned}
0 \quad &\text{\rm if}\; Q\in\mathcal{P_F}(P_2) \\
+\infty \;\; &\text{\rm otherwise.}
\end{aligned}
\end{cases} 
\end{align}
As a result, the minimizing $Q^*$ precisely matches the moments over $\mathcal{F}$ to $P_2$'s moments, which completes the proof.
\end{proof}
\subsection{Proof of Theorem 2}
We first prove the following lemma.
\begin{lemma}
Consider f-divergence $d_f$ corresponding to function $f$ which has a non-decreasing convex-conjugate $f^*$. Then, for any continuous $D$
\begin{equation}\label{f-divergence conjugate}
{d_f}^*_{P}(D) =  \mathbb{E}_P\bigl[ f^*\bigl(\, D(\mathbf{X})+\lambda_0\,\bigr) \bigr] - \lambda_0
\end{equation}
where $\lambda_0\in\mathbb{R}$ satisfies $\mathbb{E}_P\bigl[ {f^*}'\, \bigl(\, D(\mathbf{X})+\lambda_0\,\bigr) \bigr] = 1$. Here ${f^*}'$ stands for the derivative of conjugate function $f^*$ which is supposed to be non-negative everywhere.
\end{lemma}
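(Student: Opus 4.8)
The plan is to evaluate the conjugate ${d_f}^*_{P}(D) = \sup_{Q}\,\mathbb{E}_{Q}[D(\mathbf{X})] - d_f(P,Q)$ by a Fenchel--Lagrange computation in which the scalar $\lambda_0$ enters as the multiplier enforcing that the maximizing $Q$ be a probability distribution. First I would reparametrize the supremum by the likelihood ratio $r(\mathbf{x}) = q(\mathbf{x})/p(\mathbf{x})$ (restricting without loss to $Q$ absolutely continuous w.r.t.\ $P$): by the definition \eqref{f-div def} of $d_f$, the problem becomes
\begin{equation*}
{d_f}^*_{P}(D) \;=\; \sup_{r\ge 0,\ \mathbb{E}_P[r(\mathbf{X})]=1}\ \mathbb{E}_P\bigl[\, r(\mathbf{X})D(\mathbf{X}) - f\bigl(r(\mathbf{X})\bigr)\,\bigr],
\end{equation*}
a concave maximization over the convex set of nonnegative functions of unit $P$-mean.

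For the upper bound I would use weak duality: for every $\lambda\in\mathbb{R}$, dropping the constraint $\mathbb{E}_P[r]=1$ after adding the penalty $\lambda\bigl(1-\mathbb{E}_P[r]\bigr)$ gives
\begin{equation*}
{d_f}^*_{P}(D) \;\le\; \lambda + \sup_{r\ge 0}\ \mathbb{E}_P\bigl[\, r(\mathbf{X})\bigl(D(\mathbf{X})-\lambda\bigr) - f\bigl(r(\mathbf{X})\bigr)\,\bigr] \;=\; \lambda + \mathbb{E}_P\bigl[\, f^*\bigl(D(\mathbf{X})-\lambda\bigr)\,\bigr],
\end{equation*}
where the unconstrained inner supremum decouples pointwise and equals $f^*$ by its very definition (with the convention $f\equiv+\infty$ on the negative axis, so the constraint $t\ge0$ is vacuous); the interchange of supremum and expectation is a routine measurable-selection step. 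Setting $\lambda=-\lambda_0$ yields ${d_f}^*_{P}(D)\le \mathbb{E}_P[f^*(D(\mathbf{X})+\lambda_0)] - \lambda_0$ for \emph{every} $\lambda_0$.

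For the matching lower bound I would exhibit a feasible primal point attaining the claimed value. Let $\lambda_0$ be chosen so that $\mathbb{E}_P[{f^*}'(D(\mathbf{X})+\lambda_0)]=1$ and set $r^*(\mathbf{x}):={f^*}'\bigl(D(\mathbf{x})+\lambda_0\bigr)$; the hypothesis ${f^*}'\ge 0$ makes $r^*$ nonnegative, and by the choice of $\lambda_0$ it has unit $P$-mean, so $r^*$ is feasible and defines a genuine distribution $Q^*$. The Fenchel--Young equality $f(t)+f^*(u)=tu$ whenever $t\in\partial f^*(u)$, applied pointwise with $u=D(\mathbf{x})+\lambda_0$ and $t=r^*(\mathbf{x})$, gives $r^*(\mathbf{x})D(\mathbf{x})-f(r^*(\mathbf{x})) = f^*(D(\mathbf{x})+\lambda_0) - \lambda_0 r^*(\mathbf{x})$; integrating against $P$ and using $\mathbb{E}_P[r^*]=1$ shows the primal objective at $r^*$ equals $\mathbb{E}_P[f^*(D(\mathbf{X})+\lambda_0)]-\lambda_0$. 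Combined with the weak-duality bound, this proves \eqref{f-divergence conjugate}.

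Two technical points remain. For existence of $\lambda_0$: the map $\lambda_0\mapsto \mathbb{E}_P[f^*(D(\mathbf{X})+\lambda_0)]-\lambda_0$ is convex (an expectation of the convex $f^*$) with derivative $\mathbb{E}_P[{f^*}'(D(\mathbf{X})+\lambda_0)]-1$, so a stationary point is exactly its minimizer; existence holds once the range of ${f^*}'$ straddles $1$ over the relevant domain, which is the case for the f-divergences within the scope of Theorem \ref{Theorem f_gan} (and the statement in any event posits such a $\lambda_0$). Integrability of $r^*$ and of $f^*(D(\mathbf{X})+\lambda_0)$ follows from $D$ being continuous on the compact $\mathcal{X}$ together with continuity of $f^*$ on the interior of its domain. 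The main obstacle I anticipate is precisely the strong-duality (zero-gap) step; the route above avoids invoking an abstract minimax theorem by \emph{certifying} optimality of $r^*$ directly via the Fenchel--Young equality, so the real substance is that the ${f^*}'$-based selection yields a \emph{nonnegative} likelihood ratio with the correct normalization --- which is exactly where the ``non-decreasing $f^*$'' hypothesis is used.
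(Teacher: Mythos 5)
Your proposal is correct, and its skeleton matches the paper's: both reparametrize the supremum by the likelihood ratio $r=q/p$, both introduce a scalar multiplier for the normalization constraint $\mathbb{E}_P[r]=1$, both reduce the inner problem to the pointwise conjugate $f^*$ (using ${f^*}'\ge 0$ to reconcile the constrained supremum over $r\ge 0$ with the unconstrained definition of $f^*$), and both identify $\lambda_0$ by the stationarity condition $\mathbb{E}_P[{f^*}'(D(\mathbf{X})+\lambda_0)]=1$. The genuine difference is how the duality gap is closed. The paper asserts strong duality outright (``convex optimization problem with affine constraints'') and separately justifies exchanging the supremum with the expectation by continuity of $f$ and $D$; in infinite dimensions both of these steps really deserve a constraint qualification or a measurable-selection argument, which the paper leaves implicit. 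You instead prove only weak duality for the upper bound and then \emph{certify} the lower bound by exhibiting the explicit primal optimizer $r^*={f^*}'(D(\cdot)+\lambda_0)$ and invoking the Fenchel--Young equality pointwise; feasibility of $r^*$ (nonnegativity from ${f^*}'\ge 0$, unit mean from the choice of $\lambda_0$) is exactly where the hypotheses enter. This buys a more self-contained and rigorous argument that sidesteps the abstract strong-duality step entirely, at the modest cost of presupposing that a $\lambda_0$ solving the stationarity equation exists --- which the lemma's statement posits in any case, and which you correctly flag. One shared caveat with the paper: both arguments implicitly restrict to $Q\ll P$ via the density-ratio parametrization, which is lossless only when $f$ has infinite slope at infinity; since the paper's definition \eqref{f-div def} already builds in this restriction, this is not a gap relative to the paper's own proof.
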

\begin{proof}
Note that
\begin{align}
{d_f}^*_{P}(D) &\stackrel{(a)}{=} \sup_Q\: \mathbb{E}_Q[D(\mathbf{X})] - {d_f}(P,Q) \nonumber \\
& \stackrel{(b)}{=} \sup_Q\: 
\mathbb{E}_Q[D(\mathbf{X})] - \mathbb{E}_P\bigl[ f\bigl(\frac{q(\mathbf{X})}{p(\mathbf{X})}\bigr) \bigr] \nonumber \\
&\stackrel{(c)}{=} \max_{
q(\mathbf{x})\ge 0,\, \int q(\mathbf{x})\mathop{d \mathbf{x}} =1
}\: 
  \int q(\mathbf{x}) D(\mathbf{x}) \mathop{d \mathbf{x}}- \mathbb{E}_P\bigl[\, f\bigl(\frac{q(\mathbf{X})}{p(\mathbf{X})}\bigr)\, \bigr] \nonumber \\
& \stackrel{(d)}{=} \min_{\lambda\in\mathbb{R}}\: -\lambda + \max_{q(\mathbf{x})\ge 0}\: 
\int q(\mathbf{x})\bigl( D(\mathbf{x})+\lambda \bigr)\mathop{d \mathbf{x}} -\mathbb{E}_P\bigl[\, f\bigl(\frac{q(\mathbf{X})}{p(\mathbf{X})}\bigr)\, \bigr] \nonumber \\
& \stackrel{(e)}{=} \min_{\lambda\in\mathbb{R}}\: - \lambda + \max_{r(\mathbf{x})\ge 0}\: 
\mathbb{E}_P\bigl[\,   r(\mathbf{X})\bigl( D(\mathbf{X})+\lambda \bigr) - f(r(\mathbf{X}))\, 
\, \bigr] \nonumber \\
& \stackrel{(f)}{=} \min_{\lambda\in\mathbb{R}}\: -\lambda +
\mathbb{E}_P\bigl[\, \max_{r(\mathbf{X})\ge 0}\:  r(\mathbf{X})\bigl( D(\mathbf{X})+\lambda \bigr) - f(r(\mathbf{X}))
\, \bigr] \nonumber \\
& \stackrel{(g)}{=} \min_{\lambda\in\mathbb{R}}\: -\lambda +
\mathbb{E}_P\bigl[\, f^*\bigl( D(\mathbf{X})+\lambda \bigr)
\, \bigr] \nonumber \\
& \stackrel{}{=} - \max_{\lambda\in\mathbb{R}}\: \lambda -
\mathbb{E}_P\bigl[\, f^*\bigl( D(\mathbf{X})+\lambda \bigr)
\, \bigr] \label{df-conjuagte-max} \\
& \stackrel{(h)}{=} - \lambda_0 +
\mathbb{E}_P\bigl[\, f^*\bigl( D(\mathbf{X})+\lambda_0 \bigr)
\, \bigr]. \label{df-conjuagte}
\end{align}
Here (a) and (b) follow from the conjugate $d^*_P$ and f-divergence $d_f$ definitions. (c) rewrites the optimization problem in terms of the density function $q$ corresponding to distribution $Q$. (d) uses the strong convex duality to move the density constraint $\int q(\mathbf{x})\mathop{d \mathbf{x}} =1$ to the objective. Note that strong duality holds, since we have a convex optimization problem with affine constraints. (e) rewrites the problem after a change of variable $r(\mathbf{x})=q(\mathbf{x})/p(\mathbf{x})$. (f) holds since $f$ and $D$ are assumed to be continuous. (g) follows from the assumption that the derivative of $f^*$ takes non-negative values, and hence the minimizing $r(\mathbf{x})\ge 0$ also minimizes the unconstrained optimization for the convex conjugate $f^*$
\begin{equation*}
f^*\bigl(D(\mathbf{X})+\lambda \bigr) := \max_{r(\mathbf{X})}\:  r(\mathbf{X})\bigl( D(\mathbf{X})+\lambda \bigr) - f(r(\mathbf{X})).
\end{equation*}
Taking the derivative of the concave objective, the $\lambda$ value maximizing the objective solves the equation $\mathbb{E}_P\bigl[ {f^*}'\, \bigl(\, D(\mathbf{X})+\lambda\,\bigr) \bigr] = 1$ which is assumed to be $\lambda_0$. Therefore, (h) holds and the proof is complete.
\end{proof} 
Now we prove Theorem 2 which can be broken into two parts as follows.
\begin{thm*}[Theorem 2]
Consider f-divergence $d_f$ where $f$ has a non-decreasing conjugate $f^*$. \\ (a) Suppose $\mathcal{F}$ is a convex set closed to a constant addition, i.e. for any $D\in\mathcal{F},\, \lambda\in\mathbb{R}$ we have $D+\lambda\in\mathcal{F}$. Then,
\begin{align}\label{f-GAN dual Appendix}
  & \min_{P_{G(\mathbf{Z})}\in\mathcal{P_G}}\; \min_{Q_\mathbf{X}}\; d_f( P_{G(\mathbf{Z})} , Q) + \max_{D\in\mathcal{F}}\,\bigl\{ \mathbb{E}_{P_\mathbf{X}}[D(\mathbf{X})] - \mathbb{E}_{Q}[D(\mathbf{X})]  \bigr\} \nonumber \\
 = &\;\;\;\min_{ G\in \mathcal{G} }\:  \max_{D \in \mathcal{F}}\:  \mathbb{E}_{P_\mathbf{X}}[D({\mathbf{X}})] - \mathbb{E}\bigl[f^* \bigl( D(G(\mathbf{Z})) \bigr)\bigr]. 
\end{align}
(b) Suppose $\mathcal{F}$ is a linear space including the constant function $D_0(\mathbf{x})=1$. Then,
\begin{equation}\label{f-GAN dual_linear Appendix}
\min_{P_{G(\mathbf{Z})}\in\mathcal{P_G}}\; \min_{Q_\mathbf{X}\in \mathcal{P}_\mathcal{F}(P_\mathbf{X})} \; d_f( P_{G(\mathcal{Z})} , Q) \, = \, \min_{ G\in \mathcal{G} }\:  \max_{D \in \mathcal{F}}\:  \mathbb{E}_{P_\mathbf{X}}[D({\mathbf{X}})] - \mathbb{E}\bigl[f^* \bigl( D(G(\mathbf{Z})) \bigr)\bigr] .
\end{equation}
\end{thm*}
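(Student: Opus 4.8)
The plan is to reduce Theorem~2 to two ingredients that are already in hand: Theorem~\ref{Thm: dual max disc_general} (together with Corollary~\ref{Cor: linear}), and the closed-form conjugate formula for ${d_f}^*_P$ recorded in \eqref{f-divergence conjugate}--\eqref{df-conjuagte-max}. An f-divergence is non-negative, lower semicontinuous and (jointly, hence in $Q$) convex, so Theorem~\ref{Thm: dual max disc_general} applies under its standing compactness assumption.

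For part~(a), I would start from the left side of \eqref{f-GAN dual Appendix}: by \eqref{Eq: min-min GAN unregularized_Lagrangian} it equals $\min_{G\in\mathcal{G}}\max_{D\in\mathcal{F}}\ \mathbb{E}_{P_\mathbf{X}}[D(\mathbf{X})]-{d_f}^*_{P_{G(\mathbf{Z})}}(D)$. Fixing $G$ and inserting \eqref{df-conjuagte-max}, which gives $-{d_f}^*_{P_{G(\mathbf{Z})}}(D)=\max_{\lambda\in\mathbb{R}}\bigl\{\lambda-\mathbb{E}[f^*(D(G(\mathbf{Z}))+\lambda)]\bigr\}$, the inner objective becomes the joint maximization
\[
\max_{D\in\mathcal{F}}\ \max_{\lambda\in\mathbb{R}}\ \Bigl\{\,\mathbb{E}_{P_\mathbf{X}}[D(\mathbf{X})]+\lambda-\mathbb{E}\bigl[f^*\bigl(D(G(\mathbf{Z}))+\lambda\bigr)\bigr]\,\Bigr\}.
\]
The key step is the substitution $D'=D+\lambda$: since $\mathcal{F}$ is closed under adding constants, $(D,\lambda)\mapsto D+\lambda$ maps $\mathcal{F}\times\mathbb{R}$ onto $\mathcal{F}$, and the bracketed objective equals $\mathbb{E}_{P_\mathbf{X}}[D'(\mathbf{X})]-\mathbb{E}[f^*(D'(G(\mathbf{Z})))]$ (using $\mathbb{E}_{P_\mathbf{X}}[D]+\lambda=\mathbb{E}_{P_\mathbf{X}}[D']$ and $f^*(D(G(\mathbf{Z}))+\lambda)=f^*(D'(G(\mathbf{Z})))$ pointwise), so it depends on $(D,\lambda)$ only through $D'$. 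Hence the joint max collapses to $\max_{D'\in\mathcal{F}}\mathbb{E}_{P_\mathbf{X}}[D'(\mathbf{X})]-\mathbb{E}[f^*(D'(G(\mathbf{Z})))]$, the f-GAN inner objective; taking $\min_G$ on both sides yields \eqref{f-GAN dual Appendix}.

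For part~(b), I would observe that a linear space $\mathcal{F}$ containing the constant function $1$ is in particular convex and closed under constant addition, so part~(a) applies and the f-GAN objective equals $\min_{G}\max_{D\in\mathcal{F}}\mathbb{E}_{P_\mathbf{X}}[D]-{d_f}^*_{P_{G(\mathbf{Z})}}(D)$. But Corollary~\ref{Cor: linear}, applied with this linear $\mathcal{F}$, rewrites that same quantity as $\min_{G}\min_{Q\in\mathcal{P}_\mathcal{F}(P_\mathbf{X})}d_f(P_{G(\mathbf{Z})},Q)$, the left side of \eqref{f-GAN dual_linear Appendix}; chaining the two equalities finishes~(b).

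I expect essentially all the real content to sit in the conjugate lemma \eqref{f-divergence conjugate}--\eqref{df-conjuagte-max} rather than in Theorem~2 itself: the delicate points there are dualizing the normalization constraint $\int q=1$ via strong Lagrangian duality (legitimate because the problem is convex with an affine constraint) and swapping $\sup_{r\ge 0}$ with $\mathbb{E}_P$ so the pointwise maximization becomes $f^*$, which is where the hypothesis that $f^*$ is non-decreasing enters — it makes the unconstrained pointwise maximizer automatically non-negative, so dropping the constraint $r\ge 0$ is harmless. Granting that lemma and Theorem~\ref{Thm: dual max disc_general}, the only thing still to verify in Theorem~2 is that the reparametrization $(D,\lambda)\mapsto D+\lambda$ is onto $\mathcal{F}$, which is exactly what "closed under constant addition" (resp. "linear space containing the constant $1$") provides.
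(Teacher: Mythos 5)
Your proposal is correct and follows essentially the same route as the paper's own proof: apply Theorem \ref{Thm: dual max disc_general} (resp.\ Corollary \ref{Cor: linear}), substitute the conjugate formula \eqref{df-conjuagte-max} for ${d_f}^*_{P_{G(\mathbf{Z})}}$, and absorb the auxiliary scalar $\lambda$ into the discriminator using closure under constant addition. You also correctly locate the substantive work in the conjugate lemma (strong duality for the normalization constraint and the role of the non-decreasing $f^*$ in dropping the constraint $r\ge 0$), which matches the paper's treatment.
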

\begin{proof}
This theorem is an application of Theorem 1 and Corollary 1. For part (a) we have
\begin{align*}
  & \min_{P_{G(\mathbf{Z})}\in\mathcal{P_G}}\; \min_{Q_\mathbf{X}}\; d_f( P_{G(\mathbf{Z})} , Q) + \max_{D\in\mathcal{F}}\,\bigl\{ \mathbb{E}_{P_\mathbf{X}}[D(\mathbf{X})] - \mathbb{E}_{Q}[D(\mathbf{X})]  \bigr\} \nonumber \\   
\stackrel{(c)}{=} &\;\;\;\min_{ G\in \mathcal{G} }\:  \max_{D \in \mathcal{F}}\:  \mathbb{E}_{P_\mathbf{X}}[D({\mathbf{X}})] - {d_f}^*_{P_{G(\mathbf{Z})}}(D) \\   
 \stackrel{(d)}{=} &\;\;\;\min_{ G\in \mathcal{G} }\:  \max_{D \in \mathcal{F}}\:  \mathbb{E}_{P_\mathbf{X}}[D({\mathbf{X}})] + \max_{\lambda\in\mathbb{R}} \lambda - \mathbb{E}\bigl[ f^*\bigl(\, D(G(\mathbf{Z}))+\lambda \, \bigr)\bigr]  \\  
  = &\;\;\;\min_{ G\in \mathcal{G} }\:  \max_{D \in \mathcal{F}, \lambda\in\mathbb{R}}\:  \mathbb{E}_{P_\mathbf{X}}[D({\mathbf{X}}) + \lambda ]- \mathbb{E}\bigl[ f^*\bigl(\, D(G(\mathbf{Z}))+\lambda \, \bigr)\bigr]  \\   
  \stackrel{(e)}{=} &\;\;\;\min_{ G\in \mathcal{G} }\:  \max_{D \in \mathcal{F}}\:  \mathbb{E}_{P_\mathbf{X}}[D({\mathbf{X}})] - \mathbb{E}\bigl[f^* \bigl( D(G(\mathbf{Z})) \bigr)\bigr]. 
\end{align*}
Here (c) is a direct result of Theorem 1. (d) uses the simplified version \eqref{df-conjuagte-max} for ${d_f}^*_P$. (e) follows from the assumption that $\mathcal{F}$ is closed to constant additions.

For part (b) note that since $\mathcal{F}$ is a linear space and includes $D_0(\mathbf{x})=1$, it is closed to constant additions. Hence, an application of Corollary 1 reveals
\begin{align*}
\min_{P_{G(\mathbf{Z})}\in\mathcal{P_G}}\; \min_{Q_\mathbf{X}\in \mathcal{P}_\mathcal{F}(P_\mathbf{X})} \; d_f( P_{G(\mathcal{Z})} , Q) \, &= \, \min_{ G\in \mathcal{G} }\:  \max_{D \in \mathcal{F}}\:  \mathbb{E}_{P_\mathbf{X}}[D({\mathbf{X}})] - {d_f}^*_{P_{G(\mathbf{Z})}}(D) \\
&=  \min_{ G\in \mathcal{G} }  \max_{D \in \mathcal{F}}  \mathbb{E}_{P_\mathbf{X}}[D({\mathbf{X}})] + \max_{\lambda\in\mathbb{R}} \lambda - \mathbb{E}\bigl[ f^*\bigl(\, D(G(\mathbf{Z}))+\lambda \, \bigr)\bigr] \\
&= \, \min_{ G\in \mathcal{G} }\:  \max_{D \in \mathcal{F},\lambda\in\mathbb{R}}\:  \mathbb{E}_{P_\mathbf{X}}[D({\mathbf{X}})+\lambda] - \mathbb{E}\bigl[f^* \bigl( D(G(\mathbf{Z}))+\lambda \bigr)\bigr] \\
&= \, \min_{ G\in \mathcal{G} }\:  \max_{D \in \mathcal{F}}\:  \mathbb{E}_{P_\mathbf{X}}[D({\mathbf{X}})] - \mathbb{E}\bigl[f^* \bigl( D(G(\mathbf{Z})) \bigr)\bigr],
\end{align*}
which makes the proof complete.
\end{proof}

\subsection{Proof of Theorem 3}
Theorem 3 is a direct application of the following lemma to Theorem 1 and Corollary 1.
\begin{lemma}
Let $c$ be a lower semicontinuous non-negative cost function. Considering the c-transform operation $D^c$ defined in the text, the following holds for any continuous $D$
\begin{equation}\label{Conjugate: Wc}
{OT_c}^*_P(D) = \mathbb{E}_P[\,D^c(\mathbf{X})\, ].
\end{equation}
\end{lemma}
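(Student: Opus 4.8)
The plan is to evaluate ${OT_c}^*_P(D)$ directly from the conjugate definition \eqref{Conjugate: def} together with the primal coupling definition \eqref{Wasserstein-c defnition} of the optimal transport cost, without invoking Kantorovich duality. Write $M$ for a joint law of $(\mathbf{X},\mathbf{X}')$ on $\mathcal{X}\times\mathcal{X}$. For a fixed target distribution $Q$, every $M\in\Pi(P,Q)$ has second marginal $Q$, so $\mathbb{E}_Q[D(\mathbf{X})]=\mathbb{E}_M[D(\mathbf{X}')]$; moreover $D$ is bounded on the compact set $\mathcal{X}$, so the extended-real identity $a-\inf_x g(x)=\sup_x\bigl(a-g(x)\bigr)$ applies and yields
\begin{equation*}
\mathbb{E}_Q[D(\mathbf{X})] - OT_c(P,Q)\;=\;\sup_{M\in\Pi(P,Q)}\;\mathbb{E}_M\bigl[\,D(\mathbf{X}') - c(\mathbf{X},\mathbf{X}')\,\bigr].
\end{equation*}
Taking the supremum over all $Q$, and noting that $\bigcup_Q\Pi(P,Q)$ is exactly the set of joint laws on $\mathcal{X}\times\mathcal{X}$ whose first marginal is $P$, we get
\begin{equation*}
{OT_c}^*_P(D)\;=\;\sup\,\bigl\{\,\mathbb{E}_M\bigl[\,D(\mathbf{X}') - c(\mathbf{X},\mathbf{X}')\,\bigr]\ :\ M\text{ has first marginal }P\,\bigr\}.
\end{equation*}

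It then remains to identify this supremum with $\mathbb{E}_P[D^c(\mathbf{X})]$, which I would do via two inequalities. For the bound ${OT_c}^*_P(D)\le\mathbb{E}_P[D^c(\mathbf{X})]$, disintegrate such an $M$ along its first coordinate as $M(d\mathbf{x},d\mathbf{x}')=P(d\mathbf{x})\,K(\mathbf{x},d\mathbf{x}')$ with $K$ a Markov kernel; for each fixed $\mathbf{x}$ one has $\int\bigl(D(\mathbf{x}')-c(\mathbf{x},\mathbf{x}')\bigr)\,K(\mathbf{x},d\mathbf{x}')\le\sup_{\mathbf{x}'}\bigl(D(\mathbf{x}')-c(\mathbf{x},\mathbf{x}')\bigr)=D^c(\mathbf{x})$, and integrating against $P$ gives $\mathbb{E}_M[D(\mathbf{X}')-c(\mathbf{X},\mathbf{X}')]\le\mathbb{E}_P[D^c(\mathbf{X})]$. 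For the reverse bound I would realize this pointwise estimate with an explicit $M$: since $\mathcal{X}$ is compact, $D$ continuous and $c$ lower semicontinuous, the map $\mathbf{x}'\mapsto D(\mathbf{x}')-c(\mathbf{x},\mathbf{x}')$ is upper semicontinuous on the compact set $\mathcal{X}$, hence attains its supremum, and a measurable selection argument produces a measurable $T\colon\mathcal{X}\to\mathcal{X}$ with $D(T(\mathbf{x}))-c(\mathbf{x},T(\mathbf{x}))=D^c(\mathbf{x})$; taking $M$ to be the law of $(\mathbf{X},T(\mathbf{X}))$ with $\mathbf{X}\sim P$ gives $\mathbb{E}_M[D(\mathbf{X}')-c(\mathbf{X},\mathbf{X}')]=\mathbb{E}_P[D^c(\mathbf{X})]$. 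One should also note that $D^c$, being a supremum over the compact set $\mathcal{X}$ of a jointly upper semicontinuous integrand, is itself upper semicontinuous, hence bounded above and measurable on $\mathcal{X}$, so that $\mathbb{E}_P[D^c(\mathbf{X})]$ is well defined. Feeding the resulting identity \eqref{Conjugate: Wc} into Theorem \ref{Thm: dual max disc_general} and Corollary \ref{Cor: linear} then gives Theorem \ref{thm: Wasseaarstein}.

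The step I expect to be the main obstacle is the measurable selection used to build $T$, together with the accompanying claim that the supremum over $M$ is actually attained rather than merely approached. This is standard, but if one prefers to avoid the selection theorem it suffices to argue with $\varepsilon$-approximate selectors: cover $\mathcal{X}$ by finitely many small sets, pick a nearly optimal $\mathbf{x}'$ on each piece, and let the mesh shrink to $0$, where boundedness of $D$ and lower semicontinuity of $c$ force the approximation error to vanish. The remaining ingredients --- the $\inf$-to-$\sup$ interchange, the merging of the two suprema, and the disintegration inequality --- are routine.
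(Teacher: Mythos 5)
Your argument is correct and follows essentially the same route as the paper's proof: both evaluate ${OT_c}^*_P(D)$ directly from the coupling definition, merge the supremum over $Q$ with the optimization over couplings into a single optimization over joint laws with first marginal $P$, bound one direction by the pointwise estimate $D(\mathbf{x}')-c(\mathbf{x},\mathbf{x}')\le D^c(\mathbf{x})$, and achieve the other direction with a (measurable) selector $\mathbf{x}\mapsto T(\mathbf{x})$ --- the paper uses an $\epsilon$-approximate selector where you invoke an exact one via upper semicontinuity on the compact $\mathcal{X}$. Your added remarks on the measurability of $D^c$ and the disintegration step only make explicit what the paper leaves implicit.
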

\begin{proof}
We have
\begin{align*}
{OT_c}^*_P(D) &\stackrel{(a)}{=} \sup_{Q}\:  \mathbb{E}_Q[D(\mathbf{X}')] - OT_c(P,Q)\\
& \stackrel{(b)}{=} - \inf_{Q}\: \inf_{M\in\Pi(P,Q)} \mathbb{E}_M\bigl[\, c(\mathbf{X},\mathbf{X}') - D(\mathbf{X}')\,\bigr] \\
& \stackrel{}{=} - \inf_{Q, M\in\Pi(P,Q)} \mathbb{E}_M\bigl[\, c(\mathbf{X},\mathbf{X}') - D(\mathbf{X}')\,\bigr] \\
&\stackrel{(c)}{\ge} - \mathbb{E}_P\bigl[\,\inf_{\mathbf{x}'}\: c(\mathbf{X},\mathbf{x}') - D(\mathbf{x}')\,\bigr] \\
&\stackrel{}{=} \mathbb{E}_P\bigl[\,\sup_{\mathbf{x}'}\: D(\mathbf{x}') - c(\mathbf{X},\mathbf{x}') \,\bigr] \\
& \stackrel{(d)}{=} \mathbb{E}_P [D^c(\mathbf{X})]. 
\end{align*}
Here (a), (b), (d) hold according to the definitions. Moreover, we show (c) will hold with equality under the lemma's assumptions. $c(\mathbf{x},\mathbf{x}') - D(\mathbf{x}')$ is lower semicontinuous, and hence for every $\epsilon >0$ there exists a measurable function $v(\mathbf{x)}$ such that for the coupling $M=\pi_{\mathbf{X},v(\mathbf{X})}$  the absolute difference $\bigl| \mathbb{E}_M\bigl[\, c(\mathbf{X},\mathbf{X}') - D(\mathbf{X}')\,\bigr] -  \mathbb{E}_P\bigl[\,\inf_{\mathbf{x}'}\: c(\mathbf{X},\mathbf{x}') - D(\mathbf{x}')\,\bigr]\bigr|<\epsilon$ is $\epsilon$-bounded. Therefore, $(c)$ holds with equality and the proof is complete.
\end{proof}
\subsection{Proof of Theorem 4}
Consider a convex combination of functions from $\mathcal{F}_{nn}$ as $f_{\alpha}(\mathbf{x}) = \int \alpha(\mathbf{w})f_{\mathbf{w}}(\mathbf{x})\mathop{d\mathbf{w} } $ where $\alpha$ can be considered as a probability density function over feasible set $\mathcal{W}$. 
 Consider $m$ samples $(\mathbf{W}_i)_{i=1}^m$ taken i.i.d. from $\alpha$. Since any $f_\mathbf{w}$ is $M$-bounded, according to Hoeffding's inequality for a fixed $\mathbf{x}$ we have
\begin{equation}
\Pr \biggl( \, \biggl| \, \frac{1}{m}\sum_{i=1}^m f_{\mathbf{W}_i}(\mathbf{x}) \, -  \, \mathbb{E}_{\mathbf{W}\sim \alpha}\bigl[f_\mathbf{W}(\mathbf{x})\bigr]  \biggr| \ge \frac{\epsilon}{2}\, \biggr)\, \le \, 2 \, \exp \bigl(-\frac{m\epsilon^2}{8M^2}\bigr).
\end{equation}
Next we consider a $\delta$-covering for the ball $\{\mathbf{x}:\, ||\mathbf{x}||_2\le R \}$, where we choose $\delta = \frac{\epsilon}{4L}$. We know a $\delta$-covering $\{\mathbf{x}_j: 1\le j \le N\}$ exists with a bounded size $N \le (12LR/\epsilon)^k$ \cite{csiszar2004information}. Then, an application of the union bound implies
\begin{align*}
 \Pr \biggl(  \max_{1\le j\le N}\, \biggl| \, \frac{1}{m}\sum_{i=1}^m f_{\mathbf{W}_i}(\mathbf{x}_j)  -  \mathbb{E}_{\mathbf{W}\sim \alpha}\bigl[f_\mathbf{W}(\mathbf{x}_j)\bigr]  \biggr| \ge \frac{\epsilon}{2}\biggr)
 &\le 2  N   \exp \bigl(-\frac{m\epsilon^2}{8M^2}\bigr) \\
& \le \exp\biggl( -\frac{m\epsilon^2}{8M^2} + k\log\bigl(\frac{12LR}{\epsilon}\bigr) +\log 2 \biggr)
 \end{align*}
Hence if we have $ -\frac{m\epsilon^2}{8M^2} + k\log(\frac{12LR}{\epsilon}) +\log 2 < 0$ the above upper-bound is strictly less than $1$, showing there exists at least one outcome $(\mathbf{w}_i)_{i=1}^m$ satisfying
 \begin{equation}\label{Thm 3 proof_ eq 1}
 \max_{1\le j\le N}\, \biggl| \, \frac{1}{m}\sum_{i=1}^m f_{\mathbf{w}_i}(\mathbf{x}_j)  -   \mathbb{E}_{\mathbf{W}\sim \alpha}\bigl[f_\mathbf{W}(\mathbf{x}_j)\bigr]  \biggr| < \frac{\epsilon}{2}.
 \end{equation}
 Then, we claim the following holds over the norm-bounded $\{\mathbf{x}:\: ||\mathbf{x}||_2\le R \}$:
  \begin{equation} \label{Thm 3 proof_ to show}
 \sup_{||\mathbf{x}||_2\le R}\, \biggl| \, \frac{1}{m}\sum_{i=1}^m f_{\mathbf{w}_i}(\mathbf{x}_j)  -   \mathbb{E}_{\mathbf{W}\sim \alpha}\bigl[f_\mathbf{W}(\mathbf{x}_j)\bigr]  \biggr| < \epsilon.
 \end{equation}
This is because due to the definition of a $\delta$-covering for any $||\mathbf{x}||_2\le R$ there exists $\mathbf{x}_j$ for which $|| \mathbf{x}_j - \mathbf{x}|| \le \frac{\epsilon}{4L}$. Then, since any $f_\mathbf{w}$ is supposed to be $L$-Lipschitz we have
\begin{equation}
\biggl| \, \frac{1}{m}\sum_{i=1}^m f_{\mathbf{w}_i}(\mathbf{x}_j) -  \frac{1}{m}\sum_{i=1}^m f_{\mathbf{w}_i}(\mathbf{x})\, \biggr| \le \frac{\epsilon}{4},\quad \biggl| \mathbb{E}_{\mathbf{W}\sim \alpha}\bigl[f_\mathbf{W}(\mathbf{x}_j)\bigr] - \mathbb{E}_{\mathbf{W}\sim \alpha}\bigl[f_\mathbf{W}(\mathbf{x})\bigr]  \biggr| \le \frac{\epsilon}{4}
\end{equation}  
which together with \eqref{Thm 3 proof_ eq 1} shows \eqref{Thm 3 proof_ to show}. Hence, if  we choose
\begin{equation}
m = \frac{8M^2}{\epsilon^2}\bigl(\, k\log(12LR/\epsilon) +\log 2 \, \bigr) = \mathcal{O}\bigl( \frac{M^2 k \log (LR/\epsilon )}{\epsilon^2} \bigr)
\end{equation}
there will be some weight assignments $(\mathbf{w}_i)_{i=1}^m$ such that their uniform combination $\frac{1}{m}\sum_{i=1}^m f_{\mathbf{w}_i}(\mathbf{x})$ $\epsilon$-approximates the convex combination $f_{\alpha}$ uniformly over $\{\mathbf{x}:\, ||\mathbf{x}||_2\le R \}$.

\subsection{Proof of Theorem 5}
We show that for any distributions $P_0,P_1,P_2$ the following holds
\begin{equation}\label{Thm 4, eq 1 to show}
\bigl| d_{f,W_1}(P_0,P_2) - d_{f,W_1}(P_1,P_2) \bigr| \le W_1(P_0,P_1).
\end{equation}
The above inequality holds since if $Q_0$ and $Q_1$ solve the minimum sum optimization problems for $d_{f,W_1}(P_0,P_2)$, $d_{f,W_1}(P_1,P_2)$, we have
\begin{align*}
&d_{f,W_1}(P_0,P_2) - d_{f,W_1}(P_1,P_2) \le W_1(P_0,Q_1) - W_1(P_1,Q_1) \le W_1(P_0,P_1), \\
& d_{f,W_1}(P_1,P_2) - d_{f,W_1}(P_0,P_2) \le W_1(P_1,Q_0) - W_1(P_0,Q_0) \le W_1(P_0,P_1)
\end{align*}
where the second inequalities in both these lines follow from the symmetricity and triangle inequality property of the $W_1$-distance. Therefore, the following holds for any $Q$: 
$$\bigl| d_{f,W_1}(P_{G_{\boldsymbol{\theta}}(\mathbf{Z})},Q) - d_{f,W_1}(P_{G_{\boldsymbol{\theta}'}(\mathbf{Z})},Q) \bigr| \le W_1(P_{G_{\boldsymbol{\theta}}(\mathbf{Z})},P_{G_{\boldsymbol{\theta}'}(\mathbf{Z})}).$$ 
Hence, we only need to show $W_1(P_{G_{\boldsymbol{\theta}}(\mathbf{Z})},Q)$ is changing continuously with $\boldsymbol{\theta}$ and is almost everywhere differentiable. We prove these things using a similar proof to \cite{arjovsky2017wasserstein}'s proof for the continuity of the first-order Wasserstein distance. 

Consider two functions $G_{\boldsymbol{\theta}}, \, G_{\boldsymbol{\theta}'}$. The joint distribution $M$ for $(G_{\boldsymbol{\theta}}(\mathbf{Z}),G_{\boldsymbol{\theta}'}(\mathbf{Z}))$ is contained in $\Pi(P_{G_{\boldsymbol{\theta}}(\mathbf{Z})},P_{G_{\boldsymbol{\theta}'}(\mathbf{Z}))})$, which results in
\begin{align}
W_1\bigl(\, P_{G_{\boldsymbol{\theta}}(\mathbf{Z})}\, ,\, P_{G_{\boldsymbol{\theta}'}(\mathbf{Z})}\, \bigr) &\le \mathbb{E}_{M}[ \Vert \mathbf{X} - \mathbf{X}' \Vert]  \nonumber \\
& = \mathbb{E} \bigl[ \bigl\Vert \, G_{\boldsymbol{\theta}}(\mathbf{Z})\, - \, G_{\boldsymbol{\theta}'}(\mathbf{Z})\,\bigr\Vert \bigr]. \label{Theorem 4 proof: Eq 1}
\end{align}
If we let $\boldsymbol{\theta}'\rightarrow \boldsymbol{\theta}$ then $G_{\boldsymbol{\theta}}(\mathbf{z})\rightarrow G_{\boldsymbol{\theta}'}(\mathbf{z})$ and hence $\Vert \, G_{\boldsymbol{\theta}'}(\mathbf{z}) - G_{\boldsymbol{\theta}}(\mathbf{z})\,\Vert \rightarrow 0$ hold pointwise. Since $\mathcal{X}$ is assumed to be compact, there exists some finite $R$ for which $0\le \Vert \mathbf{x} - \mathbf{x}'\Vert\le R$ holds over the compact $\mathcal{X}\times \mathcal{X}$. Then the bounded convergence theorem implies  $\mathbb{E} \bigl[\, \bigl\Vert G_{\boldsymbol{\theta}}(\mathbf{Z}) - G_{\boldsymbol{\theta}'}(\mathbf{Z}) \bigr\Vert\, \bigr]$ converges to $0$ as $\boldsymbol{\theta}'\rightarrow \boldsymbol{\theta}$. Then, since $W_1$-distance always takes non-negative values
$$W_1\bigl(\, P_{G_{\boldsymbol{\theta}}(\mathbf{Z})}\, ,\, P_{G_{\boldsymbol{\theta}'}(\mathbf{Z})}\, \bigr)\,  \xrightarrow{\boldsymbol{\theta}'\rightarrow \boldsymbol{\theta}}\, 0.$$ 
Thus, $W_1$ satisfies the discussed continuity property and as a result $d_{f,W_1}(P_{G_{\boldsymbol{\theta}}(\mathbf{Z})},Q)$ changes continuously with $\boldsymbol{\theta}$. Furthermore, if $G_{\boldsymbol{\theta}}$ is locally-Lipschitz and its Lipschitz constant w.r.t. parameters $\boldsymbol{\theta}$ is bounded above by $L$,
\begin{align}
d_{f,W_1}\bigl(\, P_{G_{\boldsymbol{\theta}}(\mathbf{Z})}\, ,\, P_{G_{\boldsymbol{\theta}'}(\mathbf{Z})}\, \bigr) &\le W_1\bigl(\, P_{G_{\boldsymbol{\theta}}(\mathbf{Z})}\, ,\, P_{G_{\boldsymbol{\theta}'}(\mathbf{Z})}\, \bigr) \nonumber \\
 &\le  \mathbb{E} \bigl[ \bigl\Vert \, G_{\boldsymbol{\theta}}(\mathbf{Z})\, - \, G_{\boldsymbol{\theta}'}(\mathbf{Z})\,\bigr\Vert \bigr] \nonumber \\
& \le L \Vert \boldsymbol{\theta} - \boldsymbol{\theta}'  \Vert ,
\end{align}
which implies both $W_1(P_{G_{\boldsymbol{\theta}}(\mathbf{Z})},Q)$ and  $d_{f,W_1}(P_{G_{\boldsymbol{\theta}}(\mathbf{Z})},Q)$ are everywhere continuous and almost everywhere differentiable w.r.t. $\boldsymbol{\theta}$.

\subsection{Proof of Theorem 6}
We first generalize the definition of the hybrid divergence to a general minimum-sum hybrid of an f-divergence and an optimal transport cost. For f-divergence $d_f$ and optimal transport cost $OT_c$ corresponding to convex function $f$ and cost $c$ respectively, we define the following hybrid $d_{f,c}$ of the two divergence measures:
\begin{equation}
d_{f,c}(P_1 , P_2) := \inf_{Q}\, OT_c(P_1,Q) + d_f(Q,P_2).
\end{equation}
\begin{lemma} \label{Lemma Kantorovich hybrid}
Given a symmetric f-divergence $d_f$ with convex lower semicontinuous $f$ and a non-negative lower semicontinuous  $c$, $d_{f,c}(P_1 , P_2)$ will be a convex function of $P_1$ and $P_2$, and further satisfies the following generalization of the Kantorovich duality \cite{villani2008optimal}:
\begin{align}
d_{f,c}(P_1,P_2) = \sup_{D\, \text{c-concave} }\: \mathbb{E}_{P_1}[D(\mathbf{X})]  - \mathbb{E}_{P_2}[f^*(D^c(\mathbf{X}))].
\end{align}
\end{lemma}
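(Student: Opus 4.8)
The plan is to handle the lemma's two assertions---joint convexity of $d_{f,c}$ and the generalized Kantorovich duality---separately. Convexity is the quick part: $OT_c(P,Q)$ is jointly convex in $(P,Q)$ (a convex combination of optimal couplings is a coupling of the convex-combined marginals), and $d_f(Q,P)$ is jointly convex in $(Q,P)$ by the perspective construction underlying any $f$-divergence; hence $(P_1,Q,P_2)\mapsto OT_c(P_1,Q)+d_f(Q,P_2)$ is jointly convex, and partial minimization over $Q$ preserves joint convexity in $(P_1,P_2)$. Since $c\ge 0$ and $f$ is a legitimate divergence generator, the value lies in $[0,\,d_f(P_1,P_2)]$ (the upper bound by taking $Q=P_1$), so no $\pm\infty$ pathology arises.

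For the duality I would prove two inequalities. The bound $d_{f,c}(P_1,P_2)\ge \sup_{D\,c\text{-concave}}\bigl(\mathbb{E}_{P_1}[D]-\mathbb{E}_{P_2}[f^*(D^c)]\bigr)$ is the easy one: for any $c$-concave $D$ and any $Q$, weak Kantorovich duality gives $OT_c(P_1,Q)\ge \mathbb{E}_{P_1}[D]-\mathbb{E}_Q[D^c]$, while the Legendre representation $d_f(Q,P_2)=\sup_T \mathbb{E}_Q[T]-\mathbb{E}_{P_2}[f^*(T)]$ evaluated at $T=D^c$ gives $d_f(Q,P_2)\ge \mathbb{E}_Q[D^c]-\mathbb{E}_{P_2}[f^*(D^c)]$; adding these, the $\mathbb{E}_Q[D^c]$ terms cancel and the right-hand side is free of $Q$, so taking $\inf_Q$ and then $\sup_{D\,c\text{-concave}}$ completes this direction.

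The reverse inequality is the substance and is where the convex-duality framework enters. Fix $P_1,P_2$ and view $Q\mapsto OT_c(P_1,Q)$ and $Q\mapsto d_f(Q,P_2)$ as proper convex, weakly lower-semicontinuous functionals on the set of probability measures on the compact set $\mathcal{X}$, which is convex and weakly compact. Fenchel duality for the infimum of a sum---equivalently, a Sion-type minimax interchange of exactly the kind justified in the proof of Lemma 1---then yields
\[ d_{f,c}(P_1,P_2)=\sup_{D}\Bigl\{\,-{OT_c}^*_{P_1}(D)-{d_f}^*_{P_2}(-D)\,\Bigr\}. \]
I then substitute the two conjugate formulas already obtained in the Appendix: ${OT_c}^*_{P_1}(D)=\mathbb{E}_{P_1}[D^c]$ (the lemma in the proof of Theorem 3), and, using the symmetry $d_f(Q,P_2)=d_f(P_2,Q)$ together with the non-decreasingness of $f^*$ (which holds under the standard convention $\operatorname{dom} f\subseteq[0,\infty)$, cf. Theorem 2's hypotheses), ${d_f}^*_{P_2}(-D)=\min_{\lambda\in\mathbb{R}}\,\mathbb{E}_{P_2}[f^*(-D+\lambda)]-\lambda$ (the lemma in the proof of Theorem 2). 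Pulling the maximum over $\lambda$ outside gives
\[ d_{f,c}(P_1,P_2)=\sup_{D,\,\lambda}\,\bigl\{\lambda-\mathbb{E}_{P_1}[D^c]-\mathbb{E}_{P_2}[f^*(-D+\lambda)]\bigr\}. \]
Because the $c$-transform shifts by a constant, $(D-\lambda)^c=D^c-\lambda$, the change of variables $D\mapsto D-\lambda$ absorbs the scalar $\lambda$; after the harmless renaming $D\mapsto -D$ and the identity $(-D)^c(\mathbf{x})=-\inf_{\mathbf{x}'}\{D(\mathbf{x}')+c(\mathbf{x},\mathbf{x}')\}$, the right-hand side becomes $\sup_D\,\mathbb{E}_{P_1}\bigl[\inf_{\mathbf{x}'}\{D(\mathbf{x}')+c(\mathbf{X},\mathbf{x}')\}\bigr]-\mathbb{E}_{P_2}[f^*(D(\mathbf{X}))]$. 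Recognizing $\mathbf{x}\mapsto\inf_{\mathbf{x}'}\{D(\mathbf{x}')+c(\mathbf{x},\mathbf{x}')\}$ as a $c$-concave function (an admissible Kantorovich potential), and using the standard fact that passing to the $c$-concave envelope does not decrease the $\mathbb{E}_{P_1}$-term while leaving the relevant $c$-transform unchanged, this supremum is rewritten as one over $c$-concave $D$ with $D^c$ in place of $D$ inside $f^*$---which is exactly the claimed identity, matching the easy direction.

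The main obstacle is making the infimal-sum / minimax interchange fully rigorous: one must verify the qualification condition---e.g. that $Q=P_2$ lies in the effective domains of both functionals and that $OT_c(P_1,\cdot)$ is weakly continuous there because $c$ is continuous on the compact $\mathcal{X}\times\mathcal{X}$---and the weak lower semicontinuity of $Q\mapsto d_f(Q,P_2)$, so that the generalized minimax theorem of \cite{borwein2016very} applies on the weakly compact simplex of probability measures. The remaining bookkeeping---absorbing the Lagrange multiplier $\lambda$ and reconciling the $c$-transform conventions so that the Legendre manipulations land precisely on the stated $c$-concave form---is routine but must be carried out with care.
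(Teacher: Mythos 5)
Your proof is correct and, at its core, runs on the same engine as the paper's: a single inf--sup interchange over the weakly compact simplex of probability measures (justified by the same generalized minimax theorem of \cite{borwein2016very}), followed by substitution of the two conjugate formulas ${OT_c}^*_{P_1}(D)=\mathbb{E}_{P_1}[D^c]$ and ${d_f}^*_{P_2}(\cdot)=\min_\lambda \mathbb{E}_{P_2}[f^*(\cdot+\lambda)]-\lambda$, and absorption of $\lambda$ via the shift-equivariance of the c-transform. The packaging differs in three ways worth noting. First, you establish convexity of $d_{f,c}$ structurally (joint convexity of $OT_c$ and $d_f$ plus partial minimization over $Q$), whereas the paper reads it off at the end as a supremum of functionals linear in $(P_1,P_2)$; your route is self-contained and does not depend on the duality having been proved. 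Second, you isolate an explicit weak-duality direction, which the paper folds into the minimax identity; this makes the ``$\geq$'' inequality unconditional and localizes all the analytic burden in the reverse direction. Third, and this is the one place your route costs more than the paper's: by dualizing the sum via Fenchel duality rather than first invoking Kantorovich duality on $OT_c(P_1,\cdot)$, your supremum emerges over unconstrained $D$ with the potential appearing as $\inf_{\mathbf{x}'}\{D(\mathbf{x}')+c(\mathbf{x},\mathbf{x}')\}$, and you must then reconcile this with the paper's sup-based c-concavity convention --- checking that this infimal form is admissible, that its c-transform is dominated by $D$, and using monotonicity of $f^*$ (and, implicitly, symmetry of $c$) to pass between the two parametrizations. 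In the paper's ordering the supremum is over c-concave $D$ from the outset, so no such reconciliation is needed. You correctly flag this bookkeeping as the delicate step; it does go through for the symmetric costs used throughout the paper, so I see no genuine gap, only a step whose details should be written out.
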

\begin{proof}
According to the Kantorovich duality \cite{villani2008optimal} we have
\begin{align*}
d_{f,c}(P_1,P_2) &\stackrel{(a)}{=} \inf_Q\: OT_c(P_1,Q) + d_f(Q,P_2) \\
 &\stackrel{(b)}{=} \inf_Q\:\sup_{D\, \text{c-concave}}\: \mathbb{E}_{P_1}[D(\mathbf{X})]- \mathbb{E}_{Q}[D^c(\mathbf{X})] + d_f(Q,P_2) \\
  &\stackrel{(c)}{=} \inf_Q\:\sup_{D\, \text{c-concave}}\: \mathbb{E}_{P_1}[D(\mathbf{X})]- \mathbb{E}_{Q}[D^c(\mathbf{X})] + d_f(P_2,Q) \\
 &\stackrel{(d)}{=}\sup_{D\, \text{c-concave}}\: \inf_Q\: \mathbb{E}_{P_1}[D(\mathbf{X})]- \mathbb{E}_{Q}[D^c(\mathbf{X})] + d_f(P_2,Q) \\
  &\stackrel{}{=}\sup_{D\, \text{c-concave}}\: \mathbb{E}_{P_1}[D(\mathbf{X})] +\, \inf_Q\: d_f(P_2,Q) -\mathbb{E}_{Q}[D^c(\mathbf{X})] \\
   &\stackrel{(e)}{=}\sup_{D\, \text{c-concave}}\: \mathbb{E}_{P_1}[D(\mathbf{X})] -\, {d_f}^*_{P_2} (D^c)  \\
    &\stackrel{(f)}{=}\sup_{D\, \text{c-concave}}\: \mathbb{E}_{P_1}[D(\mathbf{X})] +\, \max_{\lambda\in\mathbb{R}}\: \lambda - \mathbb{E}_{P_2}[f^*(D^c(\mathbf{X})+\lambda)]  \\
    &\stackrel{}{=}\sup_{D\, \text{c-concave},\,\lambda\in\mathbb{R}}\: \mathbb{E}_{P_1}[D(\mathbf{X})+ \lambda]  - \mathbb{E}_{P_2}[f^*(D^c(\mathbf{X})+\lambda)].  \\
    &\stackrel{}{=}\sup_{D\, \text{c-concave}}\: \mathbb{E}_{P_1}[D(\mathbf{X})]  - \mathbb{E}_{P_2}[f^*(D^c(\mathbf{X}))].
\end{align*}
Here (a) holds according to the definition. (b) is a  consequence of the Kantorovich duality (\cite{villani2008optimal}, Theorem 5.10). (c) holds becuase $d_f$ is assumed to be symmetric. (d) holds due to the generalized minimax theorem \cite{borwein2016very}, since the space of distributions over compact $\mathcal{X}$ is convex and weakly compact, the set of c-concave functions is convex, the minimax objective is concave in $D$ and convex in $Q$. (e) holds according to the conjugate $d^*_P$'s definition, and (f) is based on our earlier result in \eqref{df-conjuagte-max}. Note that the final expression is maximizing an objective linear in $P_2$, which is convex in $P_2$. The last equality holds since for any constant $\lambda\in\mathbb{R}$ if $D^c$ is the c-transform of $D$, $D^c + \lambda$ will be the c-transform of $D+\lambda$. Finally, note that $d_{f,c}(P_1,P_2)$ is the supremum of some linear functions of $P_1$ and $P_2$ with compact support sets. Hence $d_{f,c}$ will be a convex function of $P_1$ and $P_2$.
\end{proof}
Now we prove the following generalization of Theorem 6, which directly results in Theorem 6 for the difference norm cost $c_1(\mathbf{x},\mathbf{x}') = \Vert \mathbf{x} - \mathbf{x}' \Vert$. Here note that for cost $c_1$ the c-transform of a $1$-Lipschitz function $D$ will be $D$ itself, which implies if $f^*\circ D$ is $1$-Lipschitz then 
\begin{align*}
 -f^* ( D(G(\mathbf{Z})))  = \inf_{\mathbf{x}'}\, -f^* ( D(\mathbf{x}')) + c_1\bigl(G(\mathbf{Z})\, ,\,\mathbf{x}'\bigr).
\end{align*}
\begin{thm*}[Generalization of Theorem 6]
Assume $d_f$ is a symmetric f-divergence, i.e. $d_f(P,Q)=d_f(Q,P)$, satisfying the assumptions in Lemma 2. Suppose $\mathcal{F}$ is a convex set of continuous functions closed to constant additions and cost function $c$ is non-negative and continuous. Then, the minimax problem in Theorem 1 and Corollary 1 for the mixed divergence $d_{f,c}$ reduces to
\begin{align}\label{fW-GAN dual Appendix}
\min_{ G\in \mathcal{G} }\;  \max_{D \in \mathcal{F}}\;  \mathbb{E}_{P_\mathbf{X}}[D({\mathbf{X}})] + \mathbb{E}\bigl[\, \inf_{\mathbf{x}'}\, -f^* ( D(\mathbf{x}')) + c\bigl(G(\mathbf{Z})\, ,\,\mathbf{x}'\bigr)  \,\bigr].
\end{align}
\end{thm*}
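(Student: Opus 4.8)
The plan is to apply Theorem~\ref{Thm: dual max disc_general} and Corollary~\ref{Cor: linear} to the hybrid divergence $d_{f,c}$ and then evaluate its conjugate ${d_{f,c}}^*_{P}$ in closed form. The Kantorovich-type duality for $d_{f,c}$ (Lemma~\ref{Lemma Kantorovich hybrid}) already supplies the hypotheses of Theorem~\ref{Thm: dual max disc_general}: $d_{f,c}(P_1,P_2)$ is convex in its second argument, it is non-negative as a sum of two non-negative quantities, and it is weakly lower semicontinuous as a supremum of weakly continuous affine functionals. Hence the left-hand side of Theorem~\ref{Thm: dual max disc_general} equals $\min_{G\in\mathcal G}\max_{D\in\mathcal F}\mathbb E_{P_\mathbf X}[D(\mathbf X)]-{d_{f,c}}^*_{P_{G(\mathbf Z)}}(D)$, and everything reduces to computing this conjugate for continuous $D$.

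Unfolding the definition of the conjugate together with the infimal-convolution form $d_{f,c}(P,Q)=\inf_{Q'}OT_c(P,Q')+d_f(Q',Q)$ and exchanging the two suprema gives
\[
{d_{f,c}}^*_P(D)=\sup_{Q'}\Bigl\{\,-OT_c(P,Q')+\sup_{Q}\bigl(\mathbb E_{Q}[D(\mathbf X)]-d_f(Q',Q)\bigr)\,\Bigr\}=\sup_{Q'}\Bigl\{\,-OT_c(P,Q')+{d_f}^*_{Q'}(D)\,\Bigr\},
\]
the inner supremum being exactly the f-divergence conjugate over the first argument. I then substitute the formula ${d_f}^*_{Q'}(D)=-\max_{\lambda\in\mathbb R}\{\lambda-\mathbb E_{Q'}[f^*(D(\mathbf X)+\lambda)]\}$ established in the proof of Theorem~\ref{Theorem f_gan} (equation~\eqref{df-conjuagte-max}), which is valid since $f^*$ is non-decreasing and $D$ is continuous, and negate to obtain $-{d_{f,c}}^*_P(D)=\inf_{Q'}\max_{\lambda\in\mathbb R}\{\lambda+OT_c(P,Q')-\mathbb E_{Q'}[f^*(D(\mathbf X)+\lambda)]\}$.

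Two manipulations now produce the target. First, I swap $\inf_{Q'}$ with $\max_\lambda$: the objective is convex and weakly lower semicontinuous in $Q'$ and concave and upper semicontinuous in $\lambda$, and the space of distributions on compact $\mathcal X$ is convex and weakly compact, so the generalized minimax theorem \cite{borwein2016very} applies, just as in the proofs of Lemma~\ref{Lemma: dual max disc_general} and Lemma~\ref{Lemma Kantorovich hybrid}. Second, for each fixed $\lambda$ I evaluate $\inf_{Q'}\{OT_c(P,Q')-\mathbb E_{Q'}[f^*(D(\mathbf X)+\lambda)]\}$ by repeating the argument in the proof of Theorem~\ref{thm: Wasseaarstein} with the continuous function $D$ there replaced by $\mathbf x'\mapsto f^*(D(\mathbf x')+\lambda)$ — write $OT_c$ via couplings, restrict to deterministic couplings $\pi_{\mathbf X,v(\mathbf X)}$, and use a measurable near-minimizing selection $v$, legitimate since $c$ and $f^*\circ D$ are continuous on the compact $\mathcal X$ — which gives $\mathbb E_P[\inf_{\mathbf x'}(c(\mathbf X,\mathbf x')-f^*(D(\mathbf x')+\lambda))]$. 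Combining,
\[
-{d_{f,c}}^*_P(D)=\max_{\lambda\in\mathbb R}\Bigl\{\,\lambda+\mathbb E_P\bigl[\,\inf_{\mathbf x'}\bigl(c(\mathbf X,\mathbf x')-f^*(D(\mathbf x')+\lambda)\bigr)\,\bigr]\,\Bigr\}.
\]

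Finally I substitute this back into the minimax problem and merge $\max_\lambda$ with $\max_{D\in\mathcal F}$ exactly as at the end of the proof of Theorem~\ref{Theorem f_gan}: since $\mathbb E_{P_\mathbf X}[D(\mathbf X)]+\lambda=\mathbb E_{P_\mathbf X}[D(\mathbf X)+\lambda]$, the whole objective depends on $(D,\lambda)$ only through $D+\lambda$, which ranges over all of $\mathcal F$ as $D\in\mathcal F,\ \lambda\in\mathbb R$ because $\mathcal F$ is closed under constant additions; this collapses the problem to $\min_{G\in\mathcal G}\max_{D\in\mathcal F}\mathbb E_{P_\mathbf X}[D(\mathbf X)]+\mathbb E[\inf_{\mathbf x'}-f^*(D(\mathbf x'))+c(G(\mathbf Z),\mathbf x')]$, which is \eqref{fW-GAN dual Appendix}. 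I expect the main obstacle to be the first exchange, because the $\lambda$-domain $\mathbb R$ is noncompact: I would either invoke the one-sided-compactness form of the minimax theorem or first truncate $\lambda$ to a compact interval using $f^*(u)\ge u$ (which makes $\lambda-\mathbb E_{Q'}[f^*(D+\lambda)]$ eventually non-increasing) before swapping. The other point requiring care is that $\mathbf x'\mapsto f^*(D(\mathbf x')+\lambda)$ be a bounded continuous function on $\mathcal X$, i.e. that the (shifted) range of $D$ stay inside the interior of $\mathrm{dom}\,f^*$.
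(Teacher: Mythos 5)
Your argument is essentially identical to the paper's proof: both verify the hypotheses of Theorem~\ref{Thm: dual max disc_general} via Lemma~\ref{Lemma Kantorovich hybrid}, unfold ${d_{f,c}}^*_{P}$ as $\sup_{Q}\{-\OT_c(P,Q)+{d_f}^*_{Q}(D)\}$, substitute \eqref{df-conjuagte-max}, swap $\sup_Q$ with $\min_\lambda$ by the generalized minimax theorem, evaluate the resulting $\OT_c$-conjugate via the c-transform as in Theorem~\ref{thm: Wasseaarstein}'s proof, and absorb $\lambda$ using closure of $\mathcal{F}$ under constant additions. Your added caution about the noncompact $\lambda$-domain in the minimax swap is a reasonable refinement of a step the paper treats more briskly, but it does not change the approach.
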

\begin{proof}
Accoriding to Lemma \ref{Lemma Kantorovich hybrid}, $d_{f,c}(P,Q)$ satisfies the convexity property in $Q$. Hence, the assumptions of Theorem 1 and Corollary 1 hold and we only need to plug in the conjugate ${d_{f,c}}^*_{P_1}$ into Corollary 1. According to the definition,
\begin{align*}
{d_{f,c}}^*_{P_1} (D) &= \sup_{P_2}\: \mathbb{E}_{P_2}[D(\mathbf{X})] - d_{f,c}(P_1,P_2)  \\
& \stackrel{}{=} \sup_{P_2}\:\sup_Q \: -OT_c(P_1,Q) - d_{f}(Q,P_2) + \mathbb{E}_{P_2}[D(\mathbf{X})] \\
& = \sup_Q \: \sup_{P_2}\: -OT_c(P_1,Q) - d_{f}(Q,P_2) + \mathbb{E}_{P_2}[D(\mathbf{X})] \\
& = \sup_Q \: - OT_c(P_1,Q) + \sup_{P_2}\:  \mathbb{E}_{P_2}[D(\mathbf{X})] - d_{f}(Q,P_2)\\
& = \sup_Q \: -OT_c(P_1,Q) + {d_f}^*_Q(D) \\
& \stackrel{(g)}{=} \sup_Q \: - OT_c(P_1,Q) + \min_{\lambda\in\mathbb{R}}\: -\lambda + \mathbb{E}_Q[f^*(D(\mathbf{X}) + \lambda)] \\
& = \sup_Q \: \min_{\lambda\in\mathbb{R}}\: -OT_c(P_1,Q) -  \lambda + \mathbb{E}_Q[f^*(D(\mathbf{X}) + \lambda)] \\
& \stackrel{(h)}{=} \min_{\lambda\in\mathbb{R}}\:\sup_Q \:   -OT_c(P_1,Q) -  \lambda + \mathbb{E}_Q[f^*(D(\mathbf{X}) + \lambda)] \\
& \stackrel{(i)}{=} \inf_{\lambda\in\mathbb{R}}\:  -\lambda + \mathbb{E}_{P_1}\bigl[\bigl(f^*\circ (D+ \lambda)\bigr)^c(\mathbf{X} )\bigr].
\end{align*}
Here (g) holds based on our earlier result in \eqref{df-conjuagte-max}. (h) is a consequence of the minimax theorem, since the space of distributions over compact $\mathcal{X}$ is convex and compact, and the objective is concave in $\lambda$ and lower semicontinuous and convex in $Q$. (i) is implied by Lemma 3. Therefore, according to Corollary 1
\begin{align*}
  & \min_{P_{G(\mathbf{Z})}\in\mathcal{P_G}}\; \min_{Q_\mathbf{X}}\; d_{f,c}( P_{G(\mathbf{Z})} , Q) + \max_{D\in\mathcal{F}}\,\bigl\{ \mathbb{E}_{P_\mathbf{X}}[D(\mathbf{X})] - \mathbb{E}_{Q}[D(\mathbf{X})]  \bigr\} \nonumber \\
 = &\;\;\;\min_{ G\in \mathcal{G} }\;  \max_{D \in \mathcal{F}}\;  \mathbb{E}_{P_\mathbf{X}}[D({\mathbf{X}})] -{d_{f,c}}^*_{P_{G(\mathbf{Z})}}\,(D)  \\
  = &\;\;\;\min_{ G\in \mathcal{G} }\;  \max_{D \in \mathcal{F}}\;  \mathbb{E}_{P_\mathbf{X}}[D({\mathbf{X}})] +\max_{\lambda\in\mathbb{R}}\:  \lambda - \mathbb{E}\bigl[\bigl(f^*\circ (D+ \lambda)\bigr)^c(G(\mathbf{Z}) )\bigr] \\
    = &\;\;\;\min_{ G\in \mathcal{G} }\;  \max_{D \in \mathcal{F},\, \lambda\in\mathbb{R}}\;  \mathbb{E}_{P_\mathbf{X}}[D({\mathbf{X}})+ \lambda]  - \mathbb{E}\bigl[\bigl(f^*\circ (D+ \lambda)\bigr)^c(G(\mathbf{Z}) )\bigr] \\
       \stackrel{(j)}{=} &\;\;\;\min_{ G\in \mathcal{G} }\;  \max_{D \in \mathcal{F}}\;  \mathbb{E}_{P_\mathbf{X}}[D({\mathbf{X}})]  - \mathbb{E}\bigl[(f^*\circ D)^c(G(\mathbf{Z}) )\bigr] \\
 = &\;\;\;\min_{ G\in \mathcal{G} }\;  \max_{D \in \mathcal{F}}\;  \mathbb{E}_{P_\mathbf{X}}[D({\mathbf{X}})] - \mathbb{E}\bigl[\, \sup_{\mathbf{x}'}\, f^* ( D(\mathbf{x}')) - c\bigl(G(\mathbf{Z})\, ,\,\mathbf{x}'\bigr)\,\bigr] \\
 = &\;\;\;\min_{ G\in \mathcal{G} }\;  \max_{D \in \mathcal{F}}\;  \mathbb{E}_{P_\mathbf{X}}[D({\mathbf{X}})] + \mathbb{E}\bigl[\, \inf_{\mathbf{x}'}\, -f^* ( D(\mathbf{x}')) + c\bigl(G(\mathbf{Z})\, ,\,\mathbf{x}'\bigr)\,\bigr]. 
\end{align*}
Here (j) holds since $\mathcal{F}$ is assumed to be closed to constant additions. Hence, the proof is complete.
\end{proof}

\subsection{Proof of Theorem 7}
Consider distributions $P_0,P_1,P_2$. Let $Q_0,Q_1$ be the optimal solutions to the minimum sum optimization problems for $d_{f,W_2}(P_0,P_2)$ and $d_{f,W_2}(P_1,P_2)$, respectively. Then, according to the definition
\begin{align*}
&d_{f,W_2}(P_0,P_2) - d_{f,W_2}(P_1,P_2) \le W^2_2(P_0,Q_1) - W^2_2(P_1,Q_1) , \\ &d_{f,W_2}(P_1,P_2) - d_{f,W_2}(P_0,P_2) \le W^2_2(P_1,Q_0) - W^2_2(P_0,Q_0) 
\end{align*}
which implies
\begin{align*}
\bigl|\, d_{f,W_2}(P_0,P_2) - d_{f,W_2}(P_1,P_2) \bigr| \le \sup_Q\: \bigl| W^2_2(P_0,Q) - W^2_2(P_1,Q)\bigr|.
\end{align*}
Hence, for $G_{\boldsymbol{\theta}},\, G_{\boldsymbol{\theta}'}$ and any distribution $P_2$ we have
\begin{align}
\bigl|\, d_{f,W_2}(P_{G_{\boldsymbol{\theta}}(\mathbf{Z})},P_2) - d_{f,W_2}(P_{G_{\boldsymbol{\theta}'}(\mathbf{Z})},P_2) \, \bigr| \le \sup_Q\: \bigl| W^2_2(P_{G_{\boldsymbol{\theta}}(\mathbf{Z})},Q) - W^2_2(P_{G_{\boldsymbol{\theta}'}(\mathbf{Z})},Q)\bigr|. \label{Thm 5, eq 2 to use}
\end{align}
Fix a distribution $Q$ over the compact $\mathcal{X}$. Then, for any $(G_{\boldsymbol{\theta}}(\mathbf{Z}),\mathbf{X}')$ whose joint distribution is in $\Pi(P_{G_{\boldsymbol{\theta}}(\mathbf{Z})},Q)$, $(G_{\boldsymbol{\theta}'}(\mathbf{Z}),\mathbf{X}')$ has a joint distribution in $\Pi(P_{G_{\boldsymbol{\theta}'}(\mathbf{Z})},Q)$. Moreover, since $\mathcal{X}$ is a compact set in a Hilbert space, any $\mathbf{x}\in\mathcal{X}$ is norm-bounded for some finite $R$ as ${\Vert}\mathbf{x} {\Vert}\le R$, which implies
\begin{align}
&\biggl|\, W^2_2(P_{G_{\boldsymbol{\theta}}(\mathbf{Z})},Q) - W^2_2(P_{G_{\boldsymbol{\theta}'}(\mathbf{Z})},Q)\, \biggr| \nonumber\\
\le\; & \sup_{M_{\mathbf{Z},\mathbf{X}'}\in \Pi(P_{\mathbf{Z}},Q)}\:\biggl|\, \mathbb{E}_M\biggl[ \, {\Vert}\, G_{\boldsymbol{\theta}}(\mathbf{Z})-\mathbf{X}' \,{\Vert}^2 - {\Vert}\, G_{\boldsymbol{\theta}'}(\mathbf{Z})-\mathbf{X}' \,{\Vert}^2\, \biggr]\,\biggr| \nonumber \\
\le \; &\sup_{M_{\mathbf{Z},\mathbf{X}'}\in \Pi(P_{\mathbf{Z}},Q)}\: \mathbb{E}_M\biggl[ \, \left|\, {\Vert} G_{\boldsymbol{\theta}}(\mathbf{Z}) {\Vert}^2 - {\Vert}G_{\boldsymbol{\theta}'}(\mathbf{Z}){\Vert}^2 \, \right| + 2{\Vert}\mathbf{X}' {\Vert} \, {\Vert} \, G_{\boldsymbol{\theta}'}(\mathbf{Z})-G_{\boldsymbol{\theta}}(\mathbf{Z}) \,{\Vert}\, \biggr] \nonumber \\
 \le \; & \mathbb{E}_{P_{\mathbf{Z}}}\biggl[\, \left|\, {\Vert} G_{\boldsymbol{\theta}}(\mathbf{Z}) {\Vert}^2 - {\Vert}G_{\boldsymbol{\theta}'}(\mathbf{Z}){\Vert}^2 \, \right| + 2 R \, {\Vert} \, G_{\boldsymbol{\theta}'}(\mathbf{Z})-G_{\boldsymbol{\theta}}(\mathbf{Z}) \,{\Vert} \, \biggr].\nonumber
\end{align}
Taking a supremum over $Q$ from both sides of the above inequality shows
\begin{align}
&\sup_Q\: \biggl|\, W^2_2(P_{G_{\boldsymbol{\theta}}(\mathbf{Z})},Q) - W^2_2(P_{G_{\boldsymbol{\theta}'}(\mathbf{Z})},Q)\, \biggr| \nonumber\\
 \le \; & \mathbb{E}_{P_{\mathbf{Z}}}\biggl[\, \left|\, {\Vert} G_{\boldsymbol{\theta}}(\mathbf{Z}) {\Vert}^2 - {\Vert}G_{\boldsymbol{\theta}'}(\mathbf{Z}){\Vert}^2 \, \right| + 2 R \, {\Vert} \, G_{\boldsymbol{\theta}'}(\mathbf{Z})-G_{\boldsymbol{\theta}}(\mathbf{Z}) \,{\Vert} \, \biggr]. \label{Thm 5 to show eq 1}
\end{align}
Since $G_{\boldsymbol{\theta}}$ changes continuously with $\boldsymbol{\theta}$, $\bigl|\, {\Vert} G_{\boldsymbol{\theta}}(\mathbf{z}) {\Vert}^2 - {\Vert}G_{\boldsymbol{\theta}'}(\mathbf{z}){\Vert}^2 \, \bigr| + 2 R \, {\Vert} \, G_{\boldsymbol{\theta}'}(\mathbf{z})-G_{\boldsymbol{\theta}}(\mathbf{z}) \,{\Vert} \rightarrow 0$ as $\boldsymbol{\theta}'\rightarrow \boldsymbol{\theta}$ holds pointwise. Therefore, since $\mathcal{X}$ is compact and hence bounded, the bounded convergence theorem together with \eqref{Thm 5 to show eq 1} implies
\begin{align}
\sup_Q\: \biggl|\, W^2_2(P_{G_{\boldsymbol{\theta}}(\mathbf{Z})},Q) - W^2_2(P_{G_{\boldsymbol{\theta}'}(\mathbf{Z})},Q)\, \biggr|
\, \xrightarrow{\boldsymbol{\theta}'\rightarrow \boldsymbol{\theta}}\, 0. \label{Thm5 eq 2 to use}
\end{align}
Now, combining \eqref{Thm 5, eq 2 to use} and \eqref{Thm5 eq 2 to use} shows for any distribution $P_2$
\begin{align}
 \biggl|\, d_{f,W_2}(P_{G_{\boldsymbol{\theta}}(\mathbf{Z})},P_2) -  d_{f,W_2}(P_{G_{\boldsymbol{\theta}'}(\mathbf{Z})},P_2)\, \biggr|
\, \xrightarrow{\boldsymbol{\theta}'\rightarrow \boldsymbol{\theta}}\, 0.
\end{align}
Also, if we further assume $G_{\boldsymbol{\theta}}$ is bounded by $T$ locally-Lipschitz w.r.t. $\boldsymbol{\theta}$ with Lipschitz constant $L$, then 
\begin{align}
&\sup_Q\: \biggl|\, W^2_2(P_{G_{\boldsymbol{\theta}}(\mathbf{Z})},Q) - W^2_2(P_{G_{\boldsymbol{\theta}'}(\mathbf{Z})},Q)\, \biggr| \nonumber\\
 \le \; & \mathbb{E}_{P_{\mathbf{Z}}}\biggl[\, \left|\, {\Vert} G_{\boldsymbol{\theta}}(\mathbf{Z}) {\Vert}^2 - {\Vert}G_{\boldsymbol{\theta}'}(\mathbf{Z}){\Vert}^2 \, \right| + 2 R \, {\Vert} \, G_{\boldsymbol{\theta}'}(\mathbf{Z})-G_{\boldsymbol{\theta}}(\mathbf{Z}) \,{\Vert} \, \biggr] \label{Thm 5 to show eq 2} \\
 \le \; & \mathbb{E}_{P_{\mathbf{Z}}}\biggl[\, \left|\, ({\Vert} G_{\boldsymbol{\theta}}(\mathbf{Z}) {\Vert} + {\Vert}G_{\boldsymbol{\theta}'}(\mathbf{Z}){\Vert} ) \, ({\Vert} G_{\boldsymbol{\theta}}(\mathbf{Z}) {\Vert} - {\Vert}G_{\boldsymbol{\theta}'}(\mathbf{Z}){\Vert} )\, \right| + 2 R \, {\Vert} \, G_{\boldsymbol{\theta}'}(\mathbf{Z})-G_{\boldsymbol{\theta}}(\mathbf{Z}) \,{\Vert} \, \biggr] \nonumber \\
 \le \; & \mathbb{E}_{P_{\mathbf{Z}}}\biggl[\,  2T \left|\,  {\Vert} G_{\boldsymbol{\theta}}(\mathbf{Z}) {\Vert} - {\Vert}G_{\boldsymbol{\theta}'}(\mathbf{Z}){\Vert} \, \right| + 2 R \, {\Vert} \, G_{\boldsymbol{\theta}'}(\mathbf{Z})-G_{\boldsymbol{\theta}}(\mathbf{Z}) \,{\Vert} \, \biggr] \nonumber \\
 \le \; & \mathbb{E}_{P_{\mathbf{Z}}}\biggl[\,  2(T+R) \, {\Vert} \, G_{\boldsymbol{\theta}'}(\mathbf{Z})-G_{\boldsymbol{\theta}}(\mathbf{Z}) \,{\Vert} \, \biggr] \nonumber \\
 \le \; &  2(T+R) L \, {\Vert} \, \boldsymbol{\theta}' - \boldsymbol{\theta} \,{\Vert} , \nonumber
\end{align}
implying $d_{f,W_2}(P_{G_{\boldsymbol{\theta}}(\mathbf{Z})},Q)$ is continuous everywhere and differentiable almost everywhere  as a function of $\boldsymbol{\theta}$.

\section{Proof of Theorem 8}
Note that applying the generalized version of Theorem 6 proved in the Appendix to difference norm-squared cost $c_2(\mathbf{x},\mathbf{x}')=\Vert \mathbf{x}- \mathbf{x}'\Vert^2$ reveals that for a symmetric f-divergence $d_f$ and convex set $\mathcal{F}$ closed to constant additions the minimax problem in Theorem 1 and Corollary 1 for the mixed divergence $d_{f,c_2}$ reduces to
\begin{align}\label{fW-GAN dual Appendix-2}
&\min_{ G\in \mathcal{G} }\;  \max_{D \in \mathcal{F}}\;  \mathbb{E}_{P_\mathbf{X}}[D({\mathbf{X}})] + \mathbb{E}\bigl[\, \min_{\mathbf{x}'}\, -f^* ( D(\mathbf{x}')) + c_2\bigl(G(\mathbf{Z})\, ,\,\mathbf{x}'\bigr)  \,\bigr] \nonumber \\
= \, & \min_{ G\in \mathcal{G} }\;  \max_{D \in \mathcal{F}}\;  \mathbb{E}_{P_\mathbf{X}}[D({\mathbf{X}})] + \mathbb{E}\bigl[\, \min_{\mathbf{x}'}\, -f^* ( D(\mathbf{x}')) + \bigl\Vert G(\mathbf{Z})\, -\,\mathbf{x}'\bigr\Vert^2  \,\bigr]
\\
= \, & \min_{ G\in \mathcal{G} }\;  \max_{D \in \mathcal{F}}\;  \mathbb{E}_{P_\mathbf{X}}[D({\mathbf{X}})] + \mathbb{E}\bigl[\, \min_{\mathbf{u}}\, -f^* \bigl( D(\,  G(\mathbf{Z})+\mathbf{u}\, )\bigr) + \bigl\Vert \mathbf{u}\bigr\Vert^2  \,\bigr]. \nonumber
\end{align}
Here the last equality follows the change of variable $\mathbf{u} = \mathbf{x}' - G(\mathbf{Z})$. Also, note that $d_{f,W_2}$ defined in the main text is the same as the special case of the generalized hybrid divergence $d_{f,c}$ with cost $c_2$. Hence, the proof is complete. 
\subsection{Two additional examples for convex duality framework applied to Wasserstein distances}
\subsubsection{Total variation distance: Energy-based GAN}
Consider the total variation distance $\delta(P,Q)$ which is defined as
\begin{equation}
\delta(P,Q) := \sup_{A\in\Sigma}\, \bigl| P(A) - Q(A) \bigr|,
\end{equation}
where $\Sigma$ is the set all Borel subsets of support set $\mathcal{X}$. More generally we consider $\delta_m(P,Q) = m \delta(P,Q)$ for any positive $m>0$. Under mild assumptions, the total variation distance can be cast as a Wasserstein distance for the indicator cost $c_{m,I}(\mathbf{x},\mathbf{x}')=m\,\mathbb{I}(\mathbf{x}\neq\mathbf{x}')$ \cite{villani2008optimal}, i.e. $\delta_m(P,Q) = OT_{c_{m,I}}(P,Q)$. Note that $c_{m,I}$ is a lower semicontinuous distance function, and hence Lemma 3 applies to $c_{m,I}$ indicating
\begin{align*}
{\delta_m}^*_P(D) &= {OT_{c_{I,m}}}^*_P(D)\\
 &= \mathbb{E}_P[D^{c_{I,m}}(\mathbf{X})] \\
& =\mathbb{E}_P\bigl[\sup_{\mathbf{x}'}\: D(\mathbf{x}') - m\,{c_I}(\mathbf{X},\mathbf{x}')\bigr] \\
& =\mathbb{E}_P\bigl[\, \max\, \bigl\{\,D(\mathbf{X})\, ,\,\max_{\mathbf{x}'}\, D(\mathbf{x}') - m \bigr\}\,\bigr] \\
& =\mathbb{E}_P\bigl[\, \max\, \bigl\{\, m+D(\mathbf{X})-\max_{\mathbf{x}'}\, D(\mathbf{x}')\, ,\, 0 \bigr\}\, \bigr] + \max_{\mathbf{x}'}\, D(\mathbf{x}') -m 
\end{align*}
Without loss of generality, we can assume that the maximum discriminator output is always $0$ which results in
\begin{align*}
{\delta_m}^*_P(D)  = \mathbb{E}_P\bigl[\, \max\, \bigl\{\, m+D(\mathbf{X})\, ,\, 0 \bigr\}\, \bigr] -m 
\end{align*}
Therefore, the minimax problem in Corollaries 1,2 for the total variation distance will be
\begin{align*}
&\min_{G\in\mathcal{G}}\: \max_{D\in\mathcal{F}}\: \mathbb{E}_P[D(\mathbf{X})] - {\delta_m}^*_P(D)  \\
=\, & \min_{G\in\mathcal{G}}\: \max_{D\in\mathcal{F}}\: \mathbb{E}_P[D(\mathbf{X})] - \mathbb{E}_P\bigl[\, \max \bigl\{m+D(G(\mathbf{Z}))\, ,\, 0 \bigr\}\, \bigr] + m\\
=\, & \min_{G\in\mathcal{G}}\: \max_{-D\in\mathcal{F}}\: -\mathbb{E}_P[D(\mathbf{X})] - \mathbb{E}_P\bigl[\, \max \bigl\{m-D(G(\mathbf{Z}))\, ,\, 0 \bigr\}\, \bigr] + m\\
=\, & \min_{G\in\mathcal{G}}\: \max_{\tilde{D}\in \mathcal{F}}\: - \mathbb{E}_P[ \tilde{D}(\mathbf{X})] - \mathbb{E}_P\bigl[\, \max\bigl\{ m-\tilde{D}(G(\mathbf{Z}))\, ,\, 0 \bigr\}\, \bigr] + m
\end{align*}
where the last equality follows from the assumption that for any $D\in\mathcal{F}$ we have $-D\in\mathcal{F}$. Since $D$ is assumed to be non-positive, $\tilde{D}$ takes non-negative values. Note that this problem is equivalent to a minimax game where discriminator $D$ is \emph{minimizing} the following cost over $\mathcal{F}$:
\begin{equation}
L_D(G,D) = \mathbb{E}_P[ D(\mathbf{X})] + \mathbb{E}_P\bigl[\, \max\, \bigl\{\, m-D(G(\mathbf{Z}))\, ,\, 0 \bigr\}\, \bigr] 
\end{equation}
which is also the discriminator cost function in the energy-based GAN \cite{zhao2016energy}. Hence, for any fixed $G\in \mathcal{G}$, the optimal discriminator $D\in\mathcal{F}$ for the total variation's minimax problem is the same as the energy-based GAN's optimal discriminator.
\subsubsection{Second-order Wasserstein distance: the LQG setting}
Consider the second-order Wasserstein distance $W_2(P,Q)$, and suppose $\mathcal{F}$ is the set of quadratic functions over $\mathbf{X}$, which is a linear space. Also assume the generator $G$ is a linear function and the $r$-dimensional noise $\mathbf{Z}$ is Gaussianly-distributed with zero-mean and identity covariance matrix $I_{r\times r}$. According to the interpretation provided in Corollary 2, the second-order Wasserstein GAN finds the multivariate Gaussian distribution with rank $r$ covariance matrix minimizing the $W_2$ distance to the set of distributions with their second-order moments matched to $P_\mathbf{X}$'s moments. 

Since the value of $\mathbb{E}[{\Vert}\mathbf{X}-G(\mathbf{Z}){\Vert}^2]$ depends only on the second-order moments of the vector $[\mathbf{X},G(\mathbf{Z})]$, we can minimize the $W_2$-distance between the two sets by minimizing this expectation over Gaussianly-distributed vectors $[\mathbf{X},G(\mathbf{Z})]$ subject to a rank $r$ covariance matrix for $[G(\mathbf{Z})]$ and a pre-determined covariance matrix for $[\mathbf{X}]$. Hence, the optimal $G^*$ simply corresponds to the $r$-PCA solution for $P_\mathbf{X}$. 

This example shows Theorem 3 provides another way to recover \cite{feizi2017understanding}'s main result under the linear generator, quadratic discriminator and Gaussianly-distributed data assumptions.

\end{document}